\def\eqref#1{equation~\ref{#1}}
\def\1{\bm{1}}
\DeclareMathAlphabet{\mathsfit}{\encodingdefault}{\sfdefault}{m}{sl}
\SetMathAlphabet{\mathsfit}{bold}{\encodingdefault}{\sfdefault}{bx}{n}
\newcommand{\bgl}{\cellcolor[HTML]{DDDDDD}}
\newcommand{\bgd}{\cellcolor[HTML]{BBBBBB}}
\newtheorem{proposition}{Proposition}
\newtheorem{definition}{Definition}
\newtheorem*{prop1}{Proposition 1 in the main paper}
\newtheorem*{prop3}{Proposition 3 in the main paper}
\newtheorem*{prop4}{Proposition 4 in the main paper}
\newcommand{\yingda}[1]{{\color{black}#1}}
\newcommand{\ree}[1]{{\color{black}{#1}}}
\newcommand{\SO}{\mathrm{SO}(3)}
\newcommand{\len}[1]{\left\lVert {#1} \right\rVert }
\newcommand{\tr}[1]{\operatorname{tr (#1)}}
\def\@fnsymbol#1{\ensuremath{\ifcase#1\or \dagger\or \ddagger\or
   \mathsection\or \mathparagraph\or \|\or **\or \dagger\dagger
   \or \ddagger\ddagger \else\@ctrerr\fi}}
\title{A Laplace-inspired Distribution on SO(3) for Probabilistic Rotation Estimation}
\author{
Yingda Yin \hskip 3em 
Yang Wang \hskip 3em 
He Wang\thanks{He Wang and Baoquan Chen are the corresponding authors
(\{hewang, baoquan\}@pku.edu.cn).
}  \hskip 3em 
Baoquan Chen\footnotemark[1]  \\
Peking University \\
}
\begin{document}

\maketitle

\begin{abstract}
\yingda{
Estimating the 3DoF rotation from a single RGB image is an important yet challenging problem. Probabilistic rotation regression has raised more and more attention with the benefit of 
expressing uncertainty information along with the prediction.
Though modeling noise using Gaussian-resembling Bingham distribution and matrix Fisher distribution is natural, they are shown to be sensitive to outliers for the nature of quadratic punishment to deviations.
In this paper, we draw inspiration from multivariate Laplace distribution and propose a novel Rotation Laplace distribution on $\SO$. Rotation Laplace distribution is robust to the disturbance of outliers and enforces much gradient to the low-error region, resulting in a better convergence.
Our extensive experiments show that our proposed distribution achieves  state-of-the-art performance for rotation regression tasks over both probabilistic and non-probabilistic baselines.
Our project page is at  \href{https://pku-epic.github.io/RotationLaplace/}{pku-epic.github.io/RotationLaplace}.
}
\end{abstract}

\section{Introduction}

\yingda{
Incorporating neural networks to perform rotation regression is of great importance in the field of computer vision, computer graphics and robotics \citep{wang2019normalized,yin2022fishermatch,dong2021robust,breyer2021volumetric}. To close the gap between the $\SO$ manifold and the Euclidean space where neural network outputs exist, one popular line of research discovers learning-friendly rotation representations including 6D continuous representation \citep{zhou2019continuity}, 9D matrix representation with SVD orthogonalization \citep{levinson2020analysis}, etc. Recently, \cite{chen2022projective} focuses on the gradient backpropagating process and replaces the vanilla auto differentiation with a $\SO$ manifold-aware gradient layer, which sets the new state-of-the-art in rotation regression tasks. 

Reasoning about the uncertainty information along with the predicted rotation is also attracting more and more attention, which enables many applications in aerospace \citep{crassidis2003unscented}, autonomous driving \citep{mcallister2017concrete} and localization \citep{fang2020towards}.
}
On this front, recent efforts have been developed to model the uncertainty of rotation regression via probabilistic modeling of rotation space. The most commonly used distributions are Bingham distribution \citep{bingham1974antipodally} on $\mathcal{S}^3$ for unit quaternions and matrix Fisher distribution \citep{khatri1977mises} on $\SO$ for rotation matrices. These two distributions are equivalent to each other \citep{prentice1986orientation} and resemble the Gaussian distribution in Euclidean Space \citep{bingham1974antipodally,khatri1977mises}.
While modeling noise using Gaussian-like distributions is well-motivated by the Central Limit Theorem, Gaussian distribution is well-known to be sensitive to outliers in the probabilistic regression models \citep{murphy2012machine}. This is because Gaussian distribution penalizes deviations quadratically, so predictions with larger errors weigh much more heavily with the learning than low-error ones and thus potentially result in suboptimal convergence when a certain amount of outliers exhibit.

Unfortunately, in certain rotation regression tasks, we fairly often come across large prediction errors, \textit{e.g.} $180^\circ$ error,  due to either the (near) symmetry nature of the objects or severe occlusions \citep{murphy2021implicit}. 
In Fig. \ref{fig:teaser}(left), using training on single image rotation regression as an example, we show the statistics of predictions 
after achieving convergence, assuming matrix Fisher distribution (as done in \cite{mohlin2020probabilistic}). The blue histogram shows the population with different prediction errors and the red dots are the impacts of these predictions on learning, evaluated by computing the sum of their gradient magnitudes  $\|\partial \mathcal{L} / \partial (\text{distribution param.})\|$    within each bin and then normalizing them across bins.  
It is clear that the 180$^\circ$ outliers dominate the gradient as well as the network training though their population is tiny, while the vast majority of points with low error predictions are deprioritized. Arguably, at convergence, the gradient should focus more on refining the low errors rather than fixing the inevitable large errors (\textit{e.g.} arose from symmetry). This motivates us to find a better probabilistic model for rotation.

As pointed out by \cite{murphy2012machine}, Laplace distribution, with heavy tails, is a better option for robust probabilistic modeling. Laplace distribution drops sharply around its mode and thus allocates most of its probability density to a small region around the mode; meanwhile, it also tolerates and assigns higher likelihoods to the outliers, compared to Gaussian distribution.
Consequently, it encourages predictions near its mode to be even closer, thus fitting \textit{sparse} data well, most of whose data points are close to their mean with the exception of several outliers\citep{mitianoudis2012generalized}, which makes Laplace distribution to be favored in the context of deep learning\citep{goodfellow2016deep}.

In this work, we propose a novel Laplace-inspired distribution on $\SO$ for rotation matrices, namely Rotation Laplace distribution, for probabilistic rotation regression. 
We devise Rotation Laplace distribution to be an approximation of multivariate Laplace distribution in the tangent space of its mode.
As shown in the visualization in Fig. \ref{fig:teaser}(right), our Rotation Laplace distribution is robust to the disturbance of outliers, with most of its gradient contributed by the low-error region, and thus leads to a better convergence along with significantly higher accuracy.
Moreover, our Rotation Laplace distribution is simply parameterized by an unconstrained $3\times3$ matrix and thus accommodates the Euclidean output of neural networks with ease. This network-friendly distribution requires neither complex functions to fulfill the constraints of parameterization nor any normalization process from Euclidean to rotation manifold which has been shown harmful for learning \citep{chen2022projective}.

For completeness of the derivations, we also propose the Laplace-inspired distribution on $\mathcal{S}^3$ for quaternions. We show that Rotation Laplace distribution is equivalent to Quaternion Laplace distribution, similar to the equivalence of matrix Fisher distribution and Bingham distribution.

We extensively compare our Rotation Laplace distributions to methods that parameterize distributions on $\SO$ for pose estimation, and also non-probabilistic approaches including multiple rotation representations and recent $\SO$-aware gradient layer \citep{chen2022projective}.
On common benchmark datasets of rotation estimation from RGB images, we achieve a significant and consistent performance improvement over all baselines.

\begin{figure}[t]
    \centering
    \hspace{-3mm}
    \includegraphics[width=0.48\linewidth]{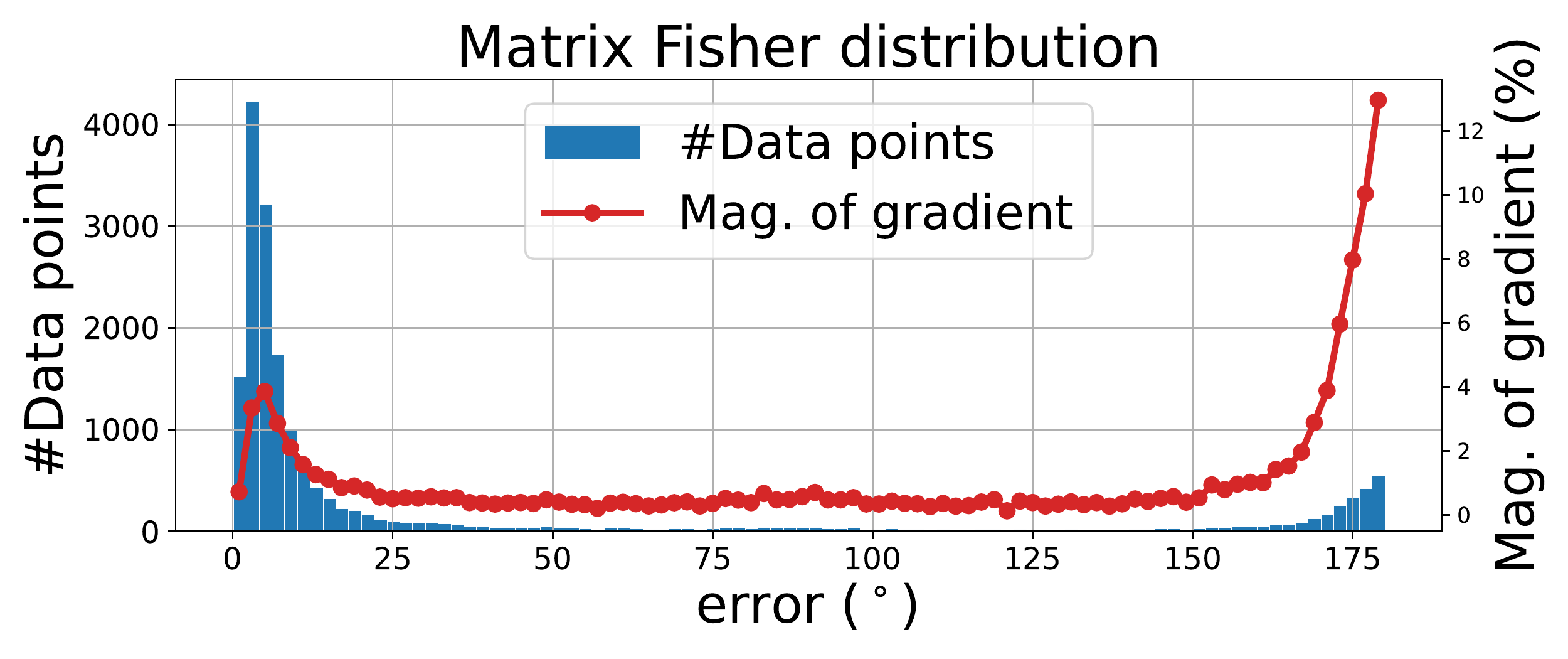}
    \hspace{2mm}
    \includegraphics[width=0.48\linewidth]{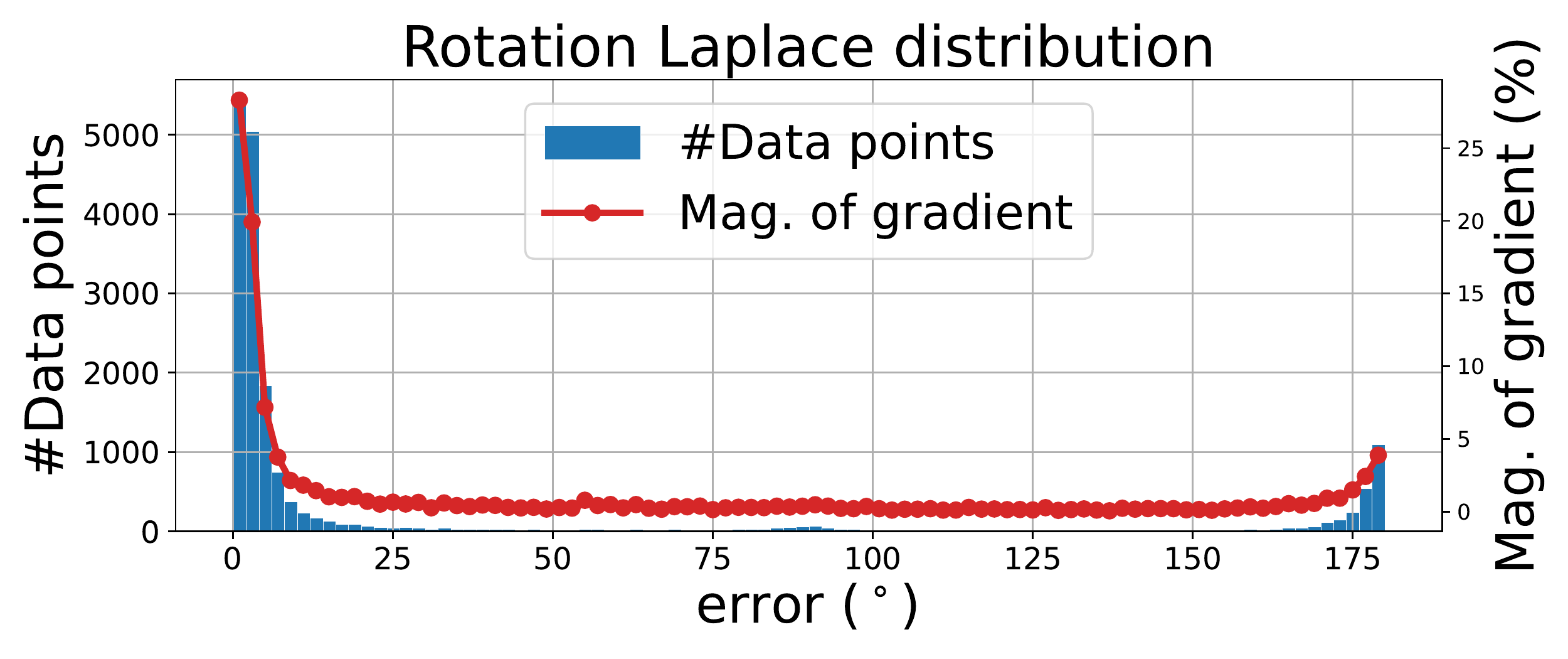}
    \hspace{-3mm}
    \vspace{-3mm}
    \caption{\small \textbf{Visualization of the results of matrix Fisher distribution and Rotation Laplace distribution after convergence.} The horizontal axis is the geodesic distance between the prediction and the ground truth. The blue bins count the number of data points within corresponding errors (2$^\circ$ each bin). \yingda{The red dots illustrate the percentage of the sum of the gradient magnitude  $\|\partial \mathcal{L} / \partial (\text{dist. param.})\|$
    within each bin. }
    The experiment is done on all categories of ModelNet10-SO3 dataset.}
    \vspace{-2mm}
	\label{fig:teaser}
\end{figure}

\section{Related Work}

\paragraph{Probabilistic regression} 
\cite{nix1994estimating} first proposes to model the output of the neural network as a Gaussian distribution and learn the Gaussian parameters by the negative log-likelihood loss function, through which one obtains not only the target but also a measure of prediction uncertainty.
More recently, \cite{kendall2017uncertainties} offers more understanding and analysis of the underlying uncertainties. \cite{lakshminarayanan2017simple} further improves the performance of uncertainty estimation by network ensembling and adversarial training. \cite{makansi2019overcoming} stabilizes the training with the winner-takes-all and iterative grouping strategies.
Probabilistic regression for uncertainty prediction has been widely used in various applications, including optical flow estimation\citep{ilg2018uncertainty}, depth estimation \citep{poggi2020uncertainty}, weather forecasting \citep{wang2019deep}, \textit{etc.}

Among the literature of decades, the majority of probabilistic regression works model the network output by a Gaussian-like distribution, while Laplace distribution is less discovered. 
\cite{li2021human} empirically finds that assuming a Laplace distribution in the process of maximum likelihood estimation yields better performance than a Gaussian distribution, in the field of 3D human pose estimation. Recent work \citep{nair2022maximum} makes use of Laplace distribution to improve the robustness of maximum likelihood-based uncertainty estimation. Due to the heavy-tailed property of Laplace distribution, the outlier data produces comparatively less loss and have an insubstantial impact on training.
Other than in Euclidean space, \cite{mitianoudis2012generalized} develops Generalized
Directional Laplacian distribution in $\mathcal{S}^d$ for the application of audio separation.

\paragraph{Probabilistic rotation regression}

Several works focus on utilizing probability distributions on the rotation manifold for rotation uncertainty estimation. 
\cite{prokudin2018deep} uses the mixture of von Mises distributions \citep{mardia2000directional} over Euler angles using Biternion networks. In \cite{gilitschenski2019deep} and \cite{deng2022deep}, Bingham distribution over unit quaternion is used to jointly estimate a probability distribution over all axes. 
\cite{mohlin2020probabilistic} leverages matrix Fisher distribution \citep{khatri1977mises} on $\SO$ over rotation matrices for deep rotation regression. 
Though both bear similar properties with Gaussian distribution in Euclidean space, matrix Fisher distribution benefits from the continuous rotation representation and unconstrained distribution parameters, which yields better performance \citep{murphy2021implicit}.
Recently, \cite{murphy2021implicit} introduces a non-parametric implicit pdf over $\SO$, with the distribution properties modeled by the neural network parameters. Implicit-pdf especially does good for modeling rotations of symmetric objects.

\paragraph{Non-probabilistic rotation regression}

The choice of rotation representation is one of the core issues concerning rotation regression. The commonly used representations include Euler angles \citep{kundu20183d,tulsiani2015viewpoints}, unit quaternion \citep{kendall2017geometric,kendall2015posenet,xiang2017posecnn} and axis-angle \citep{do2018deep,gao2018occlusion,ummenhofer2017demon}, \textit{etc}. However, Euler angles may suffer from gimbal lock, and unit quaternions doubly cover the group of $\SO$, which leads to two disconnected local minima. Moreover, \cite{zhou2019continuity} points out that all representations in the real Euclidean spaces of four or fewer dimensions are discontinuous and are not friendly for deep learning. To this end, the continuous 6D representation with Gram-Schmidt orthogonalization \citep{zhou2019continuity} and 9D representation with SVD orthogonalization \citep{levinson2020analysis} have been proposed, respectively. More recently, \cite{chen2022projective} investigates the gradient backpropagation in the backward pass and proposes a $\SO$ manifold-aware gradient layer.

\section{Revisit matrix Fisher distribution}

\subsection{Matrix Fisher Distribution}

Matrix Fisher distribution (or von Mises-Fisher matrix distribution) \citep{khatri1977mises} is one of the widely used distributions for probabilistic modeling of rotation matrices. 

\begin{definition} \textbf{\emph{Matrix Fisher distribution}}.
The random variable $\mathbf{R}\in \SO$ follows matrix Fisher distribution with parameter $\mathbf{A}$, if its probability density function is defined as
\begin{equation}
\small
    p(\mathbf{R}; \mathbf{A}) = \frac{1}{F(\mathbf{A})}
    \exp\left(
    \operatorname{tr}(\mathbf{A}^T \mathbf{R})
    \right)
\end{equation}
where $\mathbf{A}\in \mathbb{R}^{3\times 3}$ is an unconstrained matrix, and $F(\mathbf{A})\in \mathbb{R}$ is the normalization factor. Without further clarification, we denote $F$ as the normalization factor of the corresponding distribution in the remaining of this paper. We also denote matrix Fisher distribution as $\mathbf{R} \sim \mathcal{MF}(\mathbf{A})$.
\end{definition}
Suppose the singular value decomposition of matrix $\mathbf{A}$ is given by $\mathbf{A} = \mathbf{U}^\prime \mathbf{S}^\prime (\mathbf{V}^\prime)^T$, \textit{proper} SVD is defined as $\mathbf{A} = \mathbf{USV}^T$
where 
{\small
\begin{equation*}
    \small
    \begin{aligned}
        \mathbf{U} = \mathbf{U}^\prime\operatorname{diag}(1, 1, \det(\mathbf{U}^\prime)) \qquad
        \mathbf{V} = \mathbf{V}^\prime \operatorname{diag}(1, 1, \det(\mathbf{V}^\prime)) \\
        \mathbf{S} = \operatorname{diag}({s_1}, {s_2}, {s_3}) = 
        \operatorname{diag}(s_1^\prime, s_2^\prime, \det(\mathbf{U}^\prime\mathbf{V}^\prime)s_3^\prime)\\
    \end{aligned}
\end{equation*}
}The definition of $\mathbf{U}$ and $\mathbf{V}$ ensures that $\det(\mathbf{U})=\det(\mathbf{V})=1$ and $\mathbf{U}, \mathbf{V} \in \SO$.

\subsection{Relationship between Matrix Fisher Distribution in $\SO$ and Gaussian Distribution in $\mathbb{R}^3$}
\label{sec:fisher_gauss}
It is shown that matrix Fisher distribution is highly relevant with zero-mean Gaussian distribution near its mode
\ree{\citep{lee2018bayesian,lee2018bayesian_approximate}}.
Denote $\mathbf{R}_0$ as the mode of matrix Fisher distribution, and define $\mathbf{\widetilde{R}}=\mathbf{R}_0^T\mathbf{R}$, the relationship is shown as follows. Please refer to supplementary for the proof.
\begin{proposition}
\label{prop:fisher_gaussian}
Let $\boldsymbol{\Phi} = \log \mathbf{\widetilde{R}} \in \mathfrak{so}(3)$ and $\boldsymbol{\phi} = {\boldsymbol{\Phi}^\vee} \in \mathbb{R}^3$. For rotation matrix $\mathbf{R} \in \SO$ following \emph{matrix Fisher distribution}, when 
\ree{$\|\mathbf{R} - \mathbf{R}_0 \| \rightarrow 0$}
, $\boldsymbol{\phi}$ follows zero-mean \emph{multivariate Gaussian distribution}.
\end{proposition}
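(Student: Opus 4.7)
My plan is to parameterize $\mathbf{R}$ near its mode via the exponential map of $\SO$ and Taylor-expand $\operatorname{tr}(\mathbf{A}^T\mathbf{R})$ to second order in $\boldsymbol{\phi}$; the log-density will then read off as a negative-definite quadratic form, which is the defining feature of a zero-mean Gaussian. First I would identify the mode. Using the proper SVD $\mathbf{A}=\mathbf{U}\mathbf{S}\mathbf{V}^T$ introduced earlier, write $\operatorname{tr}(\mathbf{A}^T\mathbf{R})=\operatorname{tr}(\mathbf{S}\,\mathbf{U}^T\mathbf{R}\mathbf{V})=\operatorname{tr}(\mathbf{S}\mathbf{Q})$ with $\mathbf{Q}=\mathbf{U}^T\mathbf{R}\mathbf{V}\in\SO$. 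Because the proper-SVD ordering $s_1\ge s_2\ge |s_3|$ with $s_1,s_2\ge 0$ forces $\operatorname{tr}(\mathbf{S}\mathbf{Q})$ to be maximized at $\mathbf{Q}=\mathbf{I}$, the mode is $\mathbf{R}_0=\mathbf{U}\mathbf{V}^T$.

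Next I would substitute $\mathbf{R}=\mathbf{R}_0\exp(\boldsymbol{\Phi})$ with $\boldsymbol{\Phi}=\boldsymbol{\phi}^\wedge\in\mathfrak{so}(3)$; the regime $\|\mathbf{R}-\mathbf{R}_0\|\to 0$ corresponds to $\|\boldsymbol{\phi}\|\to 0$. A direct calculation gives
\[
\mathbf{A}^T\mathbf{R}_0=\mathbf{V}\mathbf{S}\mathbf{U}^T\mathbf{U}\mathbf{V}^T=\mathbf{V}\mathbf{S}\mathbf{V}^T,
\]
which is symmetric. Taylor-expanding $\exp(\boldsymbol{\Phi})=\mathbf{I}+\boldsymbol{\Phi}+\tfrac12\boldsymbol{\Phi}^2+O(\|\boldsymbol{\phi}\|^3)$ and taking traces, the linear term $\operatorname{tr}(\mathbf{V}\mathbf{S}\mathbf{V}^T\boldsymbol{\Phi})$ vanishes because the product of a symmetric and a skew-symmetric matrix is traceless; this also re-confirms that $\mathbf{R}_0$ is a critical point of the density.

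For the quadratic term I would invoke the skew-symmetric identity $\boldsymbol{\Phi}^2=\boldsymbol{\phi}\boldsymbol{\phi}^T-\|\boldsymbol{\phi}\|^2\mathbf{I}$. Letting $\boldsymbol{\psi}=\mathbf{V}^T\boldsymbol{\phi}$,
\[
\tfrac12\operatorname{tr}\!\bigl(\mathbf{V}\mathbf{S}\mathbf{V}^T\boldsymbol{\Phi}^2\bigr)=\tfrac12\bigl(\boldsymbol{\psi}^T\mathbf{S}\boldsymbol{\psi}-\|\boldsymbol{\psi}\|^2\operatorname{tr}(\mathbf{S})\bigr)=-\tfrac12\sum_{\{i,j,k\}=\{1,2,3\}}(s_j+s_k)\,\psi_i^2.
\]
Absorbing the constant $\operatorname{tr}(\mathbf{V}\mathbf{S}\mathbf{V}^T)=\operatorname{tr}(\mathbf{S})$ into the normalizer $F(\mathbf{A})$, the density near the mode becomes
\[
p(\boldsymbol{\phi})\;\propto\;\exp\!\Bigl(-\tfrac12\,\boldsymbol{\phi}^T\,\mathbf{V}\operatorname{diag}(s_2+s_3,\;s_1+s_3,\;s_1+s_2)\mathbf{V}^T\boldsymbol{\phi}\Bigr),
\]
i.e. a zero-mean multivariate Gaussian in $\boldsymbol{\phi}$ with precision matrix $\mathbf{V}\operatorname{diag}(s_2+s_3,s_1+s_3,s_1+s_2)\mathbf{V}^T$.

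I expect two mild obstacles. The first is positive-definiteness of the precision: it requires all $s_j+s_k>0$, which follows from the proper-SVD ordering $s_1\ge s_2\ge|s_3|$ together with a nondegeneracy assumption that the top two singular values are strictly positive; without this, the mode becomes degenerate (e.g. antipodal symmetries appear) and the Gaussian approximation fails in the corresponding directions. The second, more easily glossed over, is the Jacobian of the exponential map: the density above is written against Lebesgue measure on $\boldsymbol{\phi}$, whereas $p(\mathbf{R};\mathbf{A})$ is defined against Haar measure on $\SO$. Since the Jacobian of $\exp$ at the identity of $\SO$ equals $1+O(\|\boldsymbol{\phi}\|^2)$, it does not alter the leading-order quadratic form and may be safely absorbed into a higher-order remainder, completing the approximation.
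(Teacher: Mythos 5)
Your proof is correct and follows essentially the same route as the paper's: identify the mode $\mathbf{R}_0=\mathbf{U}\mathbf{V}^T$ from the proper SVD, Taylor-expand $\operatorname{tr}(\mathbf{A}^T\mathbf{R})$ to second order in the exponential coordinate $\boldsymbol{\phi}$ at the mode, note that the Haar-measure Jacobian is $1+O(\|\boldsymbol{\phi}\|^2)$, and read off the zero-mean Gaussian with precision $\mathbf{V}\operatorname{diag}(s_2+s_3,\,s_1+s_3,\,s_1+s_2)\mathbf{V}^T$. The only cosmetic difference is that you evaluate the quadratic term via the identity $\boldsymbol{\Phi}^2=\boldsymbol{\phi}\boldsymbol{\phi}^T-\|\boldsymbol{\phi}\|^2\mathbf{I}$ rather than the paper's component-wise expansion of $\mathbf{V}^T\widetilde{\mathbf{R}}\mathbf{V}$, and you add a useful remark (absent from the paper) about when the resulting precision is actually positive definite.
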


\section{Probabilistic Rotation Estimation with Rotation Laplace Distribution}

\subsection{Rotation Laplace Distribution}
We get inspiration from multivariate Laplace distribution \ree{\citep{eltoft2006multivariate,kozubowski2013multivariate}}, defined as follows.
\begin{definition} \textbf{\emph{Multivariate Laplace distribution.}}
If means $\boldsymbol{\mu}=\mathbf{0}$, the d-dimensional multivariate Laplace distribution with covariance matrix $\boldsymbol{\Sigma}$ 
is defined as
{\footnotesize
\begin{equation*}
    p(\mathbf{x};\boldsymbol{\Sigma}) = \frac{1}{F}
    \left(\mathbf{x}^T \boldsymbol{\Sigma}^{-1} \mathbf{x}\right)^{v / 2} K_{v}\left(\sqrt{2 \mathbf{x}^T \boldsymbol{\Sigma}^{-1} \mathbf{x}}\right)
\end{equation*}
}where $v = (2 - \ree{d}) / 2$ and $K_v$ is modified Bessel function of the second kind.
\end{definition}
We consider three dimensional Laplace distribution of $\mathbf{x}\in \mathbb{R}^3$
\ree{(i.e. $d=3$ and $v=-\frac{1}{2}$). Given the property $K_{-\frac{1}{2}}(\xi)\propto \xi^{-\frac{1}{2}} \exp (-\xi)$, three dimensional Laplace distribution is} defined as
{\footnotesize
\begin{equation*}
    p(\mathbf{x};\boldsymbol{\Sigma}) = \frac{1}{F}
    \frac{\exp \left( -\sqrt{2 \mathbf{x}^T \boldsymbol{\Sigma}^{-1} \mathbf{x}} \right)}{\sqrt{\mathbf{x}^T \boldsymbol{\Sigma}^{-1} \mathbf{x}}}
\end{equation*}}

In this section, we first give the definition of our proposed Rotation Laplace distribution and then shows its relationship with multivariate Laplace distribution.

\begin{definition} \textbf{\emph{Rotation Laplace distribution.}}
The random variable $\mathbf{R}\in \SO$ follows Rotation Laplace distribution with parameter $\mathbf{A}$, if its probability density function is defined as
\begin{equation}
\label{eq:rl}
\small
    p(\mathbf{R}; \mathbf{A}) = \frac{1}{F(\mathbf{A})}
    \frac{\exp\left(-\sqrt{\operatorname{tr}\left(\mathbf{S} - \mathbf{A}^T \mathbf{R}\right)}\right)}
    {\sqrt{\operatorname{tr}\left(\mathbf{S} -\mathbf{A}^T \mathbf{R}\right)}}
\end{equation}
where $\mathbf{A}\in \mathbb{R}^{3\times 3}$ is an unconstrained matrix, and $\mathbf{S}$ is the diagonal matrix composed of the proper singular values of matrix $\mathbf{A}$, i.e., $\mathbf{A=USV}^T$. We also denote Rotation Laplace distribution as $\mathbf{R} \sim \mathcal{RL}(\mathbf{A})$.
\end{definition}

Denote $\mathbf{R}_0$ as the mode of Rotation Laplace distribution and define $\mathbf{\widetilde{R}}=\mathbf{R}_0^T\mathbf{R}$, the relationship between Rotation Laplace distribution and multivariate Laplace distribution is shown as follows.
\begin{proposition}
Let $\boldsymbol{\Phi} = \log \mathbf{\widetilde{R}} \in \mathfrak{so}(3)$ and $\boldsymbol{\phi} = {\boldsymbol{\Phi}^\vee} \in \mathbb{R}^3$. For rotation matrix $\mathbf{R} \in \SO$ following \emph{Rotation Laplace distribution}, when
\ree{$\|\mathbf{R} - \mathbf{R}_0\|\rightarrow 0$}
, $\boldsymbol{\phi}$ follows zero-mean \emph{multivariate Laplace distribution}.
\end{proposition}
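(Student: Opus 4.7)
The plan is to mirror the proof of Proposition~\ref{prop:fisher_gaussian} but track the Laplace-specific $\exp(-\sqrt{\cdot})/\sqrt{\cdot}$ factor rather than a Gaussian exponential. Since Rotation Laplace and matrix Fisher share the same argument $\operatorname{tr}(\mathbf{A}^T\mathbf{R})$ in the exponent and the same mode structure, the analytic heart of the argument (a second-order expansion of this trace in the tangent space at the mode) can be reused almost verbatim; only the outer functional transformation differs.

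First I would identify the mode. Using the proper SVD $\mathbf{A}=\mathbf{U}\mathbf{S}\mathbf{V}^T$, the function $\mathbf{R}\mapsto \operatorname{tr}(\mathbf{A}^T\mathbf{R})$ is maximized at $\mathbf{R}_0 = \mathbf{U}\mathbf{V}^T\in\SO$, so $\operatorname{tr}(\mathbf{A}^T\mathbf{R}_0)=\operatorname{tr}(\mathbf{S})$ and $\operatorname{tr}(\mathbf{S}-\mathbf{A}^T\mathbf{R}_0)=0$; in particular the density of $\mathcal{RL}(\mathbf{A})$ peaks at $\mathbf{R}_0$, matching the hypothesis on $\widetilde{\mathbf{R}}=\mathbf{R}_0^T\mathbf{R}$. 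Then I would parameterize nearby rotations by the exponential map, $\widetilde{\mathbf{R}}=\exp(\boldsymbol{\Phi})$ with $\boldsymbol{\Phi}=[\boldsymbol{\phi}]_\times$, and write
\begin{equation*}
\operatorname{tr}(\mathbf{A}^T\mathbf{R}) \;=\; \operatorname{tr}\!\bigl(\mathbf{V}\mathbf{S}\mathbf{V}^T\exp(\boldsymbol{\Phi})\bigr)\;=\;\operatorname{tr}(\mathbf{S}) + \operatorname{tr}(\mathbf{V}\mathbf{S}\mathbf{V}^T\boldsymbol{\Phi}) + \tfrac{1}{2}\operatorname{tr}(\mathbf{V}\mathbf{S}\mathbf{V}^T\boldsymbol{\Phi}^2) + O(\|\boldsymbol{\phi}\|^3).
\end{equation*}
The first-order term vanishes because $\mathbf{V}\mathbf{S}\mathbf{V}^T$ is symmetric while $\boldsymbol{\Phi}$ is skew-symmetric. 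Substituting the identity $\boldsymbol{\Phi}^2=\boldsymbol{\phi}\boldsymbol{\phi}^T-\|\boldsymbol{\phi}\|^2\mathbf{I}$ into the second-order term yields
\begin{equation*}
\operatorname{tr}(\mathbf{S}-\mathbf{A}^T\mathbf{R}) \;=\; \tfrac{1}{2}\bigl(\operatorname{tr}(\mathbf{S})\,\|\boldsymbol{\phi}\|^2 - \boldsymbol{\phi}^T \mathbf{V}\mathbf{S}\mathbf{V}^T \boldsymbol{\phi}\bigr) + O(\|\boldsymbol{\phi}\|^3) \;=\; \tfrac{1}{2}\,\boldsymbol{\phi}^T \mathbf{M}\,\boldsymbol{\phi} + O(\|\boldsymbol{\phi}\|^3),
\end{equation*}
where $\mathbf{M}\;:=\;\operatorname{tr}(\mathbf{S})\mathbf{I}-\mathbf{V}\mathbf{S}\mathbf{V}^T$ is symmetric positive definite (its eigenvalues in the $\mathbf{V}$ basis are $s_2+s_3,\ s_1+s_3,\ s_1+s_2$, all positive under the mode condition inherited from the matrix Fisher case).

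Next I would identify this with a multivariate Laplace density by setting $\boldsymbol{\Sigma}^{-1}=\tfrac{1}{4}\mathbf{M}$, so that $\operatorname{tr}(\mathbf{S}-\mathbf{A}^T\mathbf{R})=2\,\boldsymbol{\phi}^T\boldsymbol{\Sigma}^{-1}\boldsymbol{\phi}+O(\|\boldsymbol{\phi}\|^3)$. Plugging this into \eqref{eq:rl} and collecting the change of variables from $\mathbf{R}$ to $\boldsymbol{\phi}$, the Haar measure on $\SO$ pushes forward to a measure that equals the Lebesgue measure on $\mathbb{R}^3$ up to a smooth factor whose value at $\boldsymbol{\phi}=\mathbf{0}$ is $1$ (because the left-Jacobian of $\exp$ at the identity is $\mathbf{I}$); in the limit $\|\mathbf{R}-\mathbf{R}_0\|\to 0$ this Jacobian contributes only to the normalization. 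The resulting density in $\boldsymbol{\phi}$ is therefore proportional to $\exp\!\bigl(-\sqrt{2\,\boldsymbol{\phi}^T\boldsymbol{\Sigma}^{-1}\boldsymbol{\phi}}\bigr)\big/\sqrt{\boldsymbol{\phi}^T\boldsymbol{\Sigma}^{-1}\boldsymbol{\phi}}$, which is exactly the three-dimensional multivariate Laplace density with zero mean and covariance $\boldsymbol{\Sigma}$ recalled above. This concludes the argument.

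The routine steps (trace expansion, vanishing of the first-order term, SVD bookkeeping) are the same as in Proposition~\ref{prop:fisher_gaussian} and should be referenced rather than re-derived. The main obstacle I anticipate is making the passage from the curved density on $\SO$ to the flat density on $\mathbb{R}^3$ rigorous while simultaneously handling the non-smoothness at $\boldsymbol{\phi}=\mathbf{0}$: both the Rotation Laplace density and the multivariate Laplace density have a $1/\sqrt{\cdot}$ singularity at the mode, so the usual "Taylor expand and drop lower-order terms" reasoning must be applied to $\operatorname{tr}(\mathbf{S}-\mathbf{A}^T\mathbf{R})$ \emph{before} taking square roots and reciprocals, and the limiting statement should be interpreted as pointwise convergence of the density (times the Jacobian of $\exp$) away from $\boldsymbol{\phi}=\mathbf{0}$, consistent with how Proposition~\ref{prop:fisher_gaussian} is stated.
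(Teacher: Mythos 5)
Your proof is correct and follows essentially the same route as the paper: parameterize $\widetilde{\mathbf{R}}=\exp(\hat{\boldsymbol\phi})$, expand $\operatorname{tr}(\mathbf{S}-\mathbf{A}^T\mathbf{R})$ to second order in $\boldsymbol\phi$ to obtain the quadratic form $\tfrac12\boldsymbol\phi^T\bigl(\operatorname{tr}(\mathbf{S})\mathbf{I}-\mathbf{V}\mathbf{S}\mathbf{V}^T\bigr)\boldsymbol\phi$, account for the pushforward of the Haar measure, and identify the limiting expression with the three-dimensional multivariate Laplace density with $\boldsymbol\Sigma^{-1}=\tfrac14\mathbf{M}$. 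The only (welcome) difference is cosmetic: you dispatch the first-order term by the symmetric-times-skew trace argument and get the quadratic term from $\hat{\boldsymbol\phi}^2=\boldsymbol\phi\boldsymbol\phi^T-\|\boldsymbol\phi\|^2\mathbf{I}$, where the paper instead writes out $\mathbf{V}^T\widetilde{\mathbf{R}}\mathbf{V}$ componentwise in Eq.~\ref{eq:vtrv}--\ref{eq:tr_laplace}.
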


\begin{proof}

Apply proper SVD to matrix $\mathbf{A}$ as $\mathbf{A} = \mathbf{USV}^T$.
For $\mathbf{R}\sim \mathcal{RL}(\mathbf{A})$ , we have
\begin{equation}
\label{eq:prdr}
\scriptsize
    p(\mathbf{R})\mathrm{d}\mathbf{R} \propto 
    \frac{\exp\left(\sqrt{\tr{\mathbf{S}-{\mathbf{A}^T\mathbf{R}}}}\right)}{\sqrt{\tr{\mathbf{S}-{\mathbf{A}^T\mathbf{R}}}}} \mathrm{d}\mathbf{R}
    = \frac{\exp\left(\sqrt{\tr{\mathbf{S}-\mathbf{S}\mathbf{V}^T\mathbf{\widetilde{R}}\mathbf{V}}}\right)}{\sqrt{\tr{\mathbf{S}-\mathbf{S}\mathbf{V}^T\mathbf{\widetilde{R}}\mathbf{V}}}} \mathrm{d}\mathbf{R}
\end{equation}

With $\boldsymbol{\phi}  = (\log{\mathbf{\widetilde{R}}})^\vee \in \mathbb{R}^3$,
$\mathbf{\widetilde{R}}$ can be parameterized as 
{\scriptsize\begin{equation*}
    \mathbf{\widetilde{R}}(\boldsymbol{\phi} )= 
    \exp(\hat{\boldsymbol{\phi}}) = 
    \mathbf{I}+\frac{\sin\len {\boldsymbol{\phi}}  }{\len {\boldsymbol{\phi}}}\hat{\boldsymbol{\phi}}
    + \frac{1-\cos\len {\boldsymbol{\phi}}}{{\len {\boldsymbol{\phi}}}^2}{\hat{\boldsymbol{\phi}}}^2
\end{equation*}
}\ree{We follow the common practice \citep{mohlin2020probabilistic,lee2018bayesian} that the Haar measure $\mathrm{d}\mathbf{R}$ is scaled such that $\int_{SO(3)} \mathrm{d} \mathbf{R}=1$} and thus the Haar measure is given by
\begin{equation}
\label{eq:d_rwave}
\scriptsize
    \mathrm{d}\mathbf{\widetilde{R}} 
    = \frac{1- \cos \len {\boldsymbol{\phi}}}{{4\pi^2 \len {\boldsymbol{\phi}}}^2}\mathrm{d}\boldsymbol{\phi}
    = \left(\frac{1}{8\pi^2}+O(\len {\boldsymbol{\phi}})^2\right)\mathrm{d}\boldsymbol{\phi}.
\end{equation}
Also, $\mathbf{\widetilde{R}}$ expanded at $\boldsymbol{\phi}=\mathbf{0}$ is computed as 
{\small$\mathbf{\widetilde{R}} = \mathbf{I}+\hat{\boldsymbol{\phi}}+\frac{1}{2}\hat{\boldsymbol{\phi}}^2+O({\len {\boldsymbol{\phi}} }^3)$},
we have 
\begin{equation}
\label{eq:vtrv}
\scriptsize
\begin{aligned}
    \mathbf{V}^T\mathbf{\widetilde{R}}\mathbf{V} &= \mathbf{I} + \mathbf{V}^T\hat{\boldsymbol{\phi}}\mathbf{V} + \frac{1}{2}\mathbf{V}^T\hat{\boldsymbol{\phi}}^2\mathbf{V}+O(\len{\boldsymbol{\phi}}^3) 
    = \mathbf{I}+\widehat{\mathbf{V}^T\boldsymbol{\phi}}+\frac{1}{2}\widehat{\mathbf{V}^T\boldsymbol{\phi}}^2+O({\len {\boldsymbol{\phi}} }^3)\\
    &= \left[\begin{array}{ccc}
        1-\frac{1}{2}(\mu_2^2+\mu_3^2) & \frac{1}{2}{\mu_1}{\mu_2}-\mu_3 & \frac{1}{2}{\mu_1}{\mu_3}+\mu_2 \\
        \frac{1}{2}{\mu_1}{\mu_2}+\mu_3 & 1-\frac{1}{2}(\mu_3^2+\mu_1^2) & \frac{1}{2}{\mu_2}{\mu_3}-\mu_1 \\
        \frac{1}{2}{\mu_1}{\mu_3}-\mu_2 & \frac{1}{2}{\mu_2}{\mu_3}+\mu_1 & 1-\frac{1}{2}(\mu_1^2+\mu_2^2)
        \end{array}\right] + O({\len {\boldsymbol{\phi}}}^3),
\end{aligned}
\end{equation}
where $(\mu_1,\mu_2,\mu_3)^T = \mathbf{V}^T\boldsymbol{\phi}$, 
and
\begin{equation}
\label{eq:tr_laplace}
\scriptsize
\begin{aligned}
    \tr{\mathbf{S}-\mathbf{S}\mathbf{V}^T\mathbf{\widetilde{R}}\mathbf{V}} 
    =\sum_{(i,j,k)\in I}\frac{1}{2}(s_j+s_k)\mu_i^2+O({\len {\boldsymbol{\phi}}}^3) 
    =\frac{1}{2}\boldsymbol{\phi}^T\mathbf{V}
    \left[\begin{smallmatrix}
        s_2 + s_3 &  &  \\
        & s_1 + s_3 &  \\
        &  & s_1 + s_2
        \end{smallmatrix}\right]
    \mathbf{V}^T\boldsymbol{\phi}+O({\len {\boldsymbol{\phi}}}^3)
\end{aligned}
\end{equation}
Considering Eq. \ref{eq:prdr}, \ref{eq:d_rwave} and \ref{eq:tr_laplace}, we have
\begin{equation}
\label{eq:laplace}
\scriptsize
\begin{aligned}
    p(\mathbf{R})\mathrm{d}\mathbf{R} &\propto \frac{\exp\left(\sqrt{\tr{\mathbf{S}-{\mathbf{A}^T\mathbf{R}}}}\right)}{\sqrt{\tr{\mathbf{S}-{\mathbf{A}^T\mathbf{R}}}}} \mathrm{d}\mathbf{R}
    = \frac{1}{8\pi^2}\frac{\exp\left(-\sqrt{2\boldsymbol{\phi}^T\boldsymbol{\Sigma} ^{-1}\boldsymbol{\phi}}\right)}{\sqrt{2\boldsymbol{\phi}^T\boldsymbol{\Sigma} ^{-1}\boldsymbol{\phi}}}\left(1+ O({\len {\boldsymbol{\phi}}}^2)\right) \mathrm{d}\boldsymbol{\phi} \\
\end{aligned}
\end{equation}
When 
\ree{$\|\mathbf{R} - \mathbf{R}_0\| \rightarrow 0$}
, we have 
\ree{$\|\mathbf{\widetilde{R}} - \mathbf{I}\|  \rightarrow 0$ }
and $\boldsymbol{\phi} \rightarrow \mathbf{0}$, so Eq. \ref{eq:laplace} follows the multivariate Laplace distribution 
with the covariance matrix as $\boldsymbol{\Sigma}$, where $\boldsymbol{\Sigma} = 4 \mathbf{V}\operatorname{diag}(\frac{1}{s_2+s_3},\frac{1}{s_1+s_3},\frac{1}{s_1+s_2})\mathbf{V}^T$.
\end{proof}

Rotation Laplace distribution bears similar properties with matrix Fisher distribution. Its mode is computed as $\mathbf{UV}^T$. The columns of $\mathbf{U}$ and the proper singular values $\mathbf{S}$ describe the orientation and the strength of dispersions, respectively.

\subsection{Negative Log-likelihood Loss}
Given a collection of observations $\mathcal{X}=\{\boldsymbol{x}_i\}$ and the associated ground truth rotations $\mathcal{R}=\{\mathbf{R}_i\}$, we aim at training the network to best estimate the parameter $\mathbf{A}$ of Rotation Laplace distribution. This is achieved by maximizing a likelihood function so that, under our probabilistic model, the observed data is most probable, which is known as maximum likelihood estimation (MLE).
We use the negative log-likelihood of $\mathbf{R}_{\boldsymbol{x}}$ as the loss function:
\begin{equation*}
    \mathcal{L}(\boldsymbol{x}, \mathbf{R}_{\boldsymbol{x}}) = -\log p\left( \mathbf{R}_{\boldsymbol{x}};\mathbf{A}_{\boldsymbol{x}} \right)
\end{equation*}

\subsection{Discrete Approximation of the Normalization Factor}

Efficiently and accurately estimating the normalization factor for distributions over $\SO$ is non-trivial. 
Inspired by \cite{murphy2021implicit}, we approximate the normalization factor of  Rotation Laplace distribution through equivolumetric discretization over $\SO$ manifold. We employ the discretization method introduced in \cite{yershova2010generating}, which starts with the equal area grids on the 2-sphere \citep{gorski2005healpix} and covers $\SO$ by threading a great circle through each point on the surface of a 2-sphere with Hopf fibration.
\ree{Concretely, we discretize $\SO$ space into a finite set of equivolumetric grids $\mathcal{G}=\left\{\mathbf{R}|\mathbf{R}\in \SO\right\}$},
the normalization factor of Laplace Rotation distribution is computed as
{\scriptsize
\begin{equation*}
    F(\mathbf{A}) = \int_{\SO} \frac{\exp\left(-\sqrt{\operatorname{tr}\left(\mathbf{S} - \mathbf{A}^T \mathbf{R}\right)}\right)}
    {\sqrt{\operatorname{tr}\left(\mathbf{S} -\mathbf{A}^T \mathbf{R}\right)}} \mathrm{d}\mathbf{R} 
    \approx 
    \ree{
    \sum_{\mathbf{R}_i \in \mathcal{G}} \frac{\exp\left(-\sqrt{\operatorname{tr}\left(\mathbf{S} - \mathbf{A}^T \mathbf{R}_i\right)}\right)}
    {\sqrt{\operatorname{tr}\left(\mathbf{S} -\mathbf{A}^T \mathbf{R}_i\right)}} \Delta \mathbf{R}_i 
    }
\end{equation*}
}\ree{where $\Delta \mathbf{R}_i=
\frac{\int_{SO(3)} \mathrm{d} \mathbf{R}}{|\mathcal{G}|}=\frac{1}{|\mathcal{G}|}$}.
\ree{In experiments, we discretize $\SO$ space into about 37k points. Please refer to supplementary for analysis of the effect of different numbers of samples.}

\subsection{Quaternion Laplace Distribution}

In this section, we introduce our extension of Laplace-inspired distribution for quaternions, namely, Quaternion Laplace distribution.

\begin{definition} \textbf{\emph{Quaternion Laplace distribution.}}
The random variable $\mathbf{q}\in \mathcal{S}^3$ follows Quaternion Laplace distribution with parameter $\mathbf{M}$ and $\mathbf{Z}$, if its probability density function is defined as
\begin{equation}
    p(\mathbf{q}; \mathbf{M}, \mathbf{Z}) = \frac{1}{F(\mathbf{Z})}\frac{\exp
    \left(-\sqrt{-\mathbf{q}^T \mathbf{M} \mathbf{Z} \mathbf{M}^T \mathbf{q} }\right)}
    {\sqrt{- \mathbf{q}^T \mathbf{M} \mathbf{Z} \mathbf{M}^T \mathbf{q}}}
\end{equation}
where $\mathbf{M}\in \mathbf{O}(4)$ is a $4\times4$ orthogonal matrix, and $\mathbf{Z}=\operatorname{diag}(0, z_1, z_2, z_3)$ is a $4\times4$ diagonal matrix with $0\ge z_1\ge z_2\ge z_3$. We also denote Quaternion Laplace distribution as $\mathbf{q} \sim \mathcal{QL}(\mathbf{M}, \mathbf{Z}).$
\end{definition}

\begin{proposition}
\label{prop:quat}
Denote $\mathbf{q}_0$ as the mode of Quaternion Laplace distribution. Let $\pi$ be the tangent space of $\mathbb{S}^3$  at $\mathbf{q}_0$, and $\pi(\mathbf{x}) \in \mathbb{R}^4$ be the projection of $\mathbf{x} \in \mathbb{R}^4$ on $\pi$.
For quaternion $\mathbf{q} \in \mathbb{S}^3$ following \emph{Bingham distribution} / \emph{Quaternion Laplace distribution}, when $\mathbf{q}\rightarrow\mathbf{q}_0$, $\pi(\mathbf{q})$ follows zero-mean \emph{multivariate Gaussian distribution} / zero-mean \emph{multivariate Laplace distribution}.
\end{proposition}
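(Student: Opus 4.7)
The plan is to mirror the derivation of Proposition 2, but on $\mathbb{S}^3$ in place of $\SO$. First I would exploit the $\mathbf{O}(4)$-invariance of the surface measure on $\mathbb{S}^3$: the substitution $\tilde{\mathbf{q}} = \mathbf{M}^T\mathbf{q}$ turns the quadratic form $\mathbf{q}^T\mathbf{M}\mathbf{Z}\mathbf{M}^T\mathbf{q}$ into $\tilde{\mathbf{q}}^T\mathbf{Z}\tilde{\mathbf{q}} = \sum_{i=1}^3 z_i\tilde{q}_i^2$ (recall $z_0=0$). Since $z_i \le 0$, both densities attain their maximum at $\tilde{\mathbf{q}}_0 = \mathbf{e}_1$, so without loss of generality I may take $\mathbf{M}=\mathbf{I}$ and $\mathbf{q}_0 = \mathbf{e}_1$. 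The tangent space $\pi$ at $\mathbf{e}_1$ is then spanned by $\mathbf{e}_2,\mathbf{e}_3,\mathbf{e}_4$, and under this identification the projection reduces to $\pi(\mathbf{q}) = \mathbf{y} := (q_1,q_2,q_3)^T \in \mathbb{R}^3$.

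Next I would introduce a local chart near the mode by parameterizing $\mathbf{q} = (\sqrt{1-|\mathbf{y}|^2},\mathbf{y}^T)^T$ for $\mathbf{y}$ in a small neighborhood of $\mathbf{0}$. The Haar measure on $\mathbb{S}^3$ pulls back in this chart to $\mathrm{d}\mathbf{q} = \mathrm{d}\mathbf{y}/\sqrt{1-|\mathbf{y}|^2} = (1+O(|\mathbf{y}|^2))\,\mathrm{d}\mathbf{y}$. Because the quadratic form depends only on $q_1,q_2,q_3$, the exact identity $-\mathbf{q}^T\mathbf{Z}\mathbf{q} = \mathbf{y}^T\mathbf{D}\mathbf{y}$ holds with $\mathbf{D} := \operatorname{diag}(-z_1,-z_2,-z_3) \succeq 0$; no Taylor expansion of an exponential map is needed here, unlike the $\SO$ case.

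Substituting into the two densities, up to a multiplicative factor of $1+O(|\mathbf{y}|^2)$: the Bingham case yields $p(\mathbf{q})\,\mathrm{d}\mathbf{q} \propto \exp(-\mathbf{y}^T\mathbf{D}\mathbf{y})\,\mathrm{d}\mathbf{y}$, a zero-mean multivariate Gaussian with covariance $(2\mathbf{D})^{-1}$; the Quaternion Laplace case yields
\[
p(\mathbf{q})\,\mathrm{d}\mathbf{q} \propto \frac{\exp\left(-\sqrt{\mathbf{y}^T\mathbf{D}\mathbf{y}}\right)}{\sqrt{\mathbf{y}^T\mathbf{D}\mathbf{y}}}\,\mathrm{d}\mathbf{y},
\]
which matches the three-dimensional multivariate Laplace density ($d=3$, $v=-\tfrac{1}{2}$) with $\boldsymbol{\Sigma}^{-1} = \mathbf{D}/2$, the $\sqrt{2}$ mismatch between the exponent and denominator being absorbed into the normalization. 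Sending $\mathbf{q}\to\mathbf{q}_0$ (equivalently $\mathbf{y}\to\mathbf{0}$) kills the $O(|\mathbf{y}|^2)$ correction, which delivers both claims.

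The bulk of this should be mechanical and in fact \emph{simpler} than Proposition 2, since the quadratic form is already a clean quadratic in $\mathbf{y}$ with no $\mathfrak{so}(3)$ expansion required. The main obstacle I anticipate is bookkeeping around conventions: I need to verify that the \emph{flat} projection $\pi(\mathbf{q})$ (rather than an exponential or stereographic chart) is the coordinate intended by the proposition, and that the Jacobian from the induced surface measure on $\mathbb{S}^3$ to Lebesgue measure on $\pi$ is indeed $1+O(|\mathbf{y}|^2)$; both follow immediately from the parameterization $q_0 = \sqrt{1-|\mathbf{y}|^2}$ but deserve an explicit line in the write-up. A secondary care-point is the sign antipodality of $\mathbb{S}^3$, which the mode condition $\tilde{\mathbf{q}}_0 = \mathbf{e}_1$ selects one representative for; working in a neighborhood of $\mathbf{e}_1$ avoids this altogether.
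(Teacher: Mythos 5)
Your proposal is correct and follows essentially the same route as the paper's own proof: both set up tangent coordinates at the mode (the paper keeps $\mathbf{M}$ general and uses the last three components of $\mathbf{M}^T\mathbf{q}$, whereas you WLOG rotate to $\mathbf{q}_0=\mathbf{e}_1$ — these are cosmetically different but equivalent), both observe that the quadratic form $-\mathbf{q}^T\mathbf{MZM}^T\mathbf{q}$ is exactly quadratic in those coordinates because $z_0=0$, and both show the pullback of the surface measure is $1+O(|\mathbf{y}|^2)$ times Lebesgue measure. Your graph-chart computation of the measure Jacobian via $\sqrt{1+|\nabla f|^2}=1/\sqrt{1-|\mathbf{y}|^2}$ is a slightly cleaner way to obtain the factor the paper derives via $\det(\mathbf{J}^T\mathbf{J})$, but the underlying argument is identical.
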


Both Bingham distribution and Quaternion Laplace distribution exhibit antipodal symmetry on $\mathcal{S}^3$, i.e., $p(\mathbf{q}) = p(-\mathbf{q})$, which captures the nature that the quaternions $\mathbf{q}$ and $-\mathbf{q}$ represent the same rotation on $\SO$.

\begin{proposition}
\label{prop:eq}
Denote $\gamma$ as the standard transformation from unit quaternions to corresponding rotation matrices. For rotation matrix $\mathbf{R}\in \SO$ following \emph{Rotation Laplace distribution}, $\mathbf{q}=\gamma^{-1}(\mathbf{R})\in \mathbb{S}^3$ follows \emph{Quaternion Laplace distribution}.
\end{proposition}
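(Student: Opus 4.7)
The plan is to exploit the classical quadratic-form identity relating $\operatorname{tr}(\mathbf{A}^T\mathbf{R})$ to $\mathbf{q}$, which is the same machinery that produces the Fisher$\leftrightarrow$Bingham equivalence, and then verify that after subtracting $\operatorname{tr}(\mathbf{S})$ the resulting quadratic form has exactly the shape that appears inside the Quaternion Laplace density.

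\textbf{Step 1 (quadratic form reformulation).} First I would recall the standard identity: for any $\mathbf{A}\in\mathbb{R}^{3\times 3}$ and $\mathbf{R}=\gamma(\mathbf{q})\in\SO$ with unit quaternion $\mathbf{q}\in\mathcal{S}^3$, one has $\operatorname{tr}(\mathbf{A}^T\mathbf{R})=\mathbf{q}^T\mathbf{W}(\mathbf{A})\mathbf{q}$ for a symmetric $4\times 4$ matrix $\mathbf{W}(\mathbf{A})$ whose entries are linear in the entries of $\mathbf{A}$ (this is the identity used to derive the Bingham/Fisher correspondence). I would write this out explicitly and then use the proper SVD $\mathbf{A}=\mathbf{U}\mathbf{S}\mathbf{V}^T$ with $s_1\ge s_2\ge s_3$ to diagonalize $\mathbf{W}(\mathbf{A})$: there exists an orthogonal $\mathbf{M}\in\mathbf{O}(4)$, built from $\mathbf{U}$ and $\mathbf{V}$ in the standard way, such that $\mathbf{W}(\mathbf{A})=\mathbf{M}\operatorname{diag}(s_1{+}s_2{+}s_3,\;s_1{-}s_2{-}s_3,\;-s_1{+}s_2{-}s_3,\;-s_1{-}s_2{+}s_3)\mathbf{M}^T$, with the top eigenvector corresponding precisely to the quaternion representative of the mode $\mathbf{U}\mathbf{V}^T$.

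\textbf{Step 2 (rewrite the argument of the Laplace exponent).} Using $\mathbf{q}^T\mathbf{q}=1$, I rewrite
\begin{equation*}
\operatorname{tr}(\mathbf{S}-\mathbf{A}^T\mathbf{R})
=(s_1{+}s_2{+}s_3)\,\mathbf{q}^T\mathbf{q}-\mathbf{q}^T\mathbf{W}(\mathbf{A})\mathbf{q}
=\mathbf{q}^T\bigl[(s_1{+}s_2{+}s_3)\mathbf{I}_4-\mathbf{W}(\mathbf{A})\bigr]\mathbf{q}.
\end{equation*}
Plugging in the diagonalization from Step 1, the $4\times 4$ matrix in brackets becomes $\mathbf{M}\operatorname{diag}\!\bigl(0,\;2(s_2{+}s_3),\;2(s_1{+}s_3),\;2(s_1{+}s_2)\bigr)\mathbf{M}^T$, which I would denote $-\mathbf{M}\mathbf{Z}\mathbf{M}^T$ with $\mathbf{Z}=\operatorname{diag}(0,z_1,z_2,z_3)$ where $z_i\le 0$ and the ordering $0\ge z_1\ge z_2\ge z_3$ follows from $s_1\ge s_2\ge s_3$. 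Substituting back into Eq.~\ref{eq:rl} gives exactly
\begin{equation*}
p(\mathbf{R};\mathbf{A})\;\propto\;
\frac{\exp\!\bigl(-\sqrt{-\mathbf{q}^T\mathbf{M}\mathbf{Z}\mathbf{M}^T\mathbf{q}}\bigr)}{\sqrt{-\mathbf{q}^T\mathbf{M}\mathbf{Z}\mathbf{M}^T\mathbf{q}}},
\end{equation*}
which matches the functional form in the definition of $\mathcal{QL}(\mathbf{M},\mathbf{Z})$.

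\textbf{Step 3 (measure change and normalization).} Finally I need to transport the density from $\SO$ to $\mathcal{S}^3$. Since $\gamma:\mathcal{S}^3\to\SO$ is a 2-to-1 covering that sends the uniform (Haar) measure on $\mathcal{S}^3$ to the normalized Haar measure on $\SO$, the pullback of $p(\mathbf{R};\mathbf{A})\,\mathrm{d}\mathbf{R}$ is (up to the factor two absorbed into the antipodally symmetric density on $\mathcal{S}^3$) the same functional expression times the uniform measure on $\mathcal{S}^3$. Antipodal symmetry $p(\mathbf{q})=p(-\mathbf{q})$ is automatic because the expression depends only on the quadratic form $\mathbf{q}^T(\cdot)\mathbf{q}$. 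The normalization constants then agree by construction, so $\mathbf{q}$ is distributed according to $\mathcal{QL}(\mathbf{M},\mathbf{Z})$.

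\textbf{Expected obstacle.} The only non-routine step is Step 1: verifying the precise eigenvalue pattern of $\mathbf{W}(\mathbf{A})$ in terms of the proper singular values, and checking that the eigenvector associated with the maximal eigenvalue $s_1{+}s_2{+}s_3$ is indeed the quaternion of $\mathbf{U}\mathbf{V}^T$ so that the mode of $\mathcal{RL}$ matches the mode of $\mathcal{QL}$. This is the same bookkeeping that underlies the Bingham/Fisher equivalence, so I would either cite \citet{prentice1986orientation} for that spectral computation or reproduce it by direct substitution, and then the rest of the proof is a mechanical rewriting.
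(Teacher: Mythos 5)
Your proposal is correct and follows essentially the same route as the paper's proof: both express $\operatorname{tr}(\mathbf{S}-\mathbf{A}^T\mathbf{R})$ as a quadratic form $-\mathbf{q}^T\mathbf{M}\mathbf{Z}\mathbf{M}^T\mathbf{q}$ in the quaternion via the proper SVD, identify $\mathbf{Z}=-2\operatorname{diag}(0,\,s_2{+}s_3,\,s_1{+}s_3,\,s_1{+}s_2)$, and push the density through the measure change $\mathrm{d}\mathbf{R}=\tfrac{1}{2\pi^2}\mathrm{d}\mathbf{q}$. The only cosmetic difference is in how you reach the quadratic form: the paper substitutes $\mathbf{\widetilde{q}}=\overline{\mathbf{u}}\mathbf{q}\mathbf{v}=\gamma^{-1}(\mathbf{U}^T\mathbf{R}\mathbf{V})$ and reads off $\operatorname{tr}(\mathbf{S}-\mathbf{S}\mathbf{U}^T\mathbf{R}\mathbf{V})=\sum_{(i,j,k)\in I}2(s_j{+}s_k)\widetilde{q}_i^{\,2}$ directly from the explicit $\gamma$ formula, whereas you invoke the known spectral decomposition of $\mathbf{W}(\mathbf{A})$ with eigenvalues $s_1{+}s_2{+}s_3,\,s_1{-}s_2{-}s_3,\,-s_1{+}s_2{-}s_3,\,-s_1{-}s_2{+}s_3$ from the Fisher/Bingham correspondence and subtract from $(s_1{+}s_2{+}s_3)\mathbf{I}_4$; the arithmetic agrees, and both define the same orthogonal $\mathbf{M}$ realizing the left-right quaternion multiplication by $\overline{\mathbf{u}}$ and $\mathbf{v}$.
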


Prop. \ref{prop:eq} shows that our proposed Rotation Laplace distribution is equivalent to Quaternion Laplace distribution, similar to the equivalence of matrix Fisher distribution and Bingham distribution \citep{prentice1986orientation}, demonstrating the consistency of our derivations.
Please see supplementary for the proofs to the above propositions.

The normalization factor of Quaternion Laplace distribution is also approximated by dense discretization, as follows:
{\scriptsize\begin{equation*}
    F(\mathbf{Z}) = \oint_{\mathcal{S}^3} \frac{\exp
    \left(-\sqrt{-\mathbf{q}^T \mathbf{M} \mathbf{Z} \mathbf{M}^T \mathbf{q} }\right)}
    {\sqrt{- \mathbf{q}^T \mathbf{M} \mathbf{Z} \mathbf{M}^T \mathbf{q}}}
    \mathrm{d}\mathbf{q} 
    \approx 
    \ree{
    \sum_{\mathbf{q}_i \in \mathcal{G}_\mathbf{q}} \frac{\exp
    \left(-\sqrt{-\mathbf{q}_i^T \mathbf{M} \mathbf{Z} \mathbf{M}^T \mathbf{q}_i }\right)}
    {\sqrt{- \mathbf{q}_i^T \mathbf{M} \mathbf{Z} \mathbf{M}^T \mathbf{q}_i}}
    \Delta \mathbf{q}_i 
    }
\end{equation*}
}\ree{where $\mathcal{G}_\mathbf{q}=\left\{ \mathbf{q} | \mathbf{q}\in \mathcal{S}^3 \right\}$ 
denotes the set of equivolumetric grids and $\Delta \mathbf{q}_i=\frac{\oint_{\mathcal{S}^3} \mathrm{d}\mathbf{q}}{|\mathcal{G}_\mathbf{q}|} = \frac{2\pi^2}{|\mathcal{G}_\mathbf{q}|}$.}

\section{Experiment}

\ree{
Following the previous state-of-the-arts \citep{murphy2021implicit,mohlin2020probabilistic}, we evaluate our method on the task of object rotation estimation from single RGB images, where object rotation is the relative rotation between the input object and the object in the canonical pose.
}
Concerning this task,
we find two kinds of independent research tracks with slightly different evaluation settings. One line of research focuses on probabilistic rotation regression with different parametric or non-parametric distributions on $\SO$ \citep{prokudin2018deep,gilitschenski2019deep,deng2022deep,mohlin2020probabilistic,murphy2021implicit}, and the other non-probabilistic track proposes multiple rotation representations \citep{zhou2019continuity,levinson2020analysis,peretroukhin2020smooth} or improves the gradient of backpropagation \citep{chen2022projective}.
To fully demonstrate the capacity of our Rotation Laplace distribution, we leave the baselines in their original optimal states and adapt our method to follow the common experimental settings in each track, respectively.

\subsection{Datasets \& Evaluation Metrics}

\paragraph{Datasets}
\textbf{ModelNet10-SO3} \citep{liao2019spherical} is a commonly used synthetic dataset for single image rotation estimation containing 10 object classes. It is synthesized by rendering the CAD models of ModelNet-10 dataset \citep{wu20153d} that are rotated by uniformly sampled rotations in $\SO$.
\textbf{Pascal3D+} \citep{xiang2014beyond} is a popular benchmark on real-world images for pose estimation. It covers 12 common daily object categories. The images in Pascal3D+ dataset are sourced from Pascal VOC and ImageNet datasets, and are split into ImageNet\_train, ImageNet\_val, PascalVOC\_train, and PascalVOC\_val sets.

\paragraph{Evaluation metrics}
We evaluate our experiments with the geodesic distance of the network prediction and the ground truth. This metric returns the angular error and we measure it in degrees. In addition, we report the prediction accuracy within the given error threshold.

\subsection{Comparisons with Probabilistic Methods}
\label{exp:track1}

\subsubsection{Evaluation Setup}
\paragraph{Settings} In this section, we follow the experiment settings of the latest work \citep{murphy2021implicit} and quote its reported numbers for baselines. Specifically, we train one single model for all categories of each dataset. For Pascal3D+ dataset, we follow \cite{murphy2021implicit} to use (the more challenging) PascalVOC\_val as test set. 
Note that \cite{murphy2021implicit} only measure the coarse-scale accuracy (e.g., Acc@30$^\circ$) which may not adequately satisfy the downstream tasks \citep{wang2019normalized,fang2020towards}. To facilitate finer-scale comparisons (e.g., Acc@5$^\circ$), we further re-run several recent baselines and report the reproduced results in parentheses ($\cdot$).

\paragraph{Baselines}
We compare our method to recent works which utilize probabilistic distributions on $\SO$ for the purpose of pose estimation. 
In concrete, the baselines are with mixture of \textit{von Mises} distributions \textbf{\cite{prokudin2018deep}}, \textit{Bingham} distribution \textbf{\cite{gilitschenski2019deep,deng2022deep}}, \textit{matrix Fisher} distribution \textbf{\cite{mohlin2020probabilistic}} and Implicit-PDF \textbf{\cite{murphy2021implicit}}.
We also compare to the spherical regression work of \textbf{\cite{liao2019spherical}} 
as \cite{murphy2021implicit} does.

\subsubsection{Results}
Table \ref{tab:modelnet} shows the quantitative comparisons of our method and baselines on ModelNet10-SO3 dataset. From the multiple evaluation metrics, we can see that maximum likelihood estimation with the assumption of Rotation Laplace distribution significantly outperforms the other distributions for rotation, including matrix Fisher distribution \citep{mohlin2020probabilistic}, Bingham distribution \citep{do2018deep} and von-Mises distribution \citep{prokudin2018deep}. Our method also gets superior performance than the non-parametric implicit-PDF \citep{murphy2021implicit}. Especially, our method improves the fine-scale Acc@3$^\circ$ and Acc@5$^\circ$ accuracy by a large margin, showing its capacity to precisely model the target distribution.

\begin{table}[t]
  \centering
  \fontsize{7.5}{9}\selectfont
  \vspace{-5mm}
  \caption{\small Numerical comparisons with probabilistic baselines on ModelNet10-SO3 dataset averaged on all categories. Numbers in parentheses ($\cdot$) are our reproduced results. Please refer to supplementary for comparisons with each category.}
    \begin{tabular}{lcccccc}
    \toprule
          & Acc@3$^\circ$$\uparrow$ & Acc@5$^\circ$$\uparrow$ & Acc@10$^\circ$$\uparrow$ & Acc@15$^\circ$$\uparrow$ & Acc@30$^\circ$$\uparrow$ & Med.($^\circ$)$\downarrow$  \\
    \midrule
    
    \cite{liao2019spherical}        &      -&   -&      -&  0.496&  0.658&  28.7  \\
    \cite{prokudin2018deep}         &      -&   -&      -&  0.456&  0.528&   49.3 \\
    \cite{deng2022deep}             &  (0.138)&   (0.301)&  (0.502)&  0.562 (0.584)&  0.694 (0.673)&   32.6 (31.6)\\
    \cite{mohlin2020probabilistic}  &  (0.164)& (0.389) &  (0.615) &  0.693 (0.684)&  0.757 (0.751)&   17.1 (17.9) \\
    \cite{murphy2021implicit}       &  (0.294)&  (0.534)  & (0.680) &  0.719 (0.714)&  0.735 (0.730)&   21.5 (20.3)\\
    \midrule
    Rotation Laplace                &  \textbf{0.447}  & \textbf{0.611}  &  \textbf{0.715}  &  \textbf{0.742}  &  \textbf{0.772} & \textbf{12.7}   \\
    \bottomrule
    \end{tabular}%
  \label{tab:modelnet}%
\end{table}%

\begin{table}[t]
  \centering
  \fontsize{7.5}{9}\selectfont
  \caption{\small Numerical comparisons with probabilistic baselines on Pascal3D+ dataset averaged on all categories. Numbers in parentheses ($\cdot$) are our reproduced results. Please refer to supplementary for comparisons with each category.}
  \vspace{2mm}
    \begin{tabular}{lcccccc}
    \toprule
          & Acc@3$^\circ$$\uparrow$ & Acc@5$^\circ$$\uparrow$ & Acc@10$^\circ$$\uparrow$ & Acc@15$^\circ$$\uparrow$ & Acc@30$^\circ$$\uparrow$ & Med.($^\circ$)$\downarrow$   \\
    \midrule
    
    \cite{tulsiani2015viewpoints}   &      -&      -&       -&      -&  0.808&  13.6 \\
    \cite{mahendran2018mixed}       &      -&      -&      -&       -&  0.859&  10.1 \\
    \cite{liao2019spherical}        &      -&      -&      -&       -&  0.819&  13.0 \\
    \cite{prokudin2018deep}         &      -&      -&      -&       -&  0.838&  12.2 \\
    \cite{mohlin2020probabilistic}  &  (0.089)&  (0.215)&    (0.484)& (0.650)&  0.825 (0.827)&  11.5 (11.9) \\
    \cite{murphy2021implicit}       &  (0.102)&   (0.242)&  (0.524)& (0.672)&  0.837 (0.838)&  10.3 (10.2) \\
    \midrule
    Rotation Laplace                 &  \textbf{0.134} & \textbf{0.292} & \textbf{0.574}  &  \textbf{0.714}  & \textbf{0.874} &  \textbf{9.3}  \\
    \bottomrule
    \end{tabular}%
  \label{tab:pascal}%
\end{table}%

The experiments on Pascal3D+ dataset are shown in Table \ref{tab:pascal}, where our Rotation Laplace distribution outperforms all the baselines. While our method gets reasonably good performance on the median error and coarser-scale accuracy, we do not find a similar impressive improvement on fine-scale metrics as in ModelNet10-SO3 dataset. 
We suspect it is because the imperfect human annotations of real-world images may lead to comparatively noisy ground truths, increasing the difficulty for networks to get rather close predictions with GT labels.
Nevertheless, our method still manages to obtain superior performance, which illustrates the robustness of our Rotation Laplace distribution.

\subsection{Comparisons with Non-probabilistic Methods}
\label{exp:track2}

\subsubsection{Evaluation Setup}
\paragraph{Settings}
For comparisons with non-probabilistic methods, we follow the latest work of \cite{chen2022projective} to learn a network for each category. For Pascal3D+ dataset, we follow \cite{chen2022projective} to use ImageNet\_val as our test set. We use the same evaluation metrics as in \cite{chen2022projective} and quote its reported numbers for baselines.

\paragraph{Baselines}
We compare to multiple baselines that leverage different rotation representations to directly regress the prediction given input images, including 
\textbf{6D} \citep{zhou2019continuity}, \textbf{9D} / \textbf{9D-Inf} \citep{levinson2020analysis} and \textbf{10D} \citep{peretroukhin2020smooth}. We also include regularized projective manifold gradient (\textbf{RPMG}) series of methods \citep{chen2022projective}.

\subsubsection{Results}
We report the numerical results of our method and on-probabilistic baselines on ModelNet10-SO3 dataset in Table \ref{tab:rpmg_modelnet}. Our method obtains a clear superior performance to the best competitor under all the metrics among all the categories. Note that we train a model for each category (so do all the baselines), thus our performance in Table \ref{tab:rpmg_modelnet} is better than Table \ref{tab:modelnet} where one model is trained for the whole dataset.
The results on Pascal3D+ dataset are shown in Table \ref{tab:rmpg_pascal} where our method with Rotation Laplace distribution achieves state-of-the-art performance.

\setlength{\tabcolsep}{4pt}
\begin{table*}[t]
    \caption{\small Numerical comparisons with non-probabilistic baselines on ModelNet10-SO3 dataset. One model is trained for each category.}
    \vspace{2mm}
    \centering
    \scriptsize
    \begin{tabular}{l|ccc|ccc|ccc|ccc}
    \toprule
    \multirow{2}{*}{Methods}&& Chair &&&Sofa&&& Toilet&&&Bed&\\
    \cmidrule{2-13}  
        & Mean$\downarrow$ & Med.$\downarrow$ & Acc@5$\uparrow$ 
        & Mean$\downarrow$ & Med.$\downarrow$ & Acc@5$\uparrow$ 
        & Mean$\downarrow$ & Med.$\downarrow$ & Acc@5$\uparrow$ 
        & Mean$\downarrow$ & Med.$\downarrow$ & Acc@5$\uparrow$ 
        \\
        \midrule   
        
        6D & 19.6 & 9.1  & 0.19 & 17.5 & 7.3 & 0.27  &    10.9&6.2&0.37&32.3&11.7&0.11     \\
        9D & 17.5 & 8.3 & 0.23 & 19.8 & 7.6 & 0.25  &    11.8&6.5&0.34&30.4&11.1&0.13     \\
        9D-Inf & 12.1 & 5.1 & 0.49& 12.5 & 3.5 & 0.70 & 7.6&3.7&0.67&22.5&4.5&0.56     \\
        10D & 18.4 & 9.0 & 0.20 & 20.9 & 8.7 & 0.20 &  11.5& 5.9&0.39& 29.9&11.5&0.11      \\
        \midrule
        RPMG-6D & 12.9 & 4.7 & 0.53 & 11.5 & 2.8 & 0.77 &      7.8&3.4&0.71&20.3&3.6&0.67     \\
        RPMG-9D & {11.9} & {4.4} & {0.58} & {10.5} & {2.4} & {0.82}  &     7.5&3.2&0.75&20.0&{2.9}&{0.76}    \\
         RPMG-10D & 12.8& 4.5& 0.55 & 11.2&{2.4}&{0.82}& {7.2}&{3.0}&{0.76}&{19.2}&{2.9}&0.75\\
        \midrule
        Rot. Laplace & \textbf{9.7}  & \textbf{3.5}  &  \textbf{0.68} & \textbf{8.8} & \textbf{2.1} & \textbf{0.84}  & \textbf{5.3} & \textbf{2.6} & \textbf{0.83} & \textbf{15.5} & \textbf{2.3} & \textbf{0.82}  \\
        \bottomrule
    \end{tabular}
  \label{tab:rpmg_modelnet}
\end{table*}
\setlength{\tabcolsep}{7.5pt}
\begin{table}[t]
  \centering
  \scriptsize
      \caption{\small Numerical comparisons with non-probabilistic baselines on Pascal3D+ dataset. One model is trained for each category.}
      \vspace{2mm}
    \begin{tabular}{l|cccc|cccc}
    \toprule
    \multicolumn{1}{c|}{\multirow{2}[4]{*}{Methods}} & \multicolumn{4}{c|}{Bicycle}  & \multicolumn{4}{c}{Sofa} \\
\cmidrule{2-9}          & Acc@10$\uparrow$ & Acc@15$\uparrow$ & Acc@20$\uparrow$ & Med.$\downarrow$  & Acc@10$\uparrow$ & Acc@15$\uparrow$ & Acc@20$\uparrow$ & Med.$\downarrow$ \\
    \midrule
    6D                  &      0.218& 0.390 & 0.553 & 18.1  &  0.508& 0.767 & 0.890 & 9.9                      \\
    9D                  &      0.206 & 0.376 & 0.569&18.0   &  0.524 & 0.796 & 0.903&9.2                       \\
    9D-Inf              &      0.380 & 0.533 & 0.699&13.4   &  0.709 & {0.880} & 0.935&{6.7}     \\
    10D                 &      0.239&0.423&0.567&17.9       &  0.502&0.770&0.896&9.8                           \\
    \midrule
    RPMG-6D             &      0.354 & 0.572 &0.706&13.5                       &    0.696 & 0.861 &0.922&{6.7}                           \\
    RPMG-9D             &      0.368&0.574 & {0.718}&{12.5}      &    {0.725} &{0.880} &{0.958}&{6.7} \\
    RPMG-10D            &      {0.400} & {0.577} & 0.713 & 12.9  &    0.693 & 0.871 & 0.939 & 7.0                                 \\
    \midrule
    Rot. Laplace        &  \textbf{0.435}   &   \textbf{0.641}&  \textbf{0.744}&   \textbf{11.2}&   \textbf{0.735}&    \textbf{0.900}&     \textbf{0.964}&  \textbf{6.3} \\
    
    \bottomrule
    \end{tabular}%
  \label{tab:rmpg_pascal}%
\end{table}%

\ree{
\subsection{Qualitative Results}

We visualize the predicted distributions in Figure \ref{fig:visual} with the visualization method in \cite{mohlin2020probabilistic}.
As shown in the figure, the predicted distributions can exhibit high uncertainty when the object has rotational symmetry, leading to near 180$^\circ$ errors (a-c), or the input image is with low resolution (d). Subfigure (e-f) show cases with high certainty and reasonably low errors.
Please refer to the supplementary for more visual results.
\begin{figure}[t]
    \centering
    \begin{tabular}{cccccc}
    \includegraphics[width=0.12\linewidth]{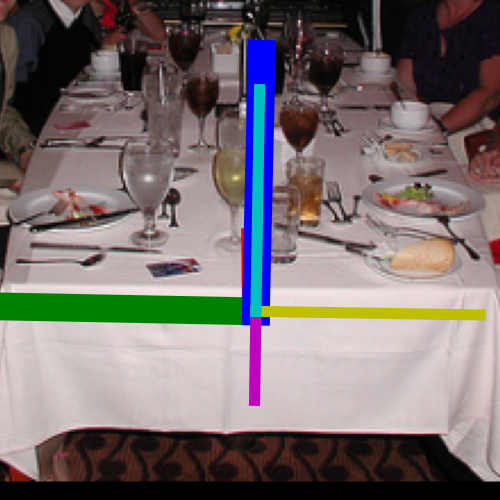}
    &\includegraphics[width=0.12\linewidth]{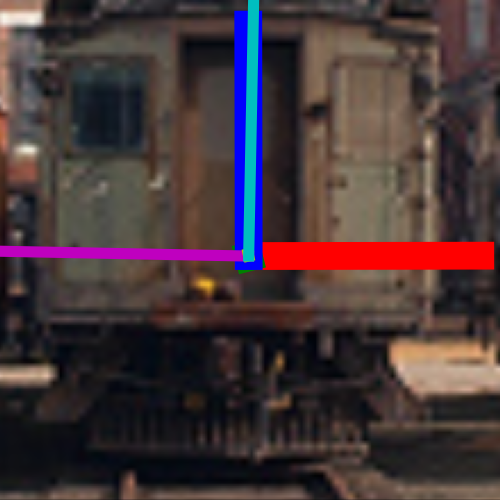}
    &\includegraphics[width=0.12\linewidth]{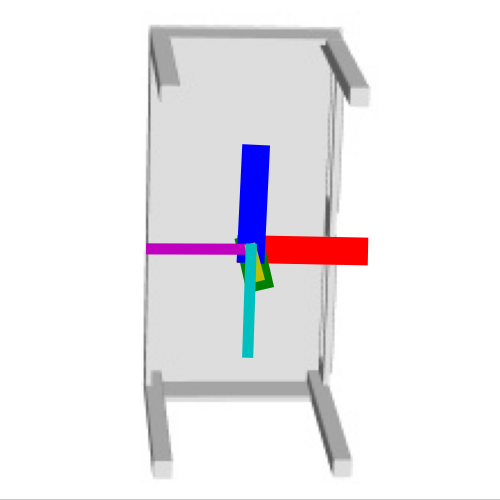}
    &\includegraphics[width=0.12\linewidth]{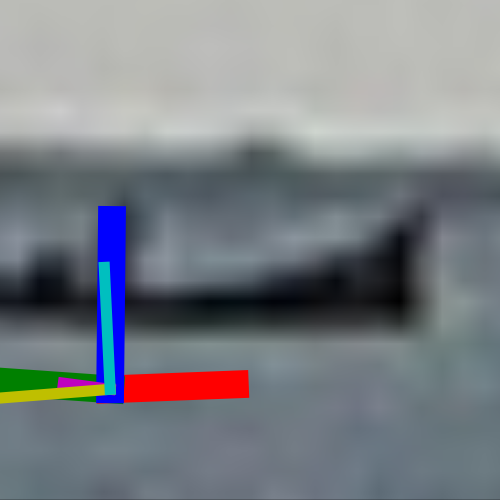}
    &\includegraphics[width=0.12\linewidth]{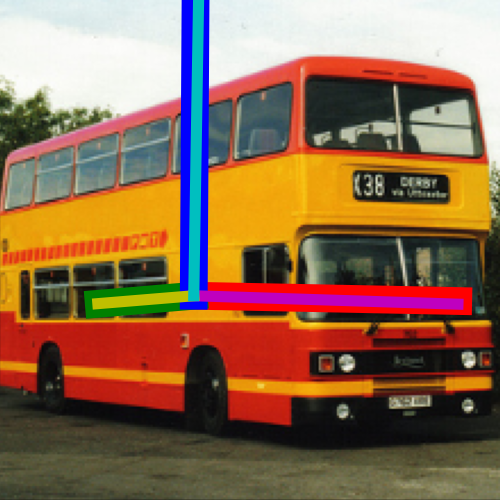}
    &\includegraphics[width=0.12\linewidth]{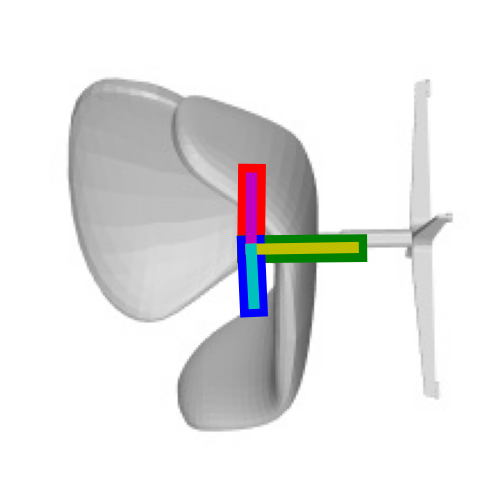}
    \\
    \includegraphics[clip,trim=4.5cm 4cm 4.5cm 1.5cm, width=0.12\linewidth]{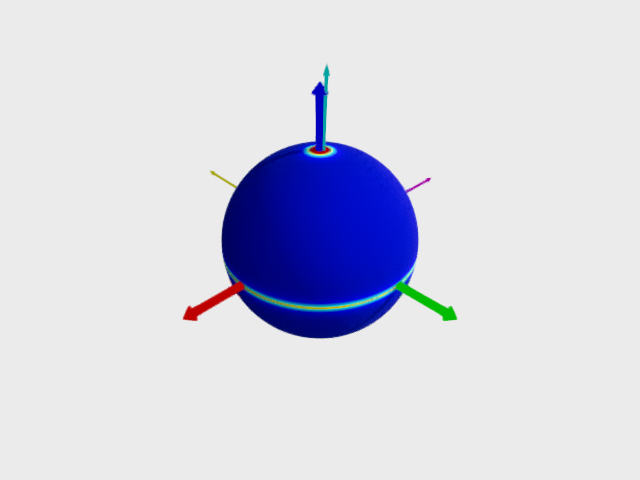}
    &\includegraphics[clip,trim=4.5cm 4cm 4.5cm 1.5cm, width=0.12\linewidth]{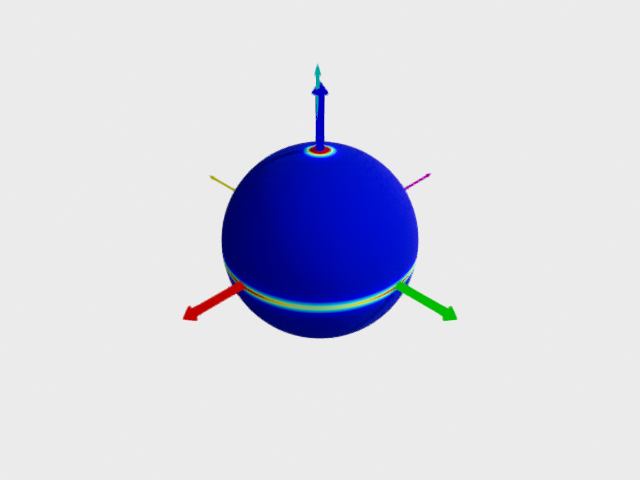}
    &\includegraphics[clip,trim=4.5cm 4cm 4.5cm 1.5cm, width=0.12\linewidth]{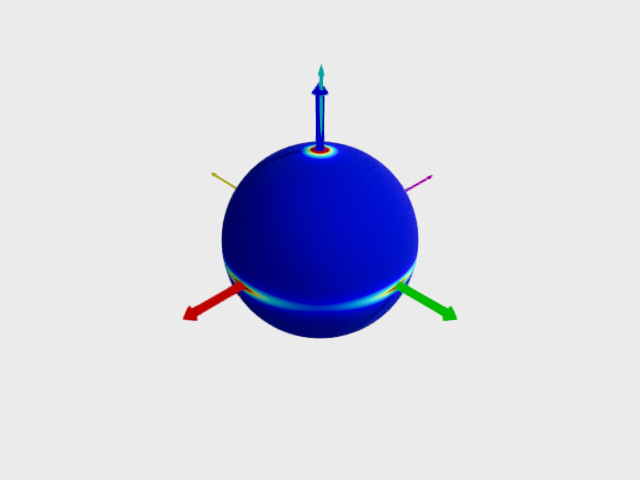}
    &\includegraphics[clip,trim=4.5cm 4cm 4.5cm 1.5cm, width=0.12\linewidth]{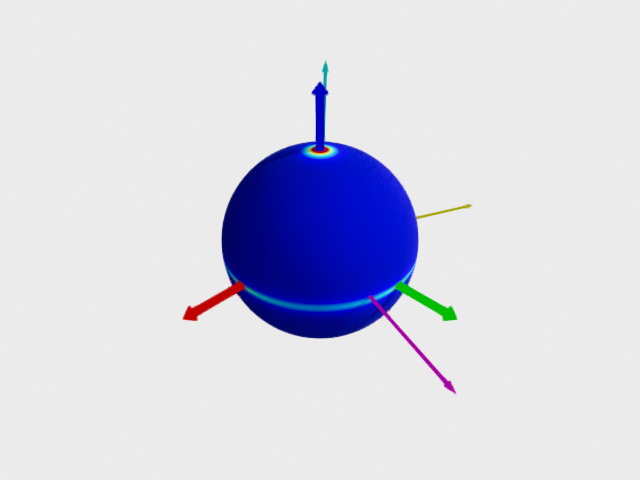}
    &\includegraphics[clip,trim=4.5cm 4cm 4.5cm 1.5cm, width=0.12\linewidth]{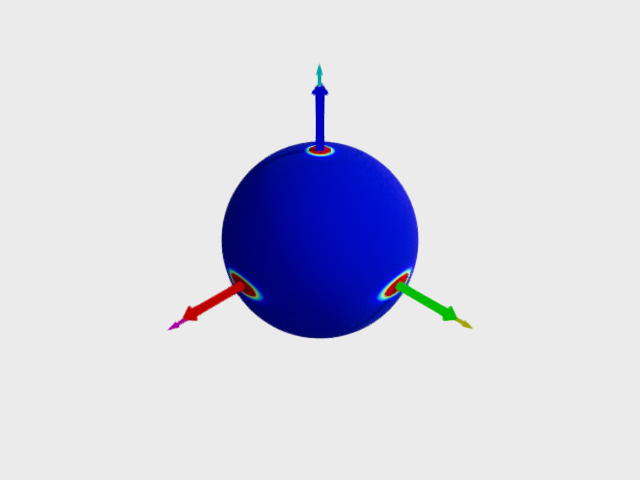}
    &\includegraphics[clip,trim=4.5cm 4cm 4.5cm 1.5cm, width=0.12\linewidth]{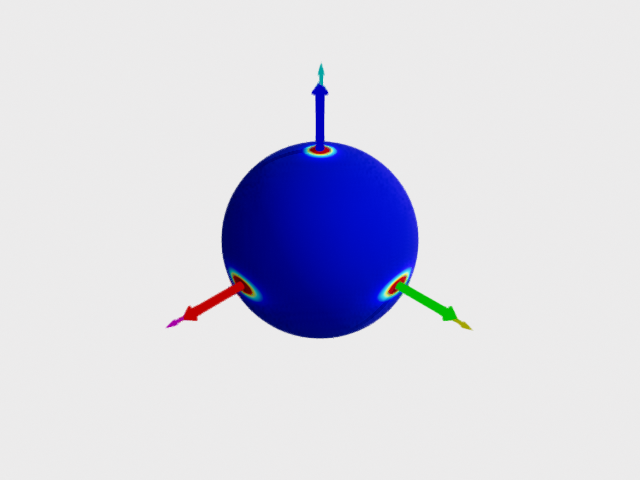}
    \\
    \small{(a)} &\small{(b)} &\small{(c)} &\small{(d)} &\small{(e)} & \small{(f)} 
    \end{tabular}
    \vspace{-3mm}
    \caption{\small \ree{\textbf{Visualizations of the predicted distributions.} The top row displays example images with the projected axes of predictions (thick lines) and ground truths (thin lines) of the object. The bottom row shows the visualization of the corresponding predicted distributions of the image.
    For clarity we have aligned the predicted poses with the standard axes.
    }}
    \vspace{-2mm}
	\label{fig:visual}
\end{figure}

}

\subsection{Implementation Details}

For fair comparisons, we follow the implementation designs of \cite{mohlin2020probabilistic} and merely change the distribution from matrix Fisher distribution to our Rotation Laplace distribution.
For numerical stability, we clip $\operatorname{tr}(\mathbf{S} - \mathbf{A}^T\mathbf{R})$ by $\max(1\mathrm{e}-8, \operatorname{tr}(\mathbf{S} - \mathbf{A}^T\mathbf{R}))$ for Eq.\ref{eq:rl}.
\ree{Please refer to supplementary for more details.}

\subsection{\ree{Comparisons of Rotation Laplace Distribution and Quaternion Laplace Distribution}}

\setlength{\tabcolsep}{3pt}
\begin{table*}[t]
    \caption{\small Numerical comparisons with our proposed Quaternion \& Rotation Laplace distribution and baselines on ModelNet10-SO3 dataset. One model is trained for each category. Quaternion Laplace distribution clearly outperforms Bingham distribution \citep{deng2022deep}.}
    \vspace{2mm}
    \centering
    \scriptsize
    \begin{tabular}{l|ccc|ccc|ccc|ccc}
    \toprule
    \multirow{2}{*}{}&& Chair &&&Sofa&&& Toilet&&&Bed&\\
    \cmidrule{2-13}  
        & Mean$\downarrow$ & Med.$\downarrow$ & Acc@5$\uparrow$ 
        & Mean$\downarrow$ & Med.$\downarrow$ & Acc@5$\uparrow$ 
        & Mean$\downarrow$ & Med.$\downarrow$ & Acc@5$\uparrow$ 
        & Mean$\downarrow$ & Med.$\downarrow$ & Acc@5$\uparrow$ 
        \\
        \midrule   
        \cite{deng2022deep} & 16.5 & 7.2 & 0.31 & 16.5 & 4.9 & 0.52 & 9.6 & 4.2 & 0.59 & 22.0 & 5.1 & 0.49\\
        \cite{mohlin2020probabilistic} & 10.8 & 4.6 & 0.55 & 11.1 & 3.5 & 0.70 & 6.4 & 3.5 & 0.70 & 16.0 & 3.8 & 0.66 \\
        \midrule
        Quat. Laplace & 12.6 & 5.2 & 0.49 & 13.1 & 3.7 & 0.67 & 5.9 & 3.4 & 0.69 & 17.7 & 3.4 & 0.69\\
        Rot. Laplace & \textbf{9.7}  & \textbf{3.5}  &  \textbf{0.68} & \textbf{8.8} & \textbf{2.1} & \textbf{0.84}  & \textbf{5.3} & \textbf{2.6} & \textbf{0.83} & \textbf{15.5} & \textbf{2.3} & \textbf{0.82}  \\
        \bottomrule
    \end{tabular}
  \label{tab:ablation}
\end{table*}

For the completeness of experiments, we also compare our proposed Quaternion Laplace distribution and Bingham distribution and report the performance in Table \ref{tab:ablation}. 
As shown in the table, Quaternion Laplace distribution consistently achieves superior performance than its competitor, which validates the effectiveness of our Laplace-inspired derivations.
However, its rotation error is in general larger than Rotation Laplace distribution, since its rotation representation, quaternion, is not a continuous representation, as pointed in \cite{zhou2019continuity}, thus leading to inferior performance.

\section{Conclusion}
In this paper, we draw inspiration from multivariant Laplace distribution and derive two novel distributions for probabilistic rotation regression, namely, Rotation Laplace distribution for rotation matrices on $\SO$ and Quaternion Laplace distribution for quaternions on $\mathcal{S}^3$. 
Extensive comparisons with both probabilistic and non-probabilistic baselines on ModelNet10-SO3 and Pascal3D+ datasets demonstrate the effectiveness and advantages of our proposed distributions.

\section*{Acknowledgement}
We thank Haoran Liu from Peking University for the help in experiments.
This work is supported in part by National Key R\&D Program of China 2022ZD0160801.

\bibliography{main}
\bibliographystyle{iclr2023_conference}

\appendix

\section{Notations and Definitions}
\label{sec:notation}

\subsection{Notations for Lie Algebra and Exponential \& Logarithm Map}

\ree{This paper follows the common notations for Lie algebra and exponential \& logarithm map \citep{lee2018bayesian,teed2021tangent,sola2018micro}}.

The three-dimensional special orthogonal group $\SO$ is defined as 
{\footnotesize \begin{equation*}
    \SO = \{ \mathbf{R} \in \mathbb{R}^{3\times3} | \mathbf{RR}^T = \mathbf{I}, \det{(\mathbf{R})} = 1 \}.
\end{equation*}
}The Lie algebra of $\SO$, denoted by $\mathfrak{so}(3)$, is the tangent space of $\SO$ at $\mathbf{I}$, given by
{\footnotesize\begin{equation*}
    \mathfrak{so}(3) = \{ \boldsymbol{\Phi} \in \mathbb{R}^{3\times 3}| \boldsymbol{\Phi} = -\boldsymbol{\Phi}^T\}.
\end{equation*}
}$\mathfrak{so}(3)$ is identified with $(\mathbb{R}^3, \times)$ by the \textit{hat} $\wedge$ map and the \textit{vee} $\vee$ map defined as
{\small\begin{equation*}
    \mathfrak{s o}(3) \ni\left[\begin{array}{ccc}
0 & -\phi_z & \phi_y \\
\phi_z & 0 & -\phi_x \\
-\phi_y & \phi_x & 0
\end{array}\right] 
\stackrel{\text { vee } \vee} {\underset{\text { hat }\wedge}{\rightleftarrows}}
\left[\begin{array}{l}\phi_x \\\phi_y \\\phi_z\end{array}\right] \in \mathbb{R}^3
\end{equation*}}

The exponential map, taking skew symmetric matrices to rotation matrices is given by
{\footnotesize\begin{equation*}
    \exp(\hat{\boldsymbol{\phi}})=\sum_{k=0}^{\infty}{\frac{\hat{\boldsymbol{\phi}}^k}{k!}}=\mathbf{I}+\frac{\sin{\theta}}{\theta}{\hat{\boldsymbol{\phi}}}+\frac{1-\cos{\theta}}{\theta^2}{\hat{\boldsymbol{\phi}}^2},
\end{equation*}
}where $\theta=\left\lVert {\boldsymbol{\phi}} \right\rVert$.
The exponential map can be inverted by the logarithm map, going from $\SO$ to $\mathfrak{so}(3)$ as
{\footnotesize\begin{equation*}
    \log(\mathbf{R})=\frac{\theta}{2\sin{\theta}}(\mathbf{R}-\mathbf{R}^T),
\end{equation*}
}where $\theta=\arccos{\frac{\operatorname{tr}(\mathbf{R})-1}{2}}$.

\ree{
\subsection{Haar Measure}
\label{sec:haar}
To evaluate the normalization factors and therefore the probability density functions, the measure $\mathrm{d}\mathbf{R}$ on $\SO$ needs to be defined. For the Lie group $\SO$, the commonly used bi-invariant measure is referred to as Haar measure \citep{haar1933massbegriff,james1999history}. Haar measure is unique up to scalar multiples \citep{chirikjian2000engineering} and we follow the common practice \citep{mohlin2020probabilistic,lee2018bayesian} that the Haar measure $\mathrm{d}\mathbf{R}$ is scaled such that $\int_{\SO} \mathrm{d} \mathbf{R}=1$.
}

\ree{\section{More Analysis on Gradient w.r.t. Outliers}

\begin{figure}[t]
    \centering
    \includegraphics[width=0.48\linewidth]{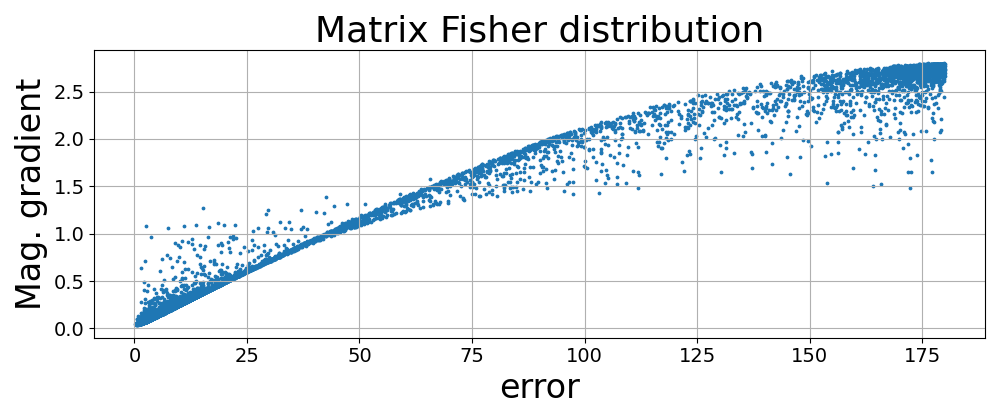}
    \includegraphics[width=0.48\linewidth]{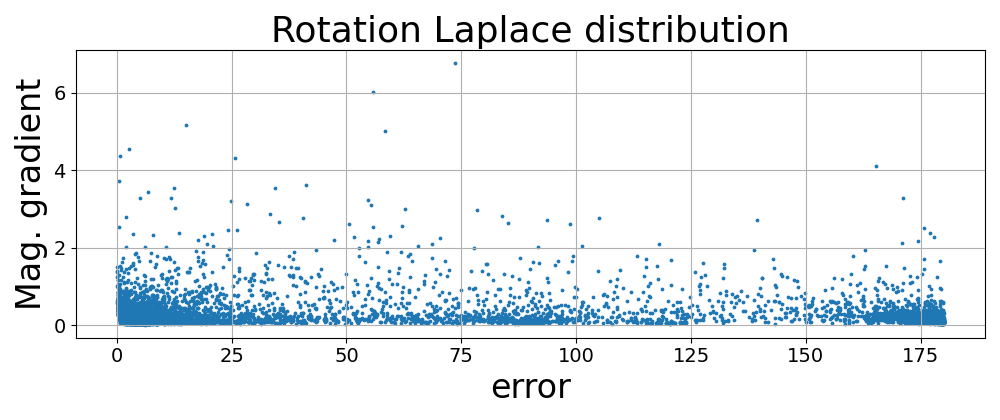}
    \vspace{-4mm}
    \caption{\ree{Visualization of the gradient magnituide $\|\partial\mathcal{L}/\partial\text{(distribution param.)}\|$ w.r.t. the prediction errors  on ModelNet10-SO3 dataset after convergence.}}
	\label{fig:grad_scatter}
\end{figure}

In the task of rotation regression, predictions with really large errors (e.g., 180$^\circ$ error) are fairly observed due to rotational ambiguity or lack of discriminate visual features. Properly handling these outliers during training is one of the keys to success in probabilistic modeling of rotations. 

In Figure \ref{fig:grad_scatter}, for matrix Fisher distribution and Rotation Laplace distribution, we visualize the gradient magnitudes $\|\partial\mathcal{L}/\partial\text{(distribution param.)}\|$ w.r.t. the prediction errors on ModelNet10-SO3 dataset after convergence, where each point is a data point in the test set.
As shown in the figure, for matrix Fisher distribution, predictions with larger errors clearly yield larger gradient magnitudes, and those with near 180$^\circ$ errors (the outliers) have the biggest impact. 
Given that outliers may be inevitable and hard to be fixed, they may severely disturb the training process and the sensitivity to outliers can result in a poor fit \citep{murphy2012machine,nair2022maximum}. In contrast, for our Rotation Laplace distribution, the gradient magnitudes are not affected by the prediction errors much, leading to a stable learning process.

Consistent results can also be seen in Figure \ref{fig:teaser} of the main paper, where the red dots illustrate the \textit{sum} of the gradient magnitude over the population within an interval of prediction errors. We argue that, at convergence, the gradient should focus more on the large population with low errors rather than fixing the unavoidable large errors.
}

\ree{\section{Uncertainty quantification measured by distribution entropy}}
\label{sec:entropy}

\begin{figure}[t]
    \centering
    \includegraphics[width=0.48\linewidth]{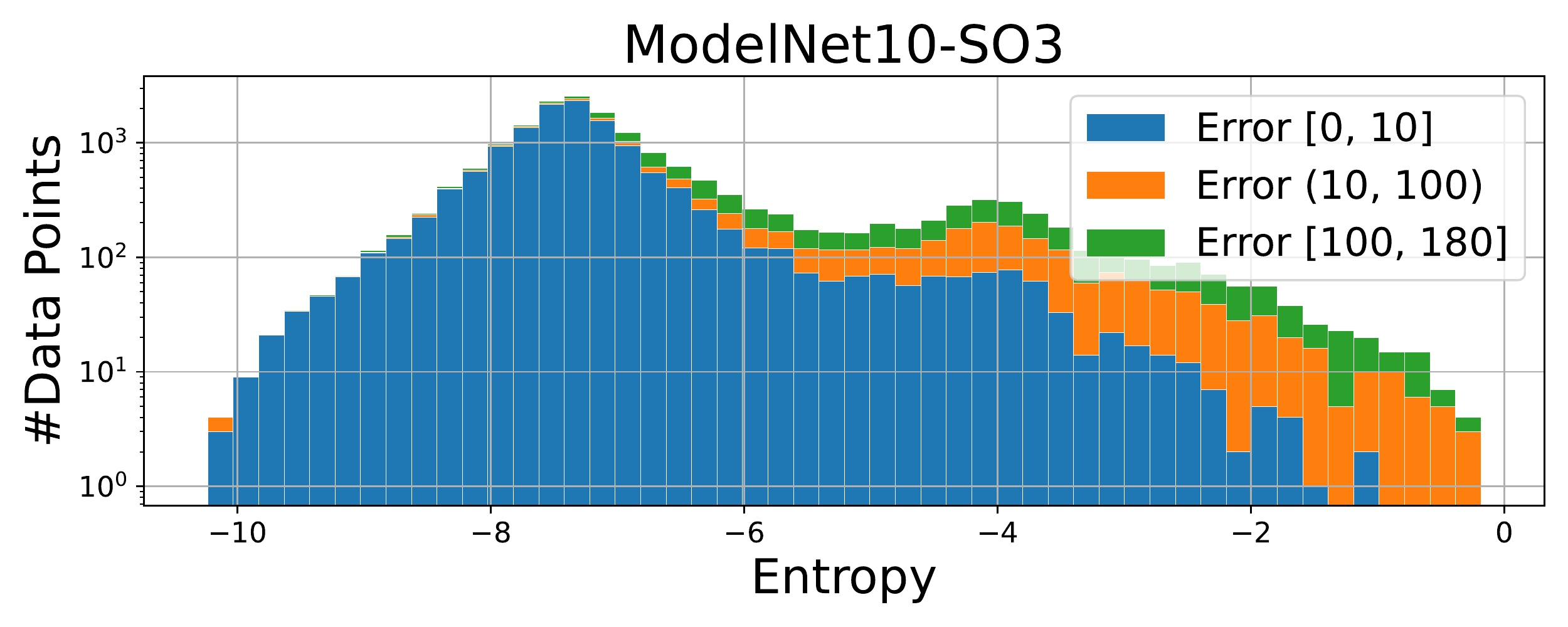}
    \hspace{3mm}
    \includegraphics[width=0.48\linewidth]{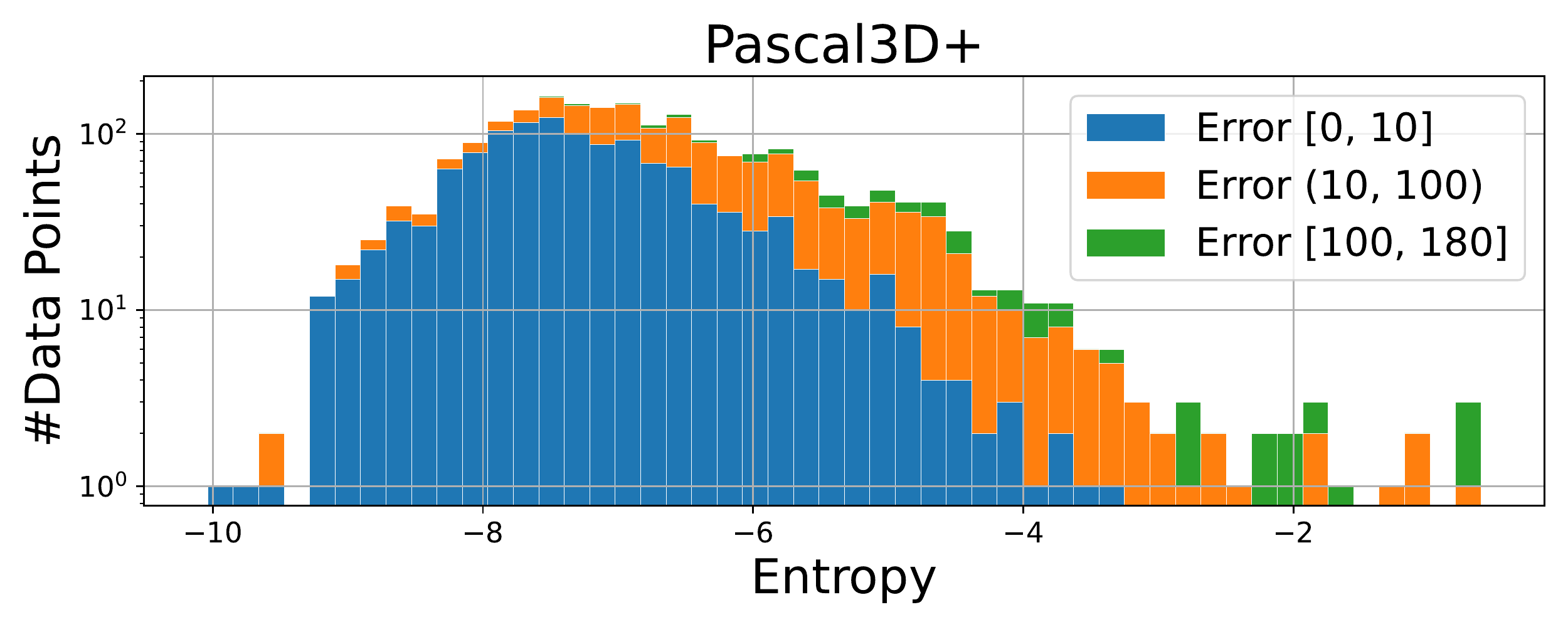}
    \vspace{-3mm}
    \caption{\textbf{Visualization of the indication ability of the distribution entropy w.r.t. the performance.} The horizontal axis is the distribution entropy and the vertical axis is the number of data points (in log scale), color coded by the errors (in degrees). The experiments are done on the test set of ModelNet10-SO3 dataset (left) and Pascal3D+ dataset (right).}
    \vspace{-2mm}
	\label{fig:uncertain}
\end{figure}

Probabilistic modeling of rotation naturally models the uncertainty information of rotation regression. \cite{yin2022fishermatch} proposes to use the \textit{entropy} of the distribution as an uncertainty measure. We adopt it as the uncertainty indicator of Rotation Laplace distribution and plot the relationship between the error of the prediction and the corresponding distribution entropy on the testset of ModelNet10-SO3 and Pascal3D+ datasets in Figure \ref{fig:uncertain}.
As shown in the figure, 
predictions with lower entropies (i.e., lower uncertainty) clearly achieve higher accuracy than predictions with large entropies, demonstrating the ability of uncertainty estimation of our Rotation Laplace distribution.
We compute the entropy via discretization, \ree{
where $\SO$ space is quantized into a finite set of equivolumetric girds $\mathcal{G}=\{\mathbf{R}|\mathbf{R}\in \SO\}$, and
}
\begin{equation*}
\scriptsize
    H\left(p\right)=-\int_{\SO} p \log p \mathrm{d} \mathbf{R}
    \approx 
    \ree{-\sum_{\mathbf{R}_i \in \mathcal{G}} p_i \log p_i \Delta \mathbf{R}_i}
\end{equation*}
\ree{We use about 0.3M grids to discretize $\SO$ space.}

\ree{
\section{Effect of Different Numbers of Discretization Samples}

To compute the normalization factor of our distribution, we discretize $\SO$ space into a finite set of equivolumetric grids using Hopf fibration. Here we show the comparison on different numbers of samples. We experiment with ModelNet10-SO3 toilet dataset on a single 3090 GPU.

As stated in Table \ref{tab:discrete}, the approximation with too few samples leads to inferior performance, and increasing the number of samples yields  a better performance at the cost of a longer runtime. The performance improvement saturates when the number of samples is sufficient. We choose to use 37k samples in our experiments.
}

{\setlength{\tabcolsep}{8pt}
\begin{table}[t]
  \centering
  \caption{Comparison on different numbers of discretization samples. The experiment is done on ModelNet10-SO3 toilet dataset on a single 3090 GPU.}
  \fontsize{8}{9.6}\selectfont
    \begin{tabular}{ccccc}
    \toprule
    Number of samples & Training time (min)$\downarrow$ & Mean($^\circ$)$\downarrow$ & Med.($^\circ$)$\downarrow$ & Acc@5$^\circ$$\uparrow$  \\
    \midrule
    0.6k  & 122   & 5.8   & 2.8   & 0.80 \\
    4.6k  & 122   & 5.3   & 2.6   & 0.82 \\
    37k   & 136   & 5.3   & 2.6   & 0.83 \\
    295k  & 168   & 5.3   & 2.5   & 0.82 \\
    \bottomrule
    \end{tabular}%
  \label{tab:discrete}%
\end{table}%
}

\section{Additional Results}

\subsection{Additional Numerical Results}
\label{sec:supp_results}

Table \ref{tab:supp_modelnet} and \ref{tab:supp_pascal} extend the results on ModelNet10-SO3 dataset and Pascal3D+ dataset in the main paper and show the per-category results.  
Our prediction with Rotation Laplace distribution is at or near state-of-the-art on many categories. The numbers for  baselines are quoted from \cite{murphy2021implicit}.

\begin{table}[t]
\centering
\fontsize{7.5}{10}\selectfont
\caption{Per-category results ModelNet10-SO3 dataset.}
\resizebox{0.99\textwidth}{!}{
\begin{tabular}{@{}ll@{\hskip 0.3in}c@{\hskip 16pt}cccccccccc}
\toprule
       &                                      & {avg.}     & {bathtub}  & {bed}      & {chair}    & {desk}     & {dresser}  & {tv}       & {n. stand} & {sofa}     & {table}    & {toilet}   \\
\midrule
\multirow{4}{*}{Acc@15\textdegree$\uparrow$}                   & \citet{deng2022deep}            & 0.562      & 0.140      & 0.788      & 0.800      & 0.345      & 0.563      & 0.708      & 0.279      & 0.733      & 0.440      & 0.832      \\      
       & \citet{prokudin2018deep}        & 0.456      & 0.114      & 0.822      & 0.662      & 0.023      & 0.406      & 0.704      & 0.187      & 0.590      & 0.108      &  0.946 \\
       & \citet{mohlin2020probabilistic} &  0.693 &  0.322 & \bgl 0.882 & \bgl 0.881 &  0.536 &  0.682 &  0.790 &  0.516 & \bgl 0.919 &  0.446 & \bgl 0.957 \\      
       & \citet{murphy2021implicit}     & \bgl 0.719 & \bgd 0.392 &  0.877 &  0.874 & \bgl 0.615 & \bgl 0.687 & \bgl 0.799 & \bgl 0.567 &  0.914 & \bgd 0.523 & 0.945      \\
       & Rotation Laplace               & \bgd 0.741  & \bgl 0.390  & \bgd 0.902  & \bgd 0.909  & \bgd 0.644  & \bgd 0.722  & \bgd 0.815  & \bgd 0.590  & \bgd 0.934  & \bgl 0.521  & \bgd 0.977  \\
\midrule
\multirow{4}{*}{Acc@30\textdegree$\uparrow$}                   & \citet{deng2022deep}            & 0.694      & 0.325      & 0.880      & 0.908      & 0.556      & 0.649      & 0.807      & 0.466      & 0.902      & 0.485      & 0.958      \\      
       & \citet{prokudin2018deep}        & 0.528      & 0.175      & 0.847      & 0.777      & 0.061      & 0.500      & 0.788      & 0.306      & 0.673      & 0.183      & 0.972 \\
       & \citet{mohlin2020probabilistic} & \bgl 0.757 &  0.403 & \bgl 0.908 & \bgl 0.935 & \bgl 0.674 & \bgl 0.739 & \bgl 0.863 & \bgl 0.614 & \bgl 0.944 & 0.511 & \bgl 0.981 \\      
       & \citet{murphy2021implicit}     & 0.735 & \bgl 0.410 &  0.883 &  0.917 & 0.629 & 0.688 &  0.832 &  0.570 &  0.921 & \bgl 0.531 & 0.967      \\
       & Rotation Laplace               & \bgd  0.770  & \bgd 0.430 & \bgd 0.911  & \bgd 0.940 & \bgd 0.698  & \bgd 0.751  & \bgd 0.869 & \bgd 0.625  & \bgd 0.946  & \bgd 0.541  & \bgd 0.986  \\
\midrule
\multirow{4}{*}{\shortstack{Median \\ Error ($^\circ$)$\downarrow$}} & \citet{deng2022deep}            & 32.6       & 147.8      & 9.2        & 8.3        & 25.0       & 11.9       & 9.8        & 36.9       & 10.0       & 58.6       & 8.5        \\
       & \citet{prokudin2018deep}        & 49.3       &  122.8 & \bgl 3.6   & 9.6        & 117.2      & 29.9       & 6.7        & 73.0       & 10.4       & 115.5      &  4.1   \\
       & \citet{mohlin2020probabilistic} & \bgl 17.1  & \bgl 89.1  &  4.4   & \bgl 5.2   &  13.0  &  6.3   &  5.8   &  13.5  & \bgl 4.0   &  25.8  & \bgl 4.0   \\      
       & \citet{murphy2021implicit}     &  21.5  & 161.0      &  4.4   &  5.5   & \bgl 7.1   & \bgl 5.5   & \bgl 5.7   & \bgl 7.5   &  4.1   & \bgd 9.0   & 4.8        \\
       & Rotation Laplace               & \bgd 12.2  & \bgd 85.1  & \bgd 2.3  & \bgd 3.4 & \bgd 5.4 & \bgd 2.7  & \bgd 3.7 & \bgd 4.8  & \bgd 2.1  & \bgl 9.6  & \bgd 2.5  \\
\bottomrule
\end{tabular}
}
\label{tab:supp_modelnet}
\end{table}

\begin{table*}[t]
\caption{Per-category results on Pascal3D+ dataset.}
  \centering
  \fontsize{7.5}{10}\selectfont
  \resizebox{0.99\textwidth}{!}{
    \begin{tabular}{@{}ll@{\hskip 0.3in}c@{\hskip 16pt}cccccccccccc}
      &                                      & {avg.}     & {aero}  & {bike}      & {boat}    & {bottle}     & {bus}  & {car}       & {chair} & {table}     & {mbike}    & {sofa} & {train}  & {tv} \\
    \midrule
    \multirow{6}{*}{Acc@30\textdegree  $\uparrow$} 
        & \cite{tulsiani2015viewpoints}         & 0.808 & 0.81  & 0.77  &  0.59 & 0.93 &  \bgd {0.98} & 0.89 & 0.80 & 0.62 & \bgl 0.88 & 0.82 & 0.80 & 0.80 \\
        & \cite{mahendran2018mixed}             & \bgl 0.859 & 0.87 & 0.81 &  \bgd {0.64} &  \bgd {0.96} &  0.97 &  0.95 &  \bgd {0.92} & 0.67 & 0.85 & \bgd 0.97 & \bgl 0.82 & \bgl 0.88 \\
        & \cite{liao2019spherical}              & 0.819 & 0.82 & 0.77 & 0.55 & 0.93 & 0.95 & 0.94 & 0.85 & 0.61 & 0.80 &  0.95 &  \bgd {0.83} & 0.82     \\   
        & \cite{prokudin2018deep}               &  0.838 &   0.89 &  0.83 & 0.46 &  \bgd {0.96} & 0.93 & 0.90 & 0.80 & \bgl 0.76 & \bgd 0.90 &  {0.90} & \bgl 0.82 &  \bgd {0.91} \\
        & \cite{mohlin2020probabilistic}        & 0.825 &  \bgd {0.90} & \bgl 0.85 & 0.57 &  0.94 & 0.95 &  \bgd {0.96} & 0.78 & 0.62 & 0.87 & 0.85 & 0.77 & 0.84 \\
        & \cite{murphy2021implicit}             & 0.837 & 0.81 & \bgl 0.85 & 0.56 & 0.93 & 0.95 & 0.94 &  0.87 &  \bgd {0.78} & 0.85 & 0.88 & 0.78 & 0.86  \\
        & Rot. Laplace (Ours)                   & \bgd {0.876} & \bgd {0.90}  & \bgd {0.90}  & \bgl 0.60 & \bgd {0.96} & \bgd {0.98} & \bgd {0.96} & \bgl 0.91  & \bgl 0.76  & \bgl 0.88 & \bgd 0.97 &  0.81 & \bgl 0.88  \\
       
    \midrule
    \multirow{6}{*}{\shortstack{Median\\error ($^\circ$)  $\downarrow$}}
        & \cite{tulsiani2015viewpoints} & 13.6 & 13.8 & 17.7 & \bgl 21.3 & 12.9 & 5.8 & 9.1 & 14.8 & 15.2 & 14.7 & 13.7 & 8.7 & 15.4 \\
        & \cite{mahendran2018mixed} &  10.1 &  \bgd {8.5} &  14.8 &  \bgd {20.5} &  7.0 &  3.1 &  5.1 & \bgl 9.3 &  11.3 & 14.2 & 10.2 & \bgl 5.6 & \bgl 11.7 \\
        & \cite{liao2019spherical}            & 13.0 & 13.0 & 16.4 & 29.1 & 10.3 & 4.8 & 6.8 & 11.6 & 12.0 & 17.1 & 12.3 & 8.6 & 14.3       \\
        & \cite{prokudin2018deep} & 12.2 &  9.7 & 15.5 & 45.6 &  \bgd {5.4} & \bgl 2.9 &  \bgd {4.5} & 13.1 & 12.6 &  \bgd {11.8} & \bgl 9.1 &  \bgd {4.3} & 12.0 \\
        & \cite{mohlin2020probabilistic} & 11.5 & 10.1 & 15.6 & 24.3 & 7.8 & 3.3 & 5.3 & 13.5 & 12.5 &  12.9 & 13.8 & 7.4 & \bgl 11.7\\
        & \cite{murphy2021implicit}                        &  10.3 & 10.8 & \bgl 12.9 & 23.4 & 8.8 & 3.4 & 5.3 &  10.0 &  \bgd {7.3} & 13.6 &  9.5  & 6.4  & 12.3 \\
        & Rot. Laplace (Ours)             &  \bgd {9.4}  & \bgl 8.6 & \bgd {11.7} &  21.8 & \bgl 6.9 & \bgd {2.8} & \bgl 4.8 & \bgd {7.9} & \bgl 9.1 & \bgl 12.2 & \bgd {8.1} & 6.9 & \bgd {11.6}    \\
      
      \bottomrule
    \end{tabular}
  }
  \label{tab:supp_pascal}
  \end{table*}

\ree{
\subsection{Additional Visual Results}

We show additional visual results on ModelNet10-SO3 dataset in Figure \ref{fig:vis_modelnet} and on Pascal3D+ dataset in Figure \ref{fig:vis_pascal}. As shown in the figures, our distribution provides rich information about the rotation estimations.

To visualize the predicted distributions, we adopt two popular visualization methods used in \cite{mohlin2020probabilistic} and \cite{murphy2021implicit}. 
The visualization in \cite{mohlin2020probabilistic} is achieved by summing the three marginal distributions over the standard basis of $\mathbb{R}^3$ and displaying them on the sphere with color coding. \cite{murphy2021implicit} introduces a new visualization method based on discretization over $\SO$. It projects a great circle of points on $\SO$ to each point on the 2-sphere, and then uses the color wheel to indicate the location on the great circle. The probability density is shown by the size of the points on the plot. See the corresponding papers for more details.

\begin{figure}[t]
    \centering
    \begin{tabular}{cccccc}
    \includegraphics[height=1.6cm]{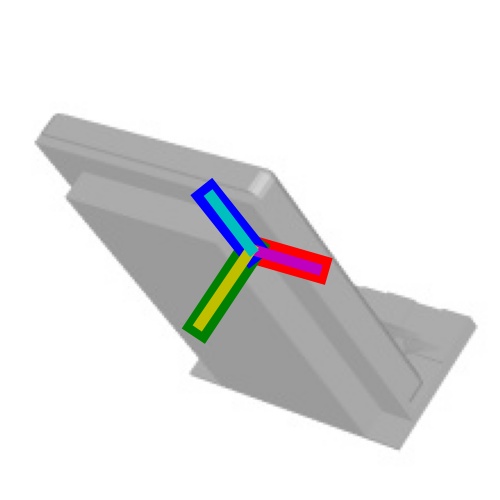}
    &\includegraphics[clip,trim=4.5cm 4cm 4.5cm 1.5cm, height=1.6cm]{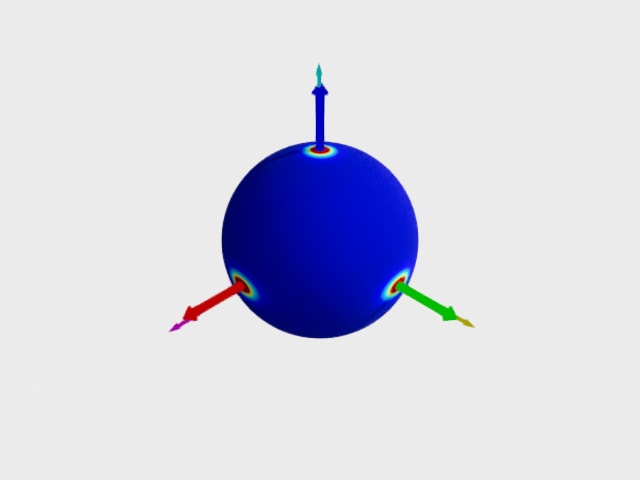}
    &\includegraphics[height=1.2cm]{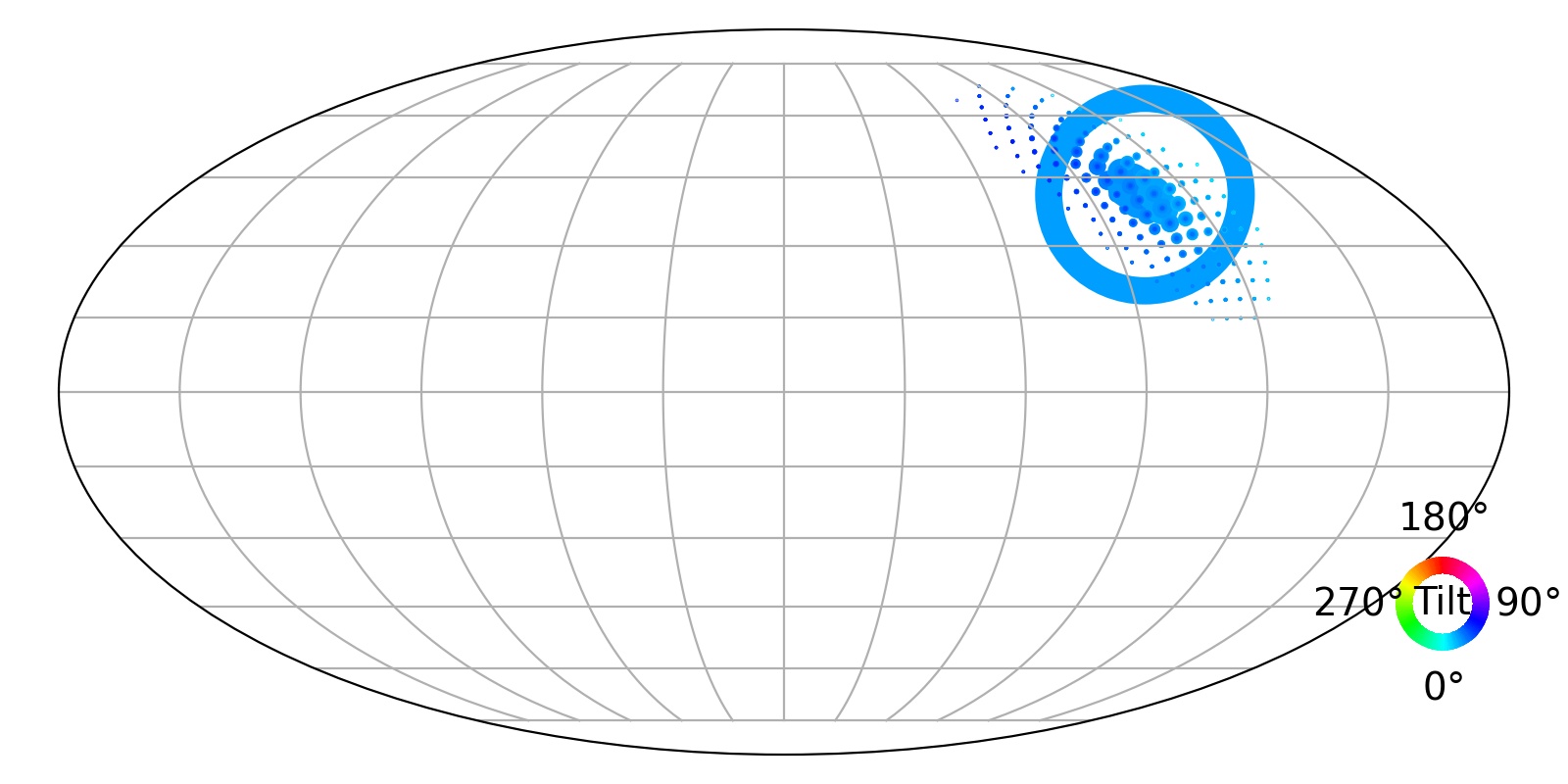}
    \hspace{4mm}
    &\includegraphics[height=1.6cm]{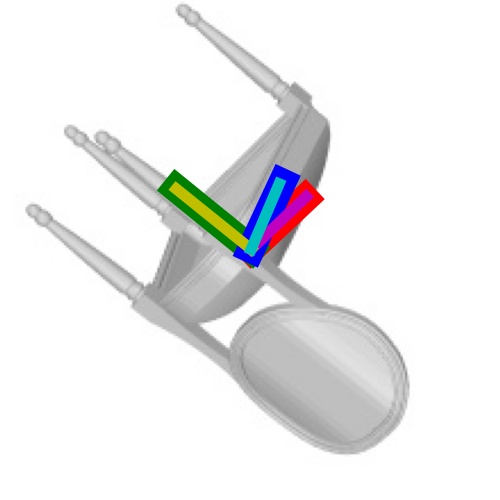}
    &\includegraphics[clip,trim=4.5cm 4cm 4.5cm 1.5cm, height=1.6cm]{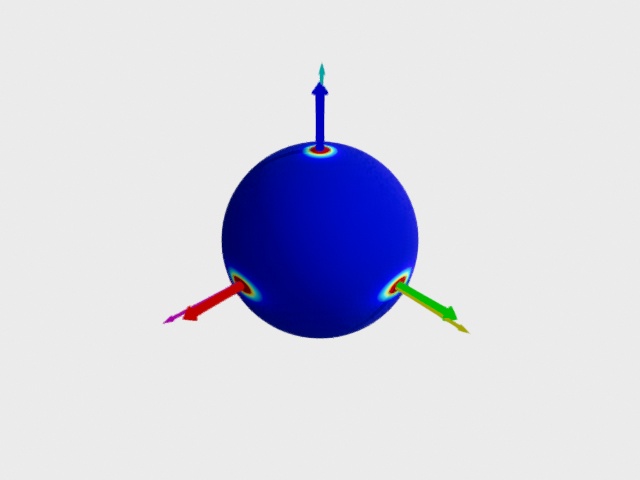}
    &\includegraphics[height=1.2cm]{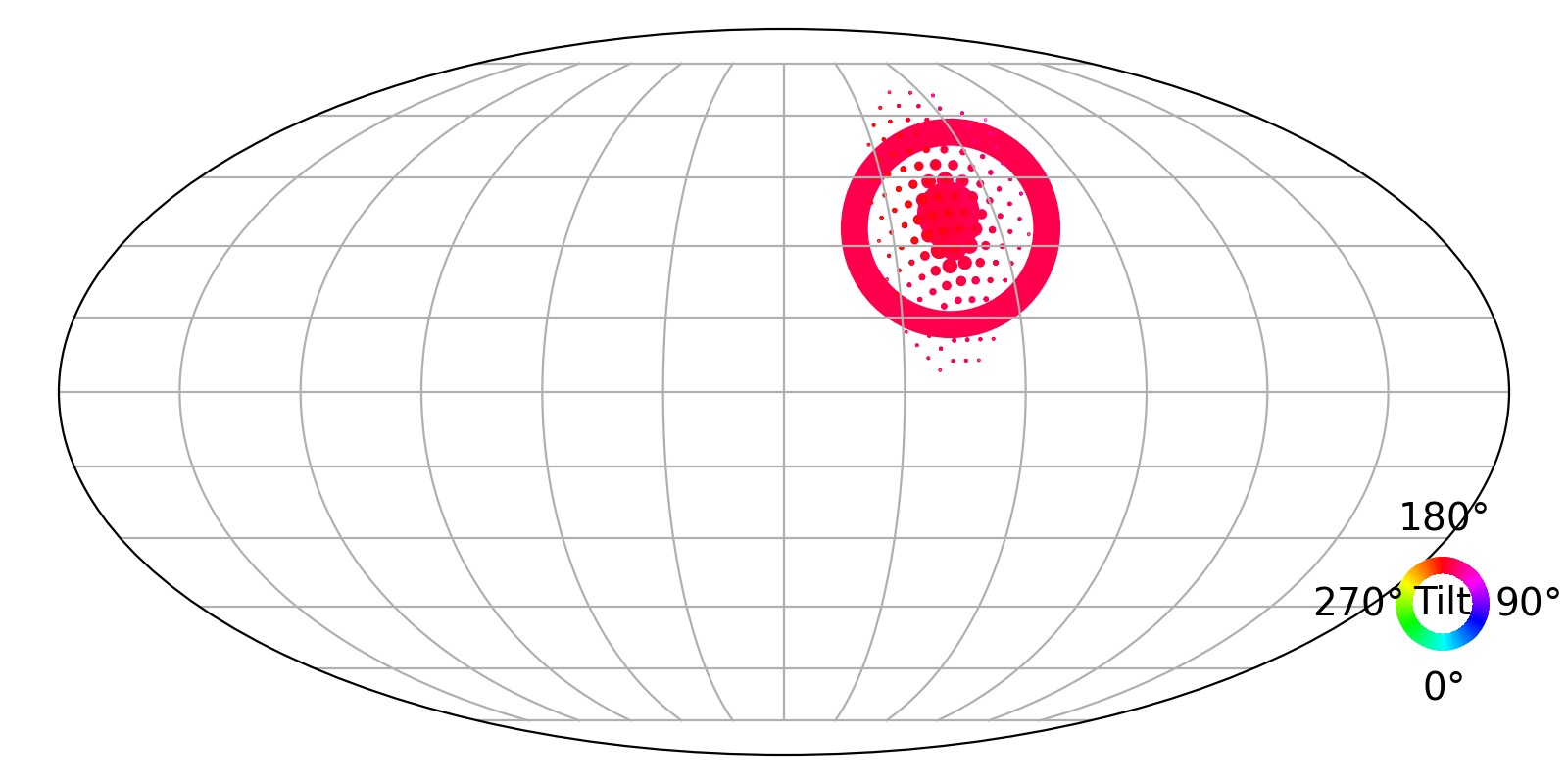}\\
    
    \includegraphics[height=1.6cm]{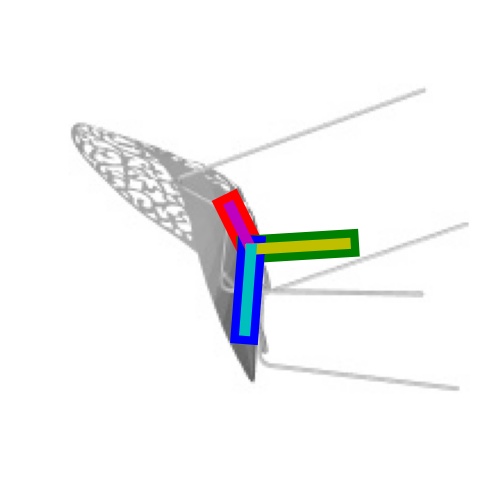}
    &\includegraphics[clip,trim=4.5cm 4cm 4.5cm 1.5cm, height=1.6cm]{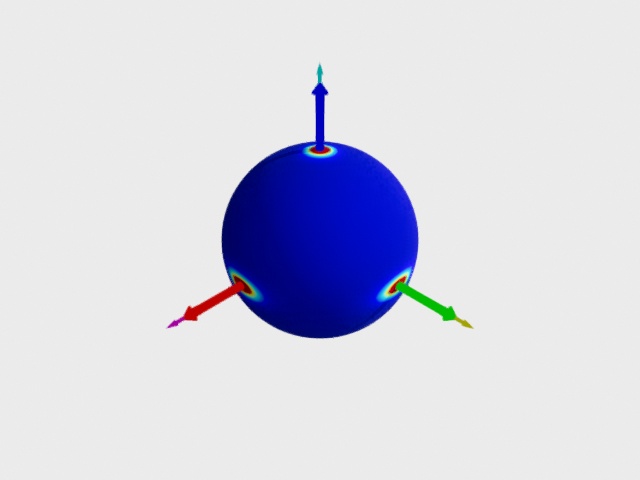}
    &\includegraphics[height=1.2cm]{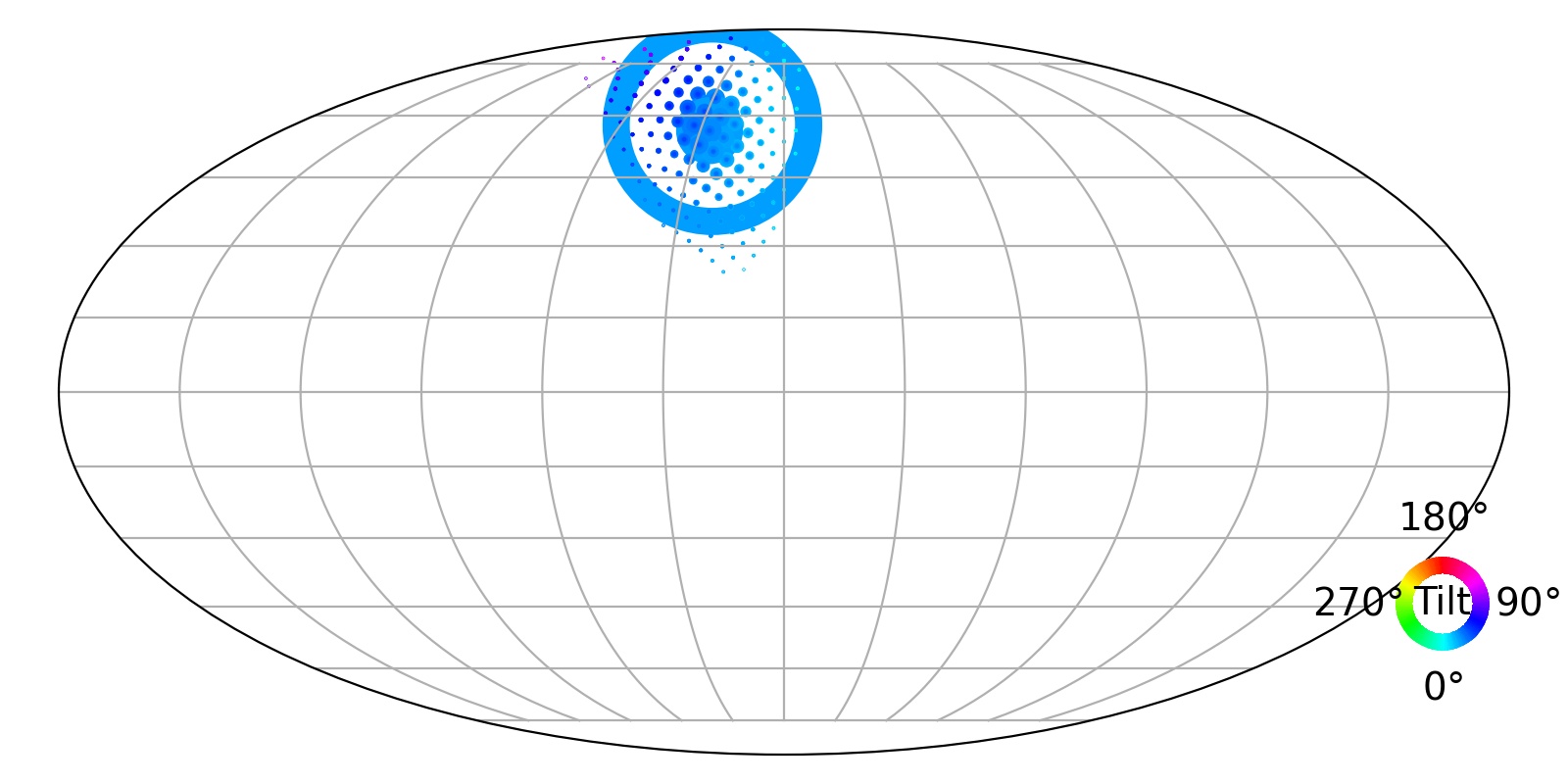}
    \hspace{4mm}
    &\includegraphics[height=1.6cm]{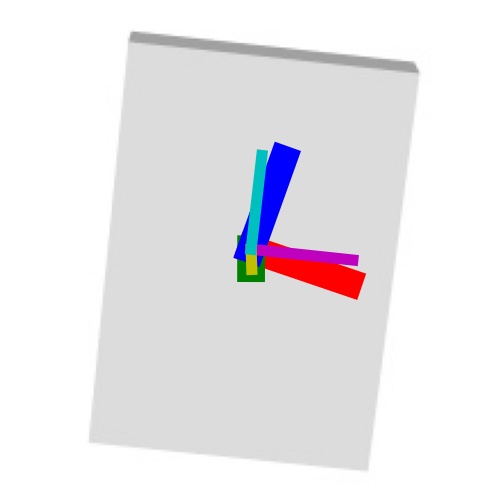}
    &\includegraphics[clip,trim=4.5cm 4cm 4.5cm 1.5cm, height=1.6cm]{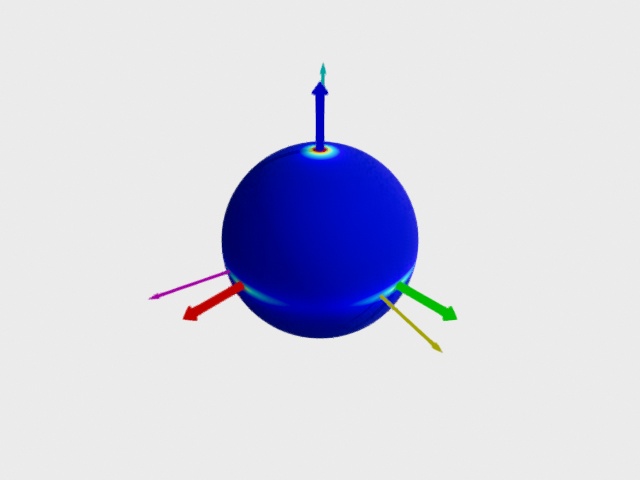}
    &\includegraphics[height=1.2cm]{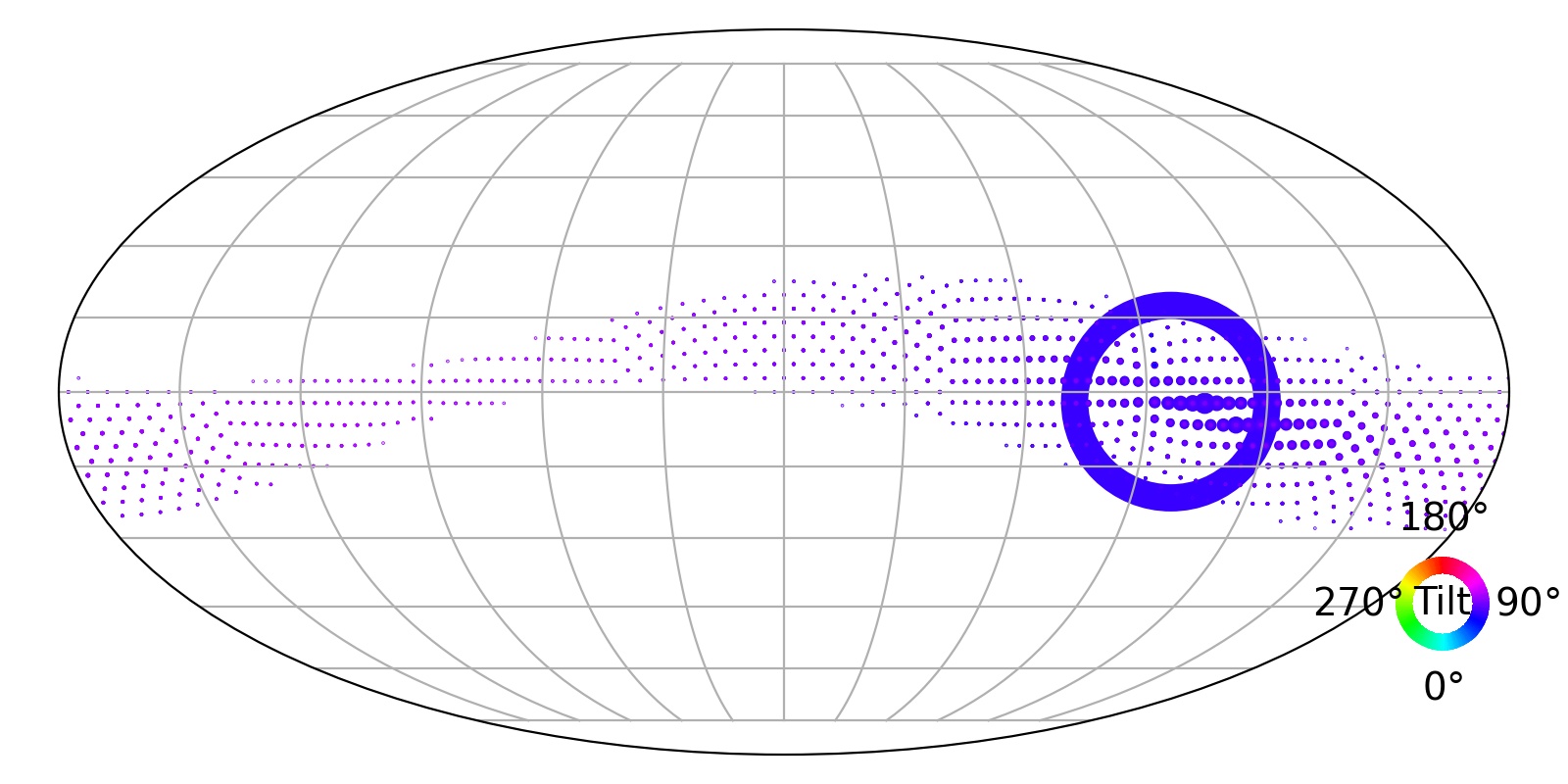}\\
    
    \includegraphics[height=1.6cm]{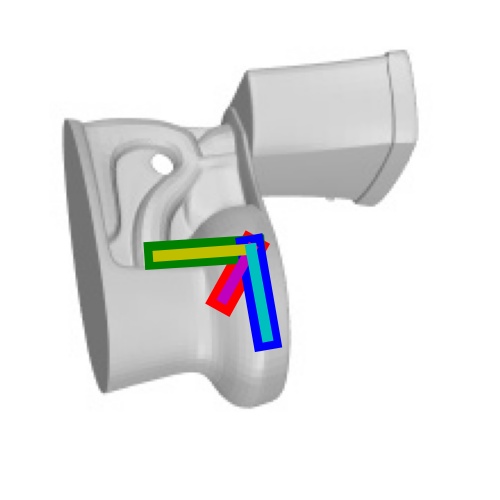}
    &\includegraphics[clip,trim=4.5cm 4cm 4.5cm 1.5cm, height=1.6cm]{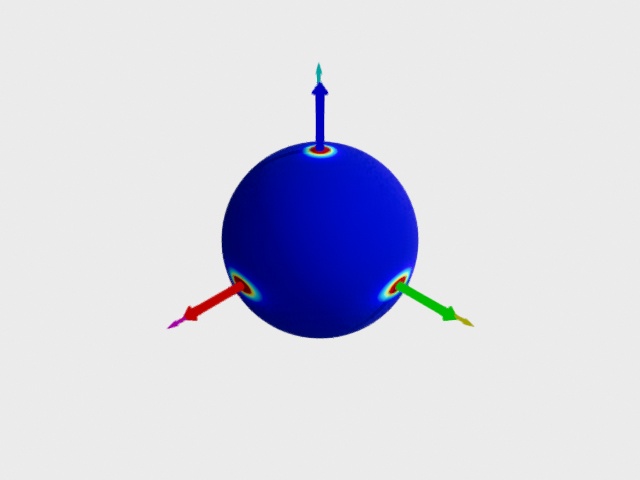}
    &\includegraphics[height=1.2cm]{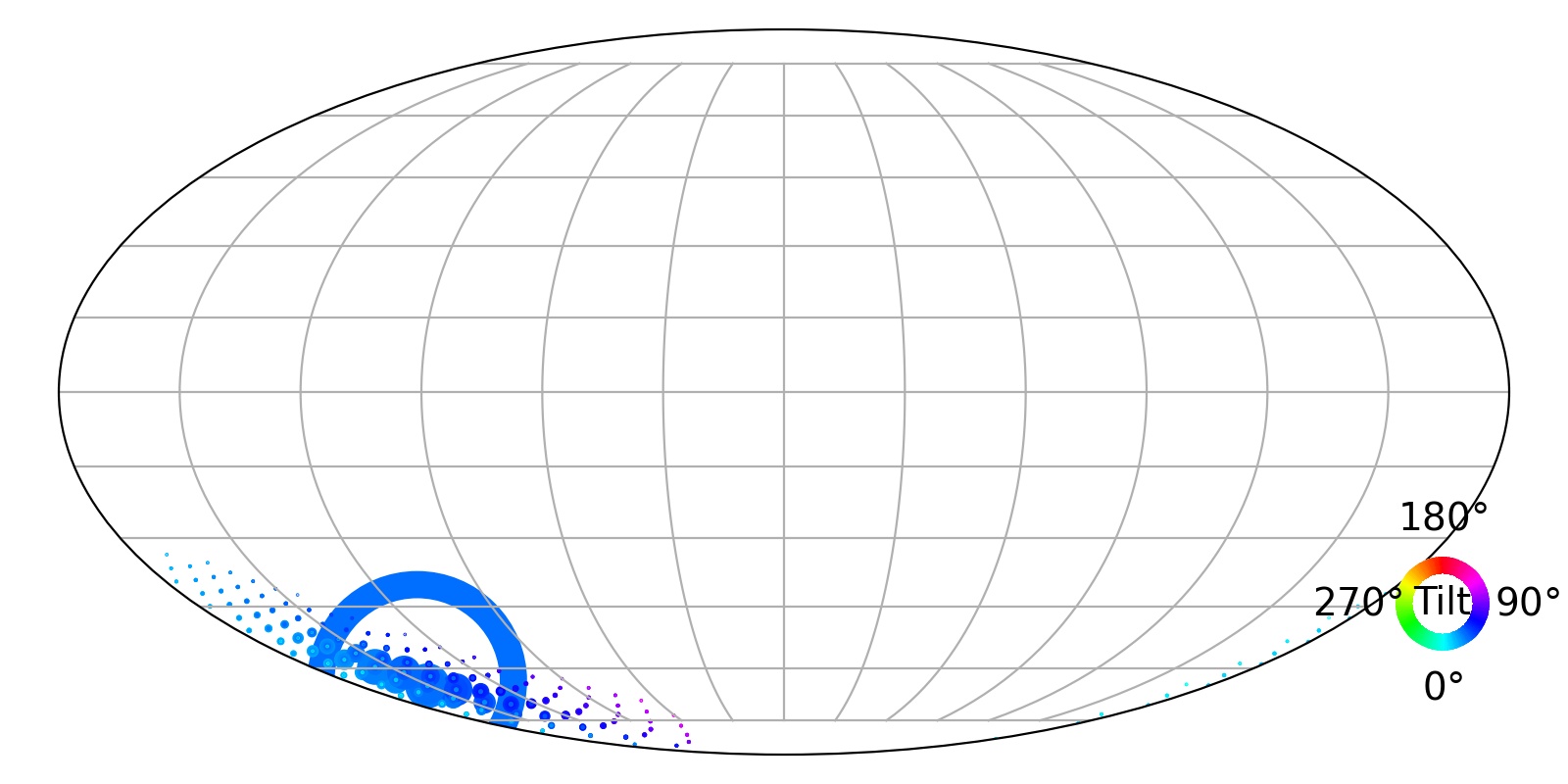}
    \hspace{4mm}
    &\includegraphics[height=1.6cm]{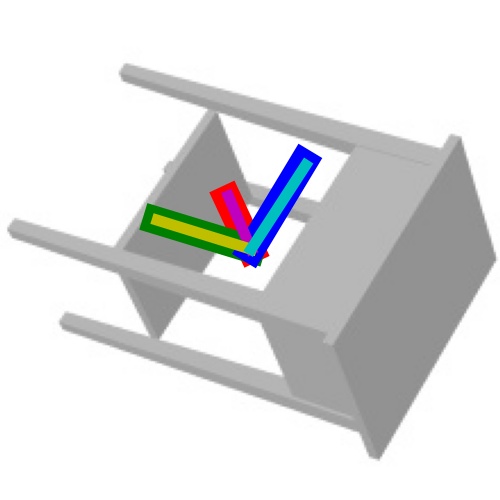}
    &\includegraphics[clip,trim=4.5cm 4cm 4.5cm 1.5cm, height=1.6cm]{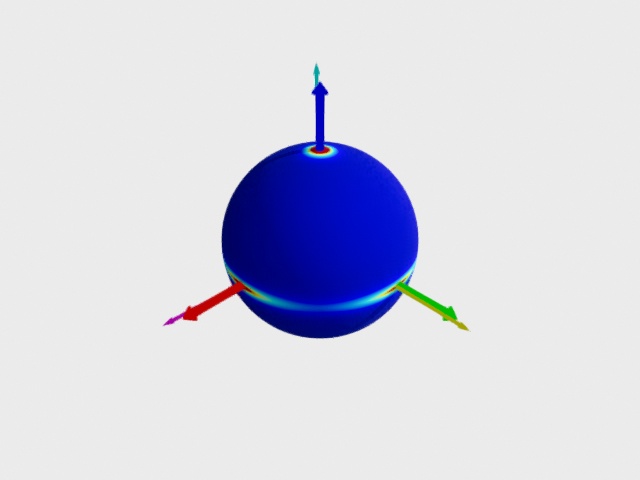}
    &\includegraphics[height=1.2cm]{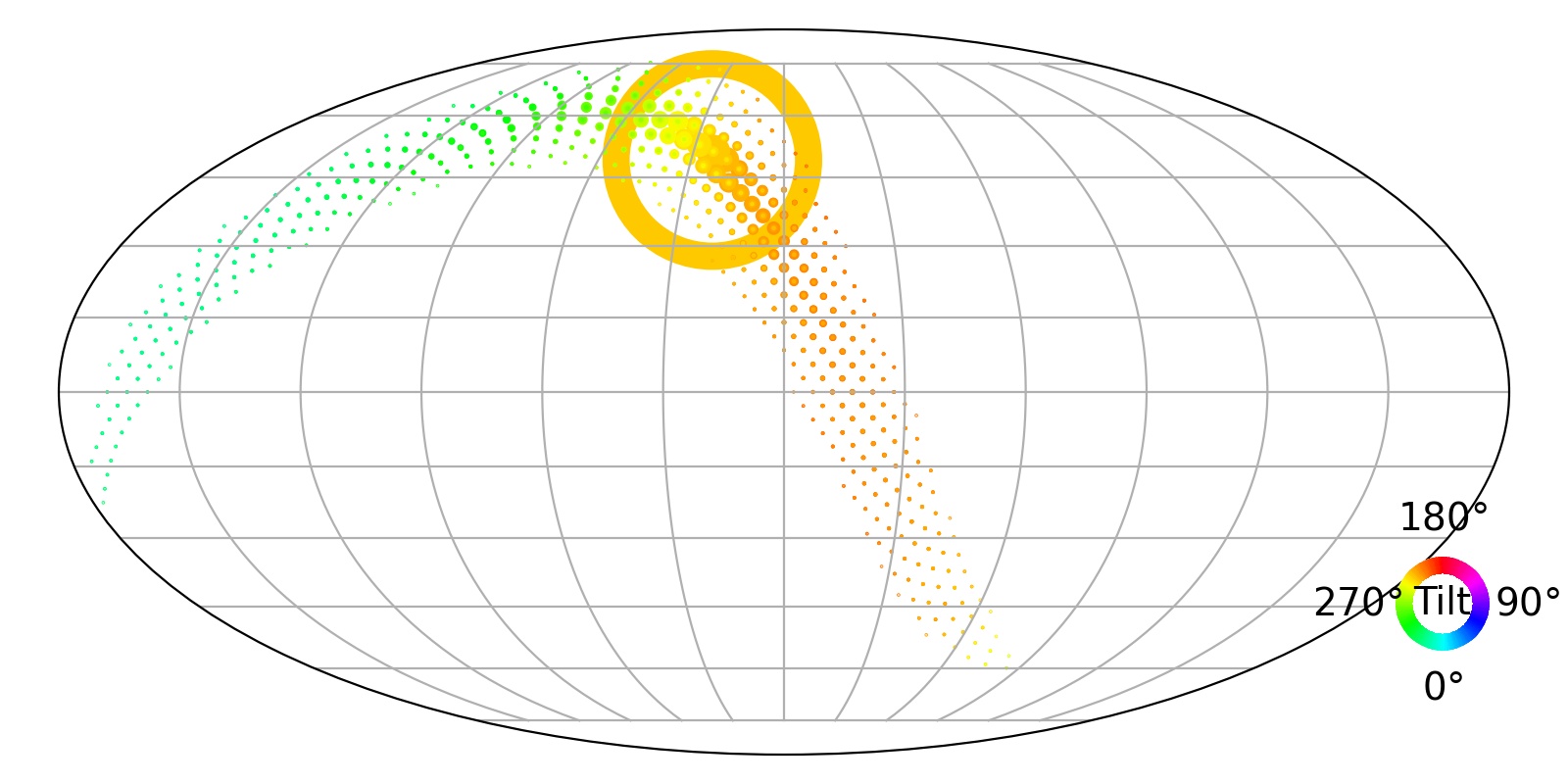}\\
    
    \includegraphics[height=1.6cm]{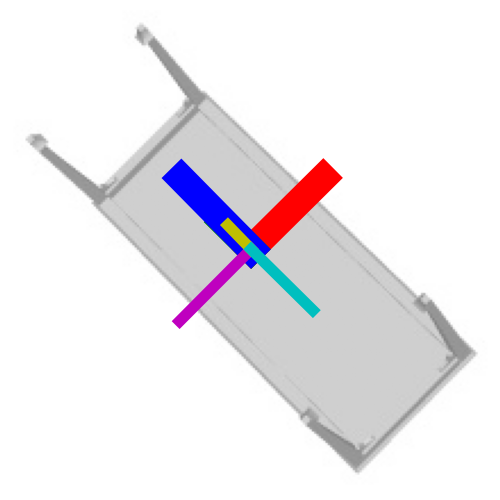}
    &\includegraphics[clip,trim=4.5cm 4cm 4.5cm 1.5cm, height=1.6cm]{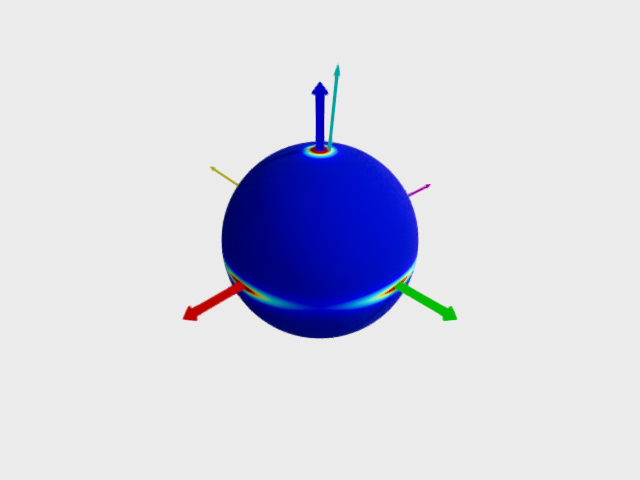}
    &\includegraphics[height=1.2cm]{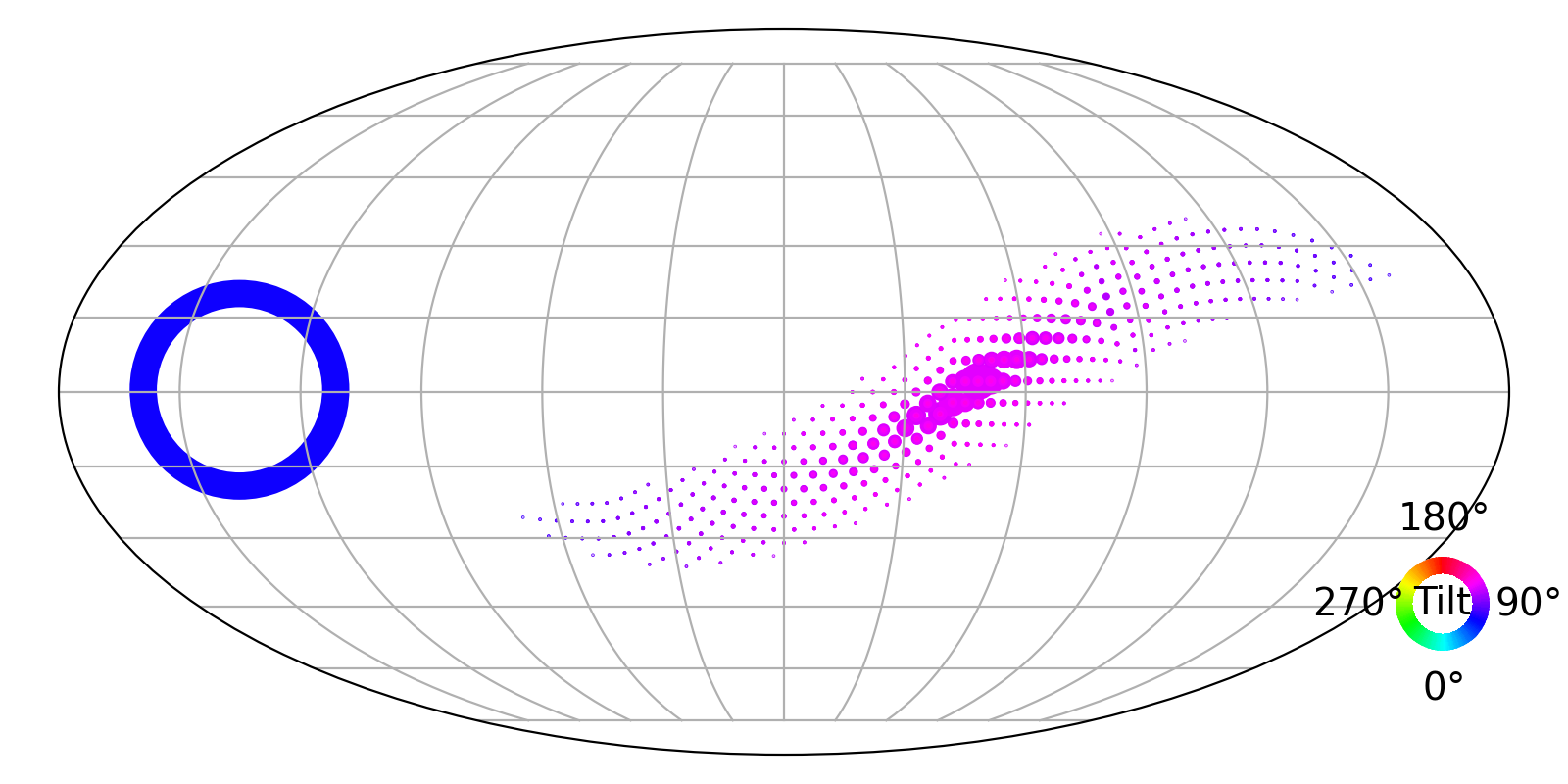}
    \hspace{4mm}
    &\includegraphics[height=1.6cm]{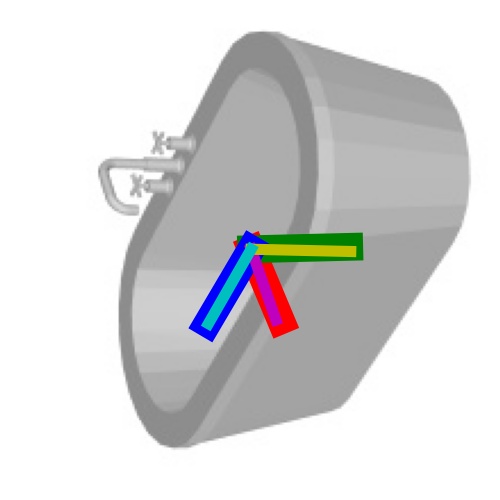}
    &\includegraphics[clip,trim=4.5cm 4cm 4.5cm 1.5cm, height=1.6cm]{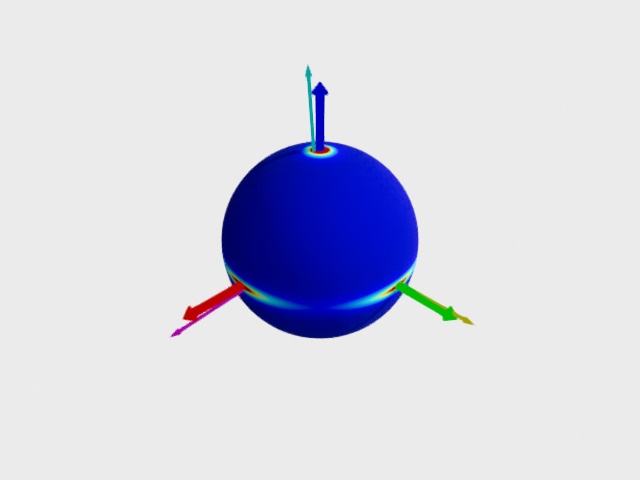}
    &\includegraphics[height=1.2cm]{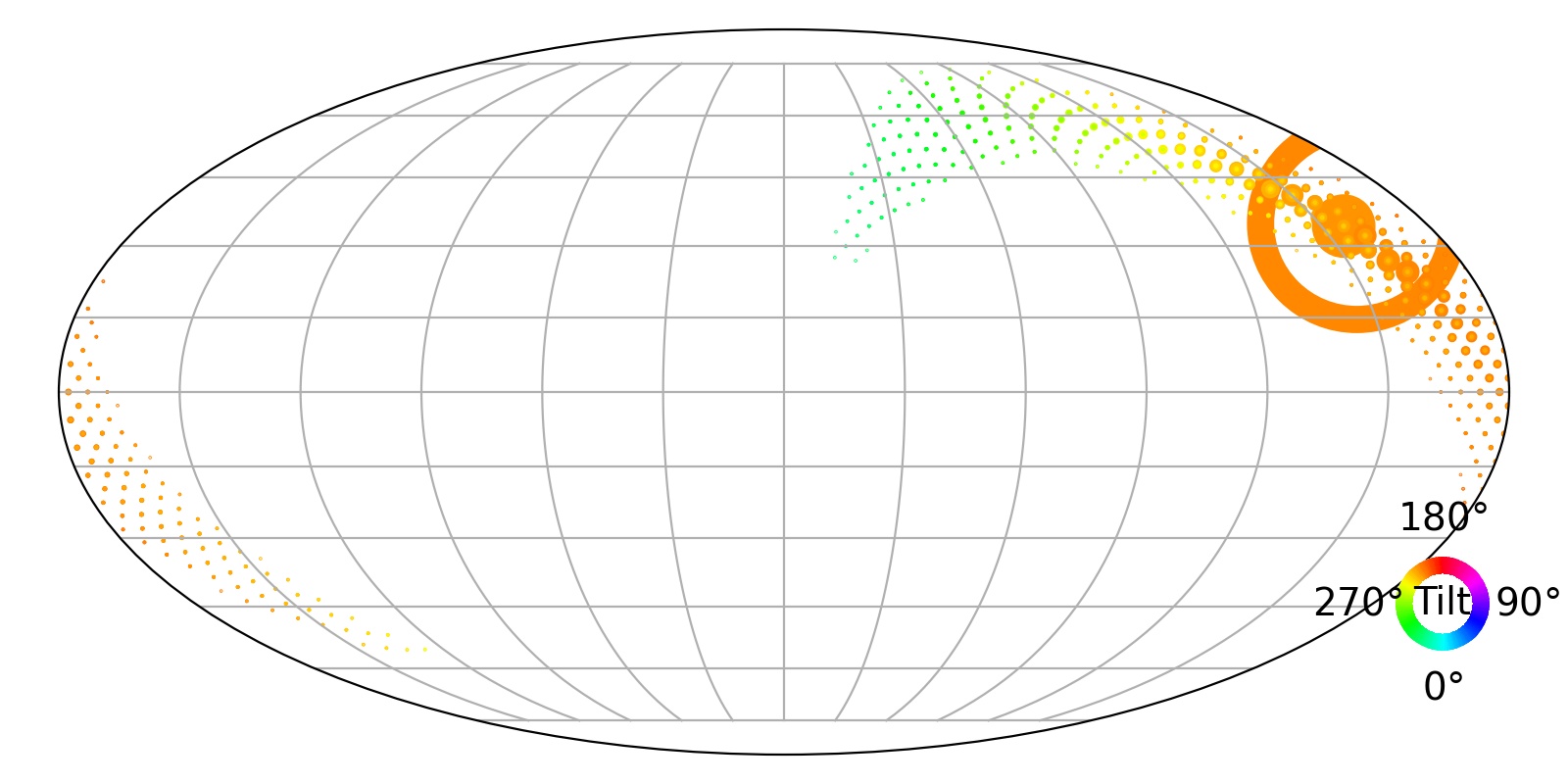}\\
    
    \includegraphics[height=1.6cm]{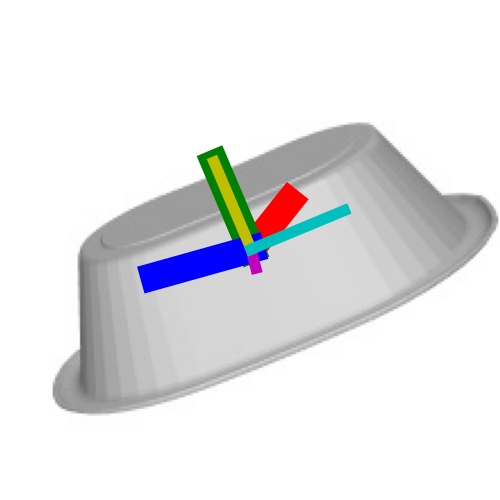}
    &\includegraphics[clip,trim=4.5cm 4cm 4.5cm 1.5cm, height=1.6cm]{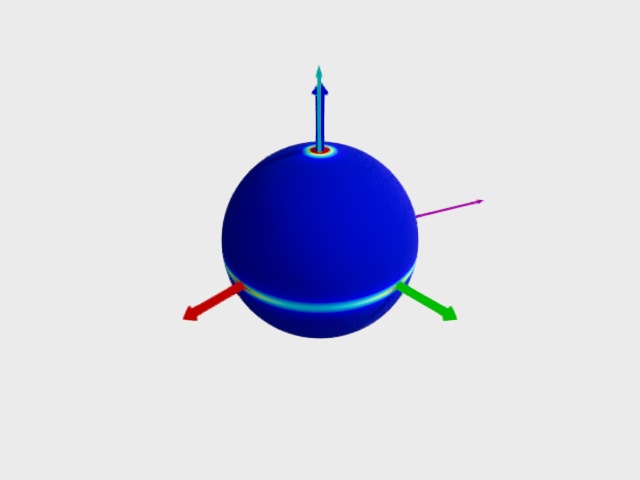}
    &\includegraphics[height=1.2cm]{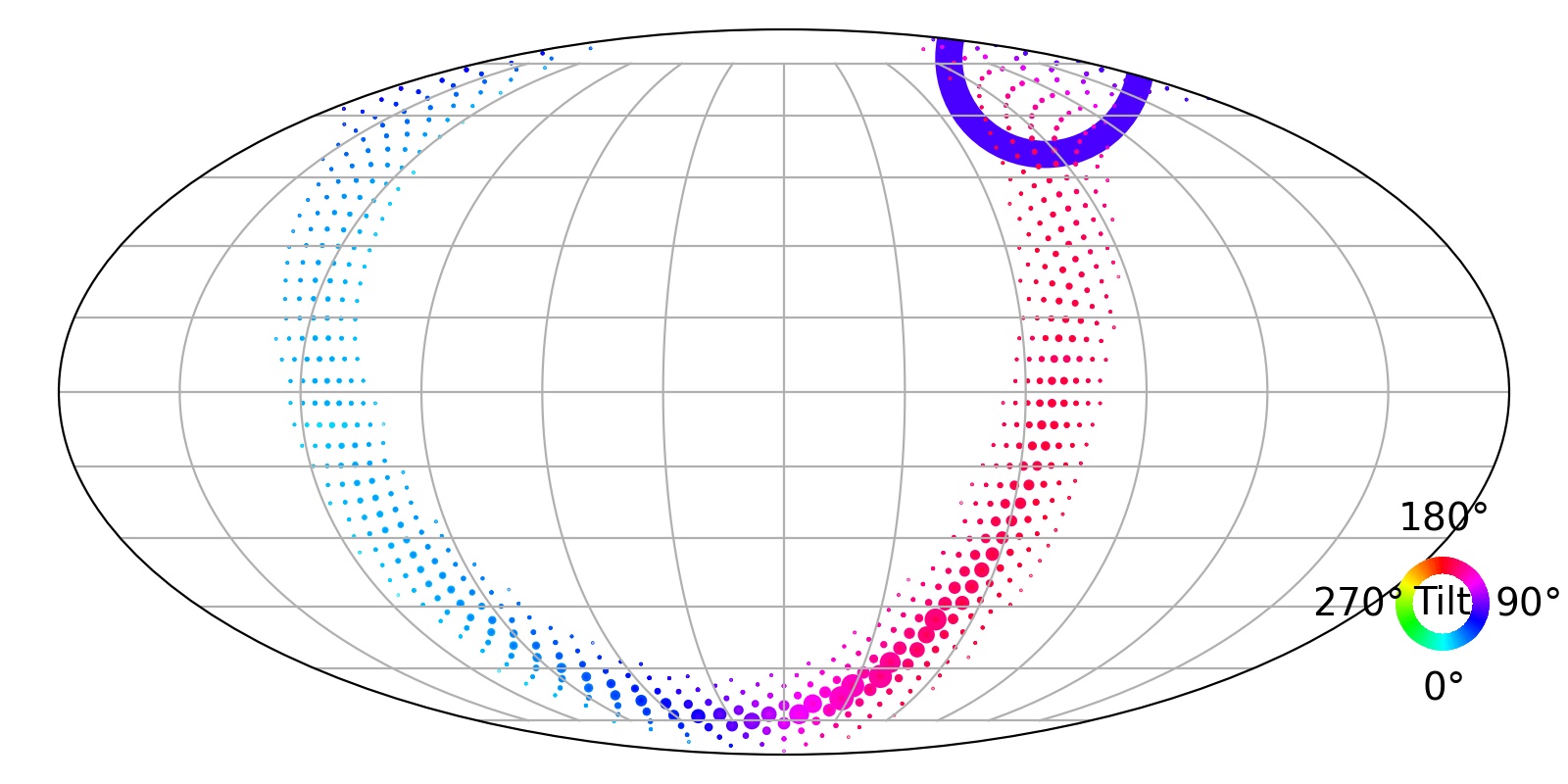}
    \hspace{4mm}
    &\includegraphics[height=1.6cm]{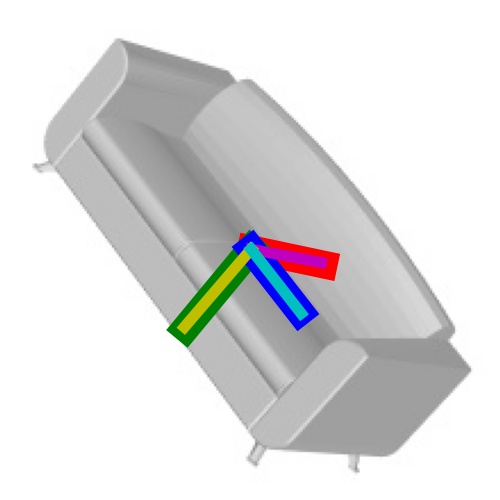}
    &\includegraphics[clip,trim=4.5cm 4cm 4.5cm 1.5cm, height=1.6cm]{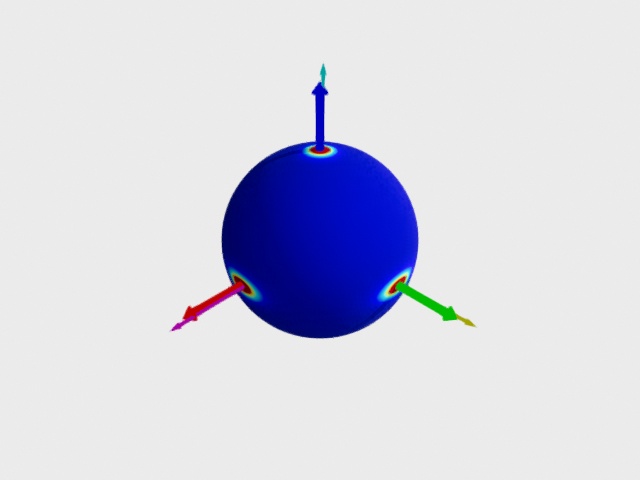}
    &\includegraphics[height=1.2cm]{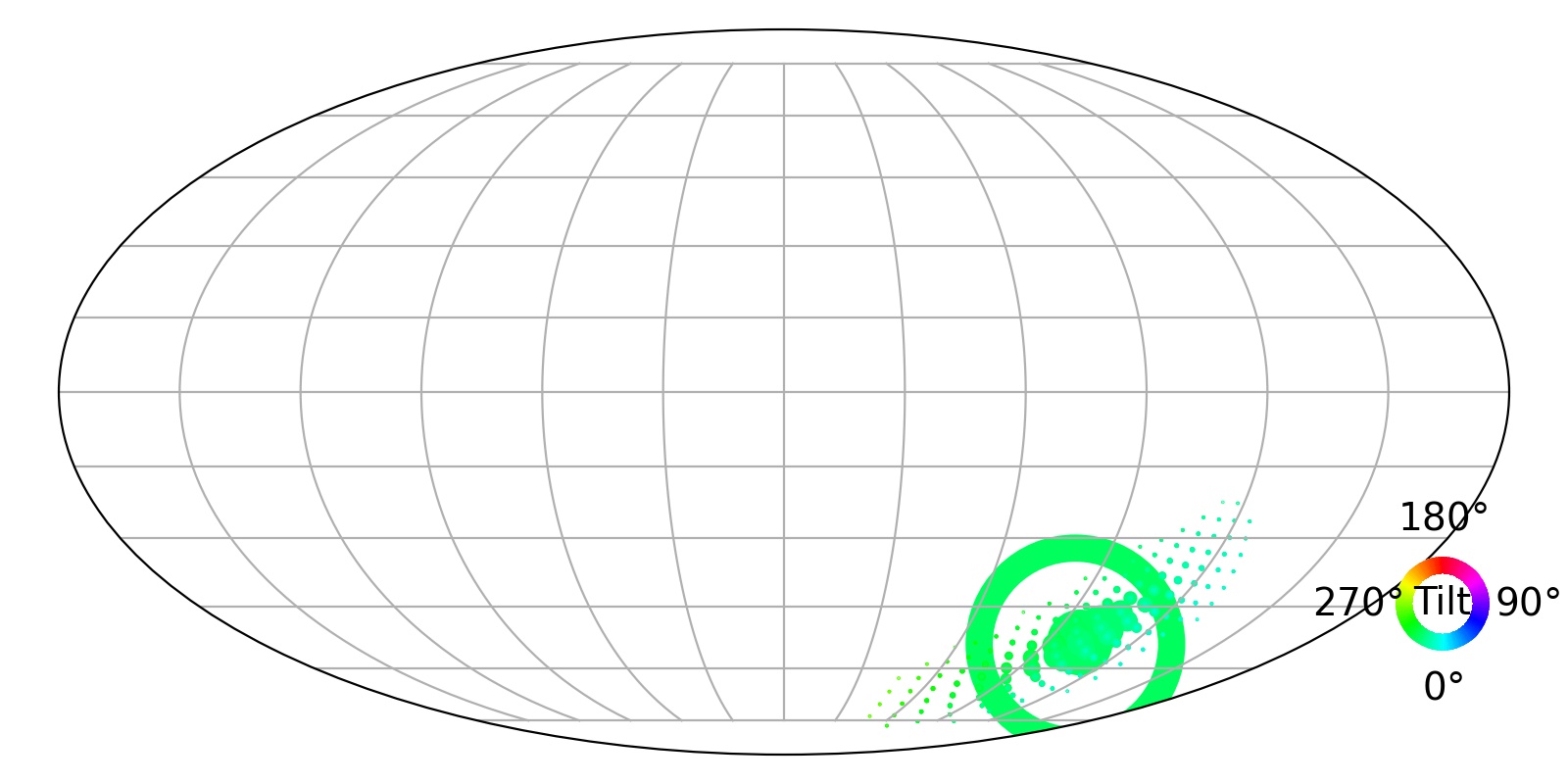}\\
    
    \includegraphics[height=1.6cm]{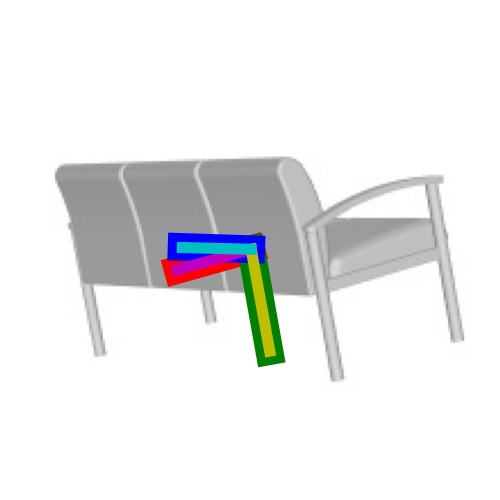}
    &\includegraphics[clip,trim=4.5cm 4cm 4.5cm 1.5cm, height=1.6cm]{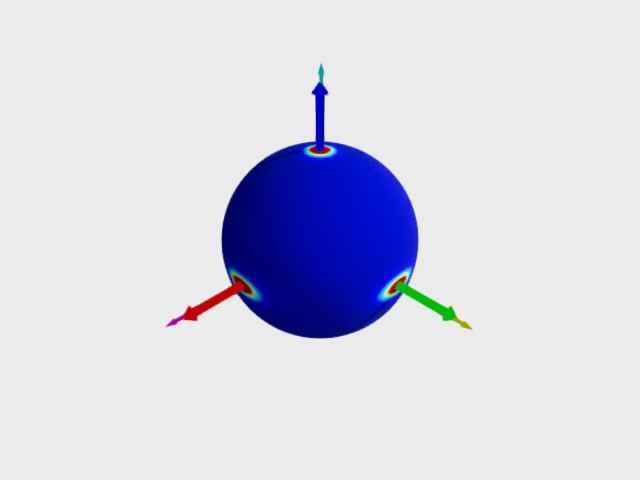}
    &\includegraphics[height=1.2cm]{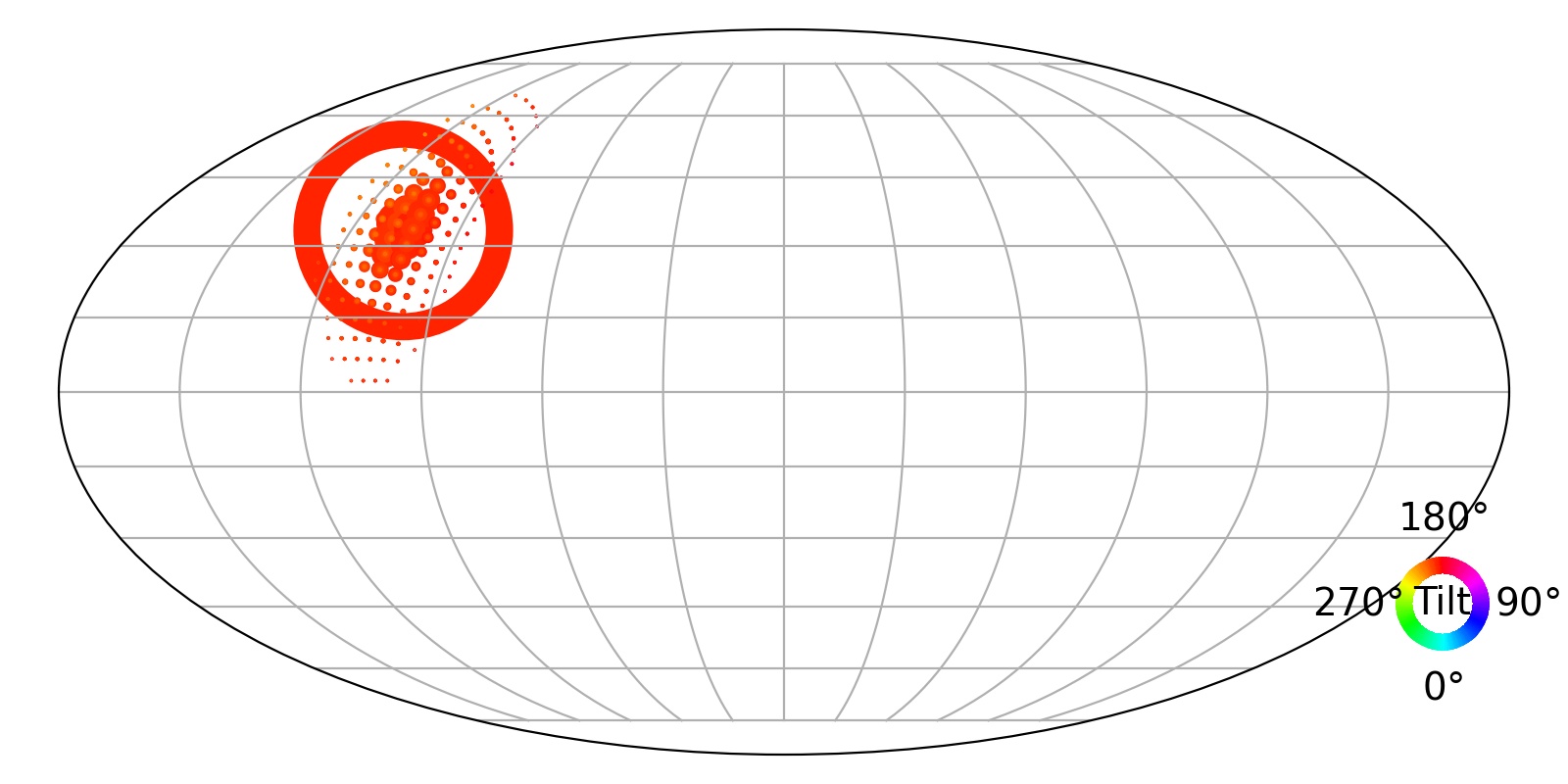}
    \hspace{4mm}
    &\includegraphics[height=1.6cm]{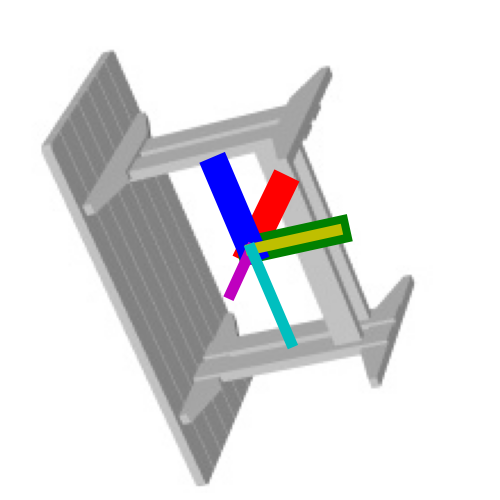}
    &\includegraphics[clip,trim=4.5cm 4cm 4.5cm 1.5cm, height=1.6cm]{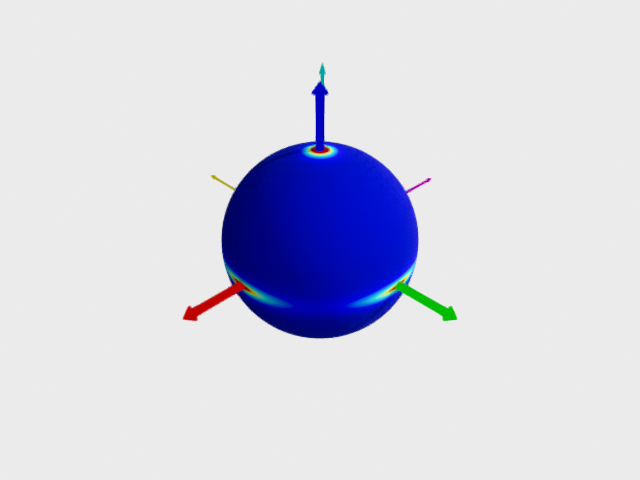}
    &\includegraphics[height=1.2cm]{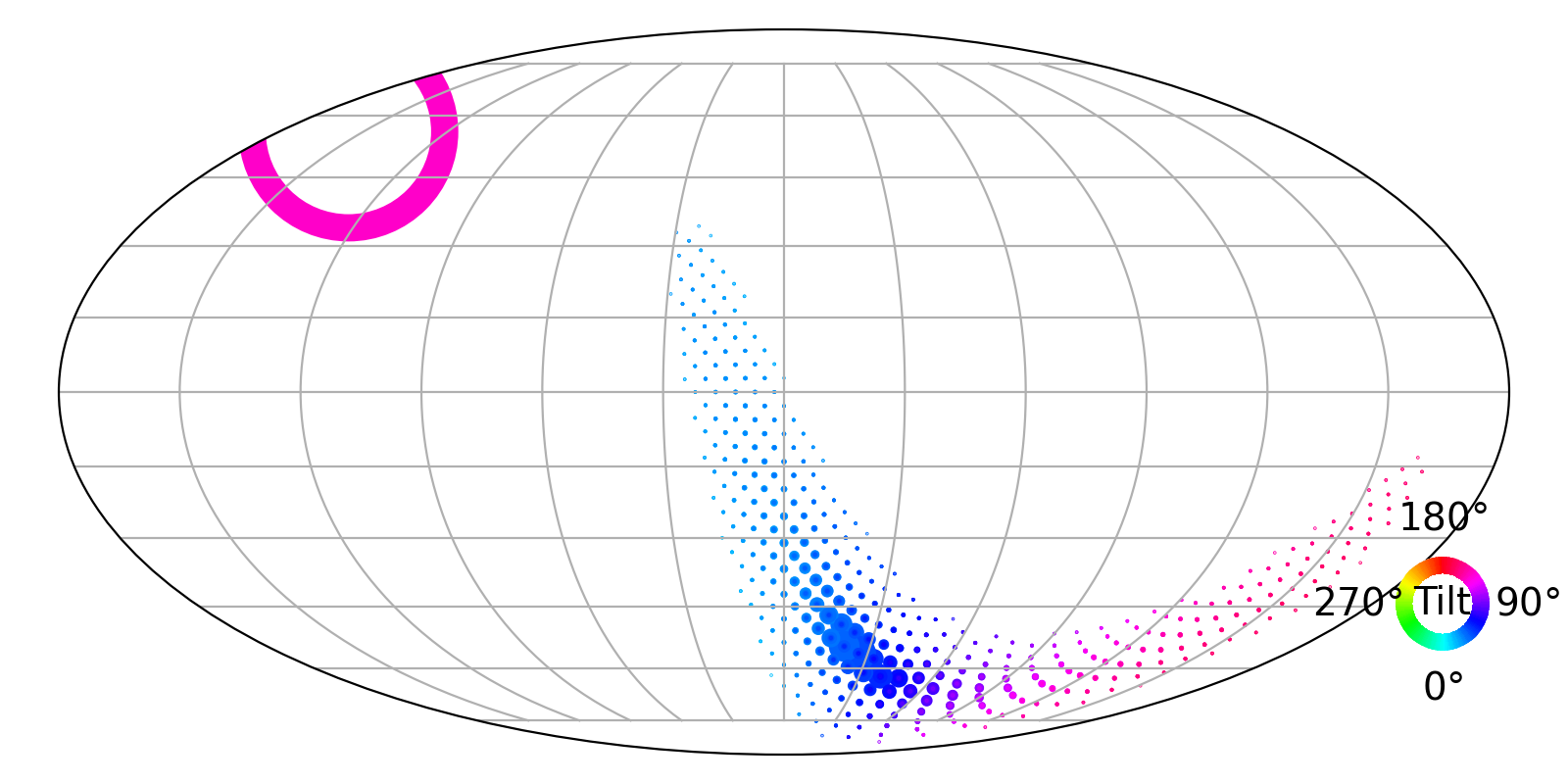}\\
    
    \small{Input image} & {\tiny\makecell{Distribution visual.\\\citep{mohlin2020probabilistic}}} & {\tiny\makecell{Distribution visual.\\\citep{murphy2021implicit}}}  \hspace{4mm} &
    \small{Input image} & {\tiny\makecell{Distribution visual.\\\citep{mohlin2020probabilistic}}} & {\tiny\makecell{Distribution visual.\\\citep{murphy2021implicit}}}
    \end{tabular}
    \vspace{-3mm}
    \caption{\small \ree{\textbf{Visual results on ModelNet10-SO3 dataset.} We adopt the distribution visualization methods in \cite{mohlin2020probabilistic} and \cite{murphy2021implicit}. For input images and visualizations with \cite{mohlin2020probabilistic}, predicted rotations are shown with thick lines and the ground truths are with thin lines. For visualizations with \cite{murphy2021implicit}, ground truths are shown by solid circles.}}
    \vspace{-2mm}
	\label{fig:vis_modelnet}
\end{figure}
\begin{figure}[t]
    \centering
    \begin{tabular}{cccccc}
    \includegraphics[height=1.6cm]{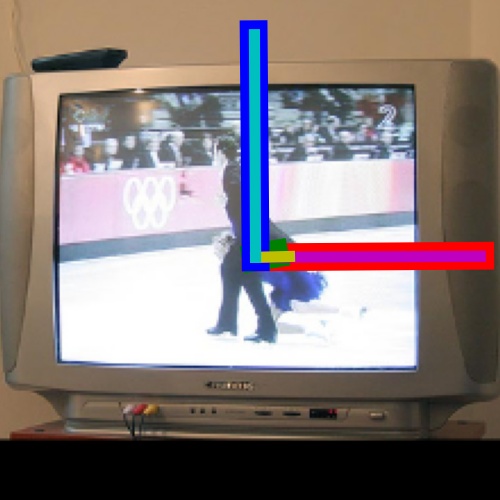}
    &\includegraphics[clip,trim=4.5cm 4cm 4.5cm 1.5cm, height=1.6cm]{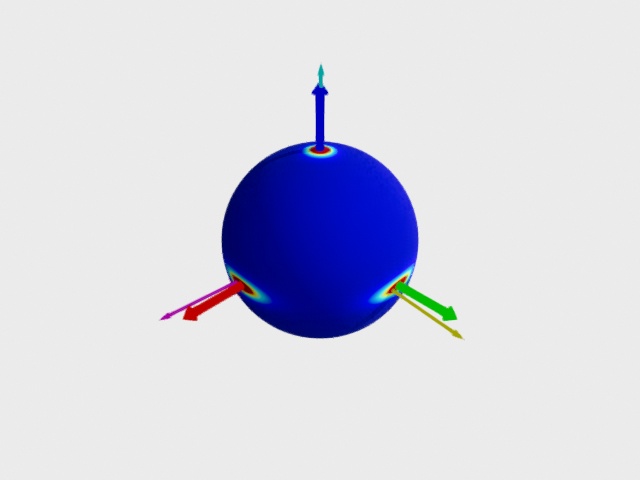}
    &\includegraphics[height=1.2cm]{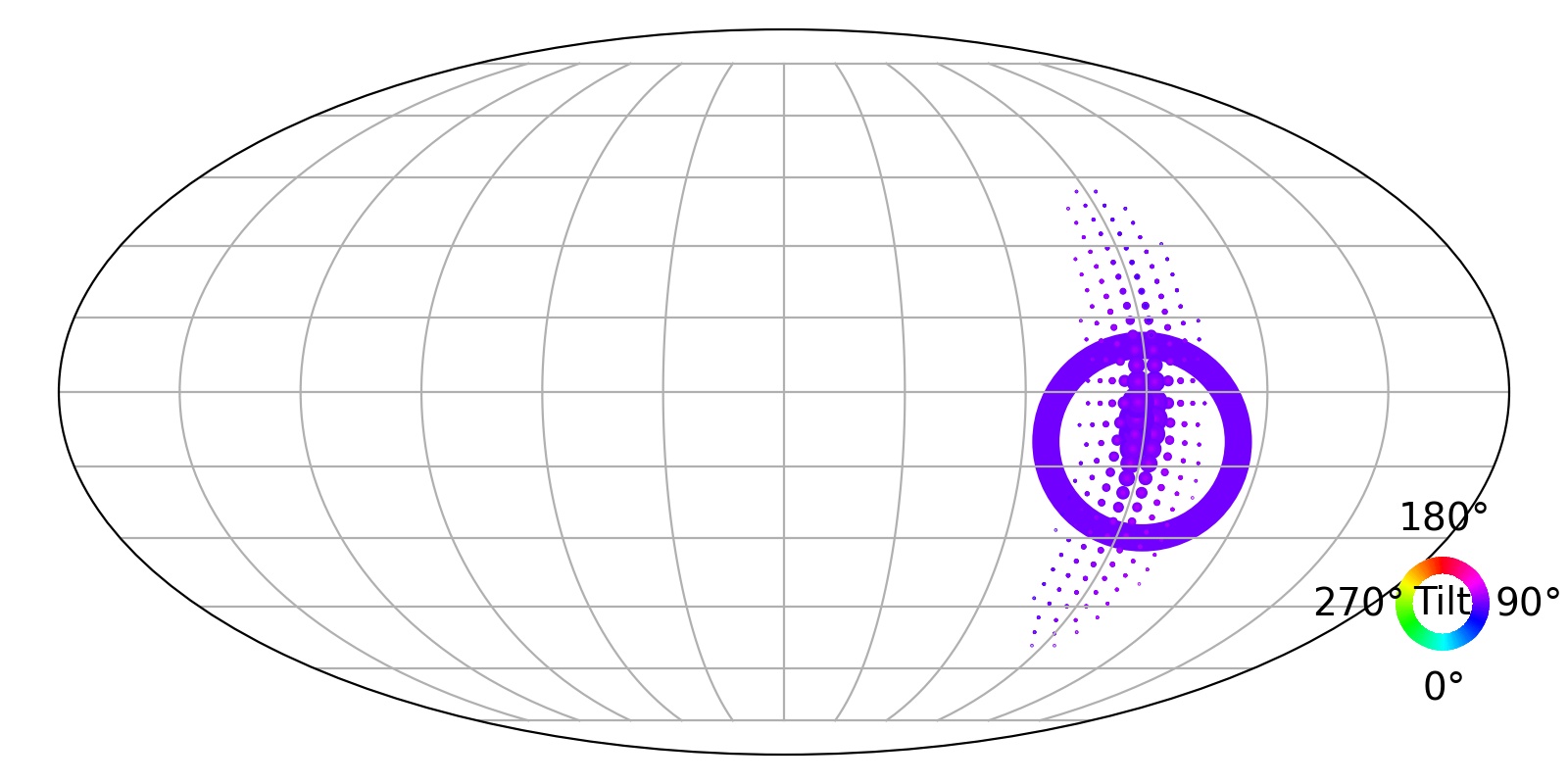}
    \hspace{4mm}
    &\includegraphics[height=1.6cm]{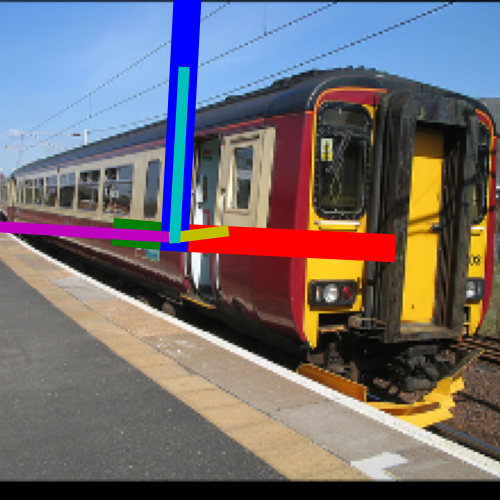}
    &\includegraphics[clip,trim=4.5cm 4cm 4.5cm 1.5cm, height=1.6cm]{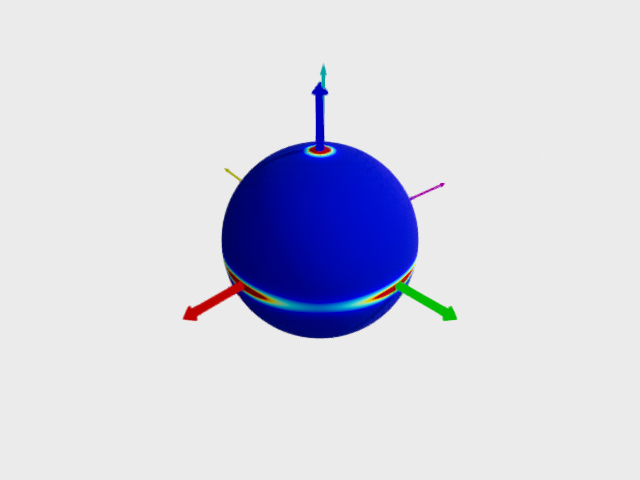}
    &\includegraphics[height=1.2cm]{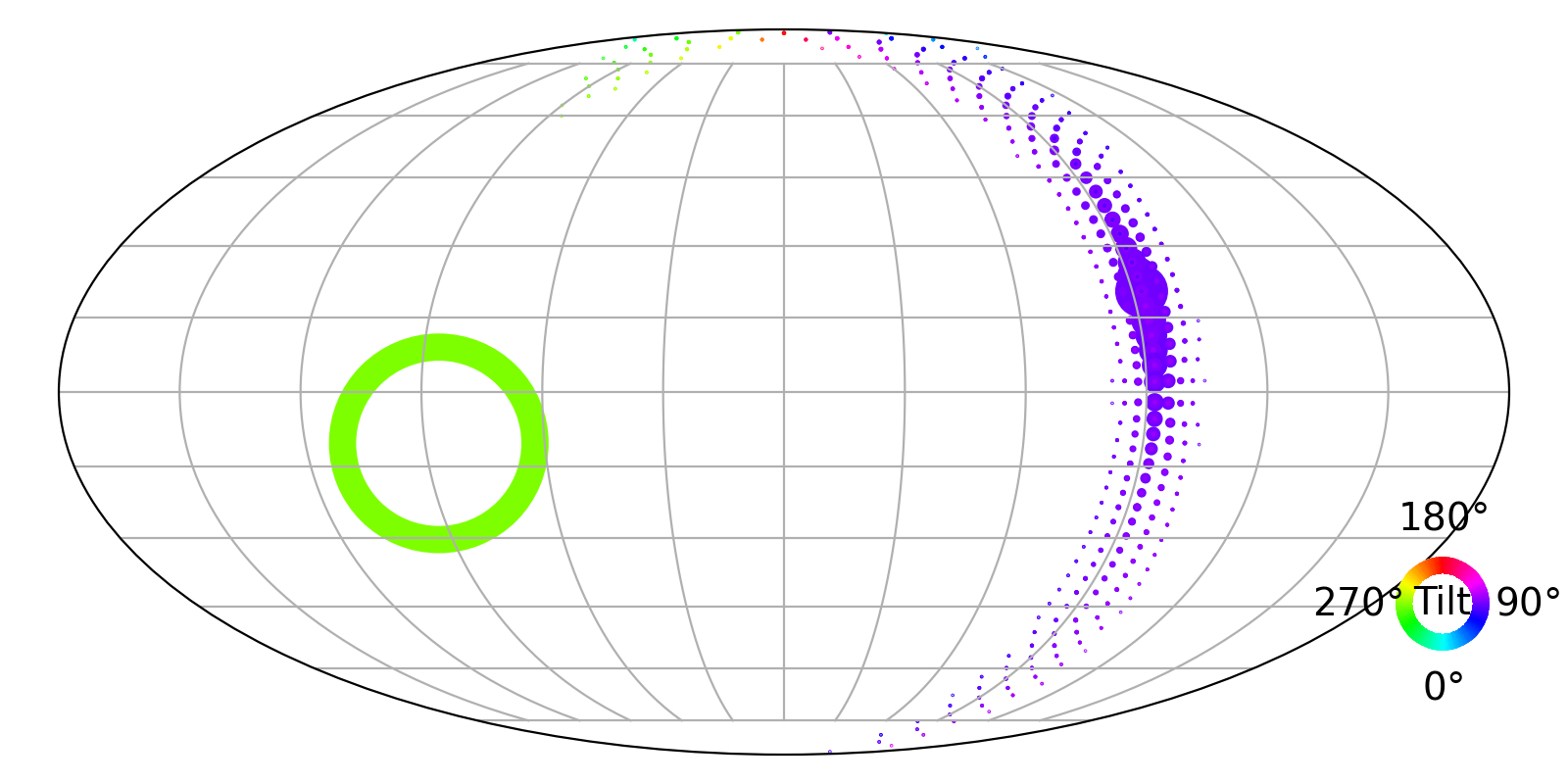}\\
    
    \includegraphics[height=1.6cm]{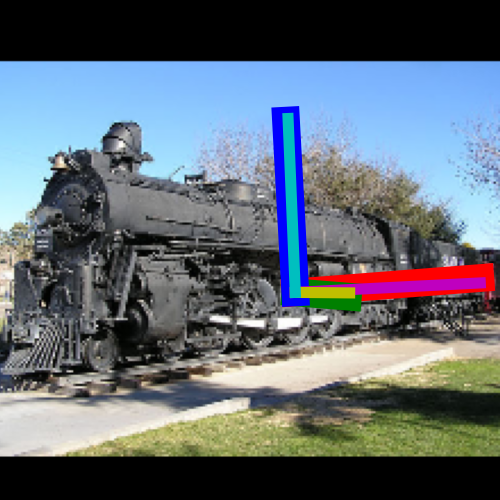}
    &\includegraphics[clip,trim=4.5cm 4cm 4.5cm 1.5cm, height=1.6cm]{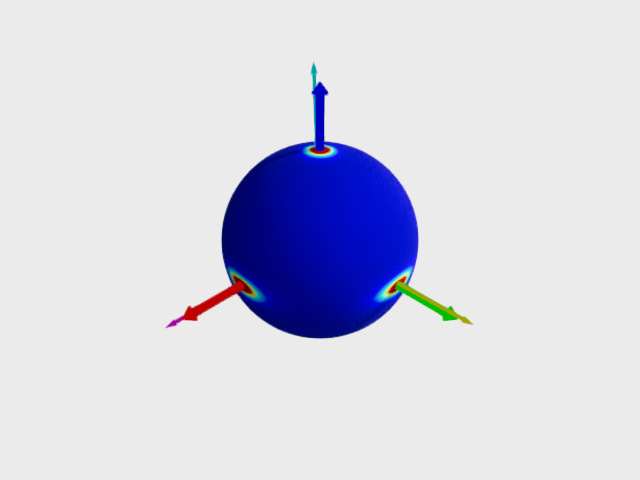}
    &\includegraphics[height=1.2cm]{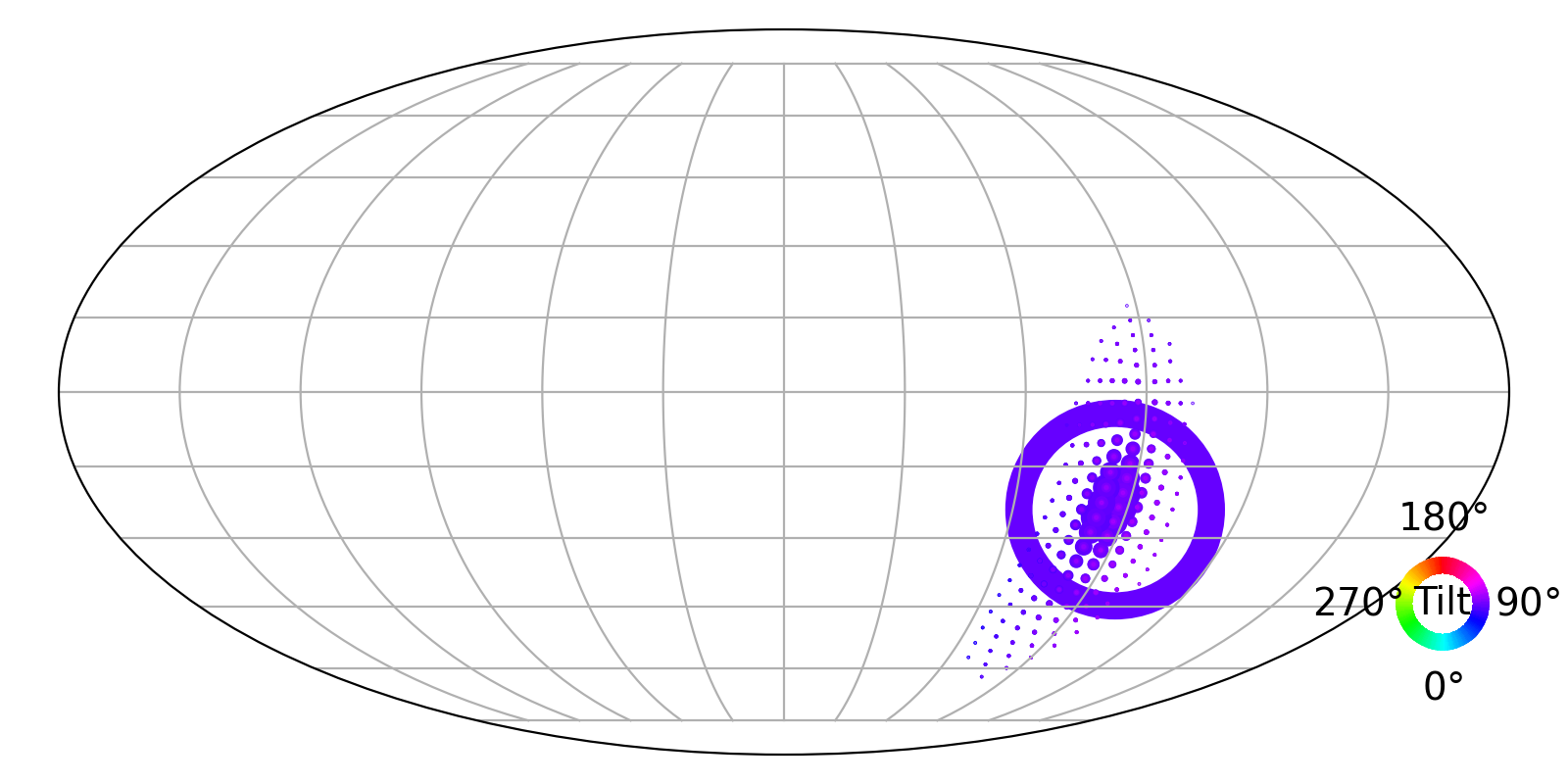}
    \hspace{4mm}
    &\includegraphics[height=1.6cm]{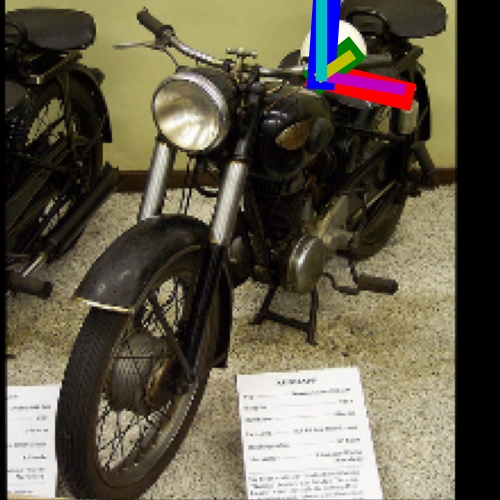}
    &\includegraphics[clip,trim=4.5cm 4cm 4.5cm 1.5cm, height=1.6cm]{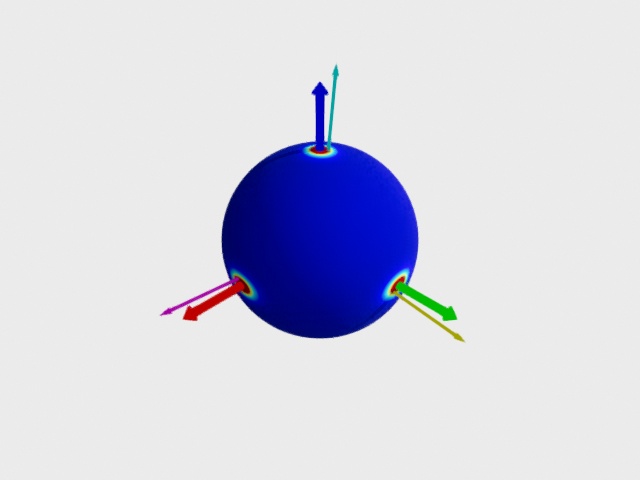}
    &\includegraphics[height=1.2cm]{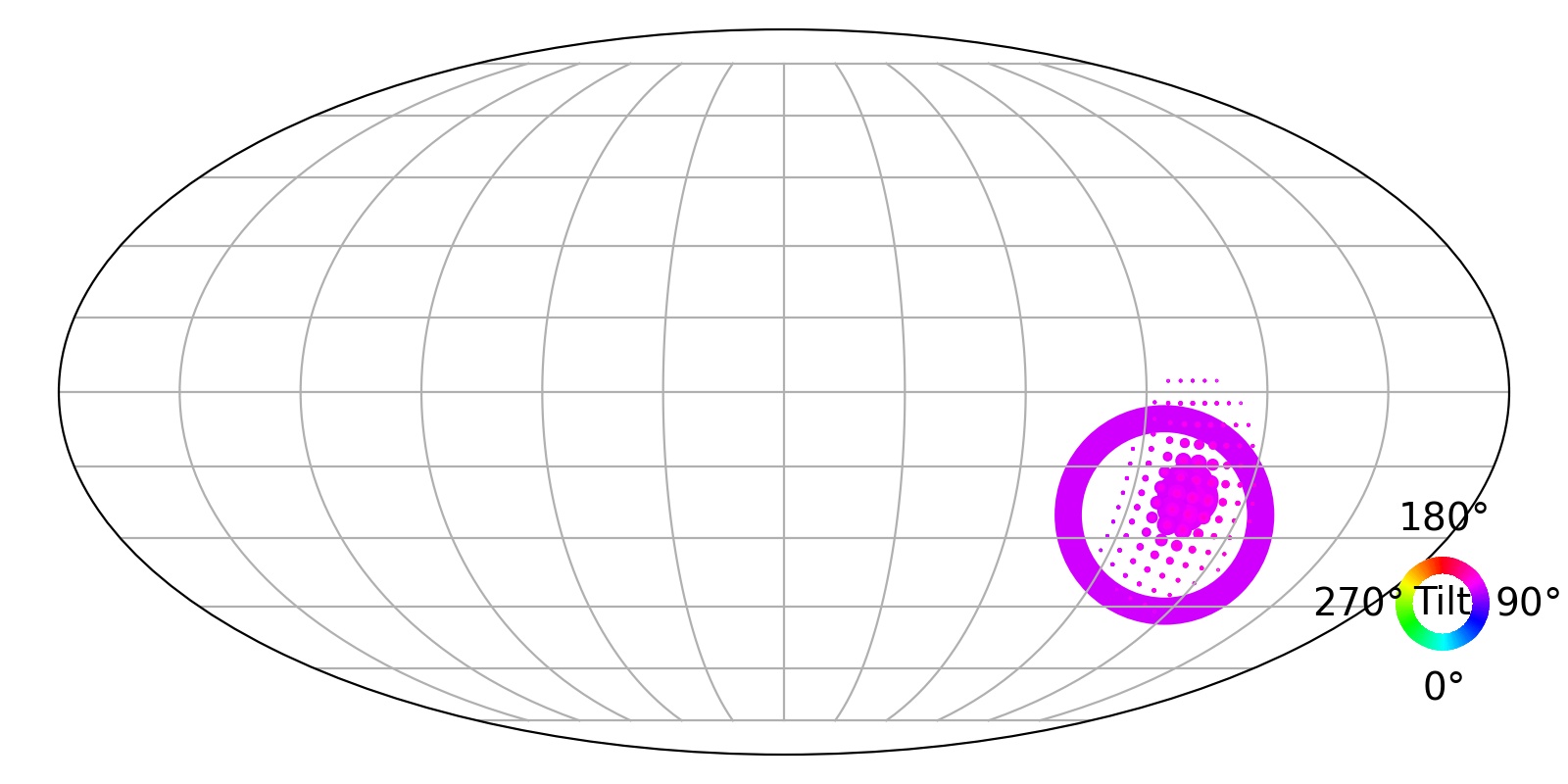}\\
    
    \includegraphics[height=1.6cm]{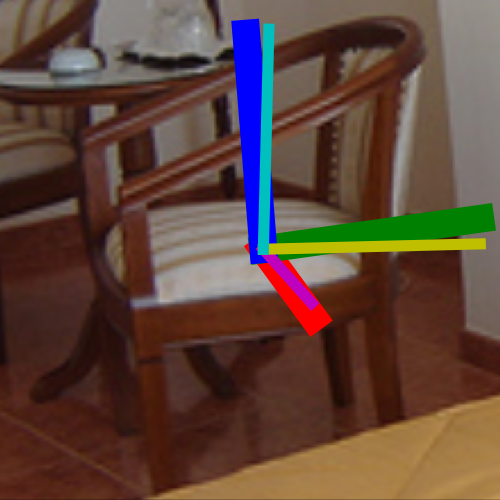}
    &\includegraphics[clip,trim=4.5cm 4cm 4.5cm 1.5cm, height=1.6cm]{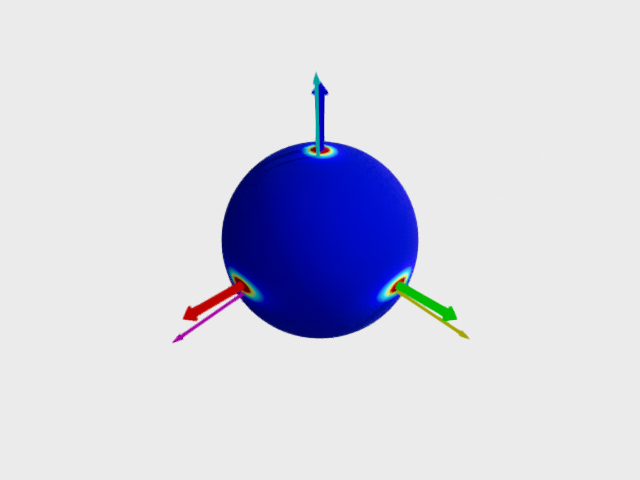}
    &\includegraphics[height=1.2cm]{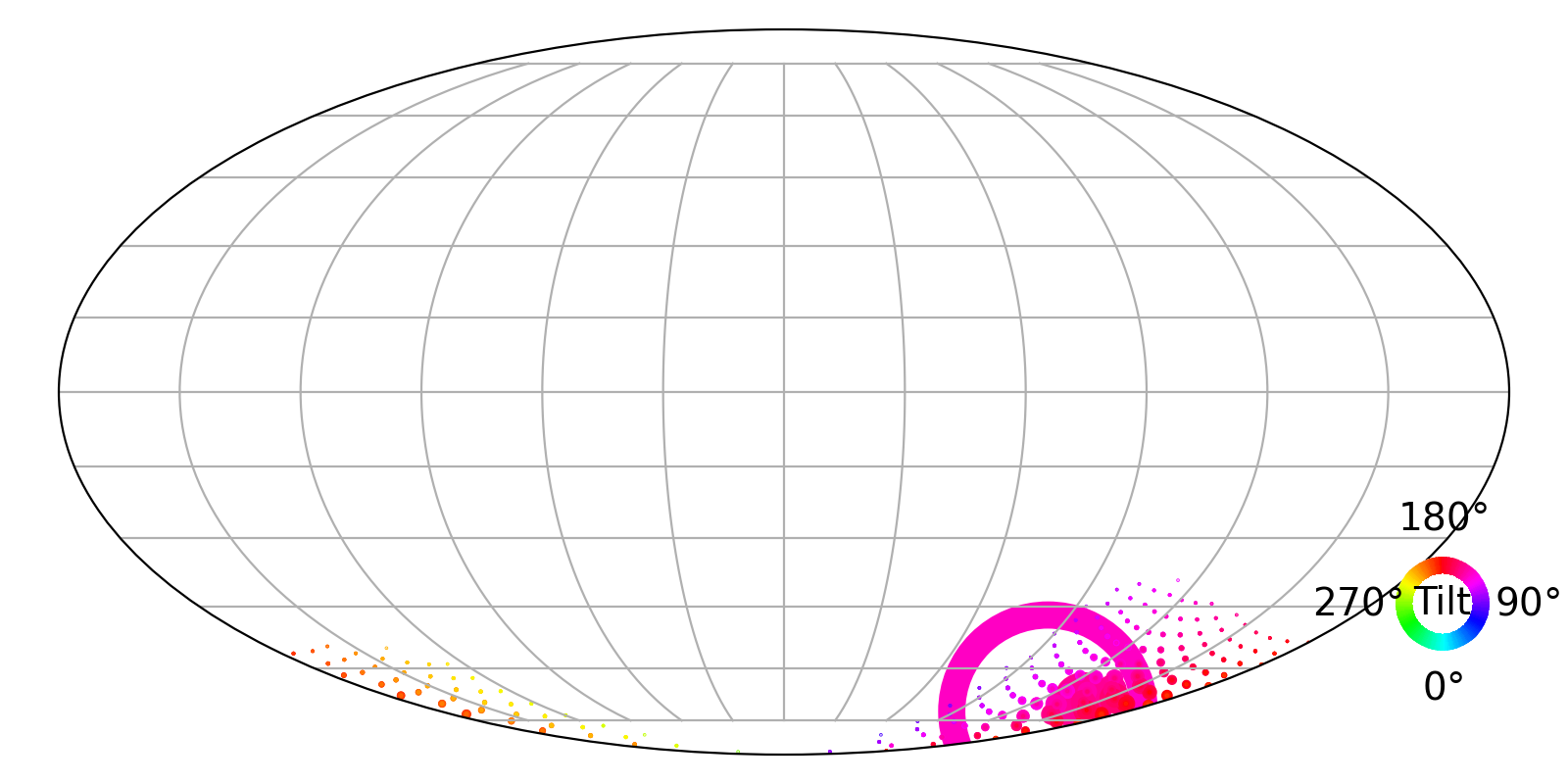}
    \hspace{4mm}
    &\includegraphics[height=1.6cm]{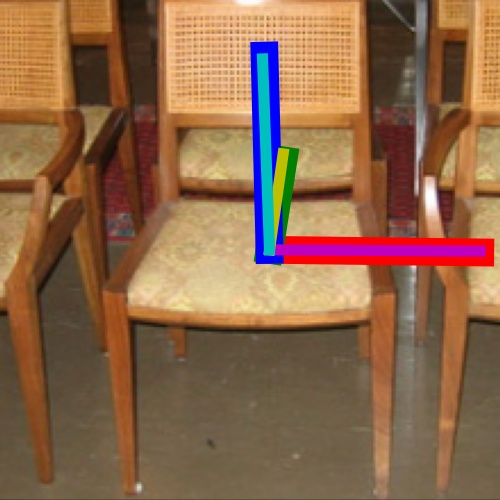}
    &\includegraphics[clip,trim=4.5cm 4cm 4.5cm 1.5cm, height=1.6cm]{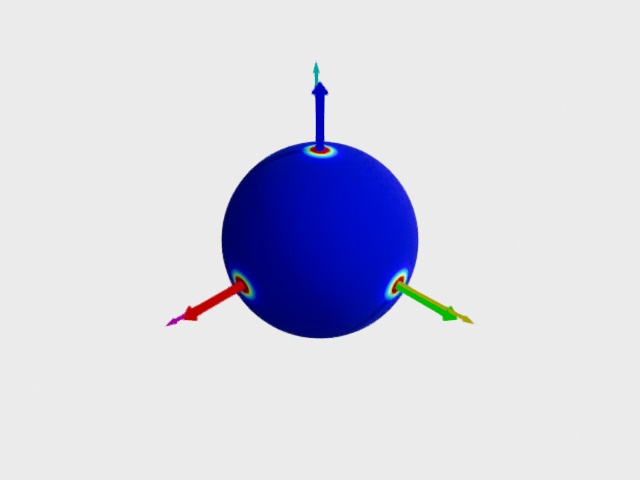}
    &\includegraphics[height=1.2cm]{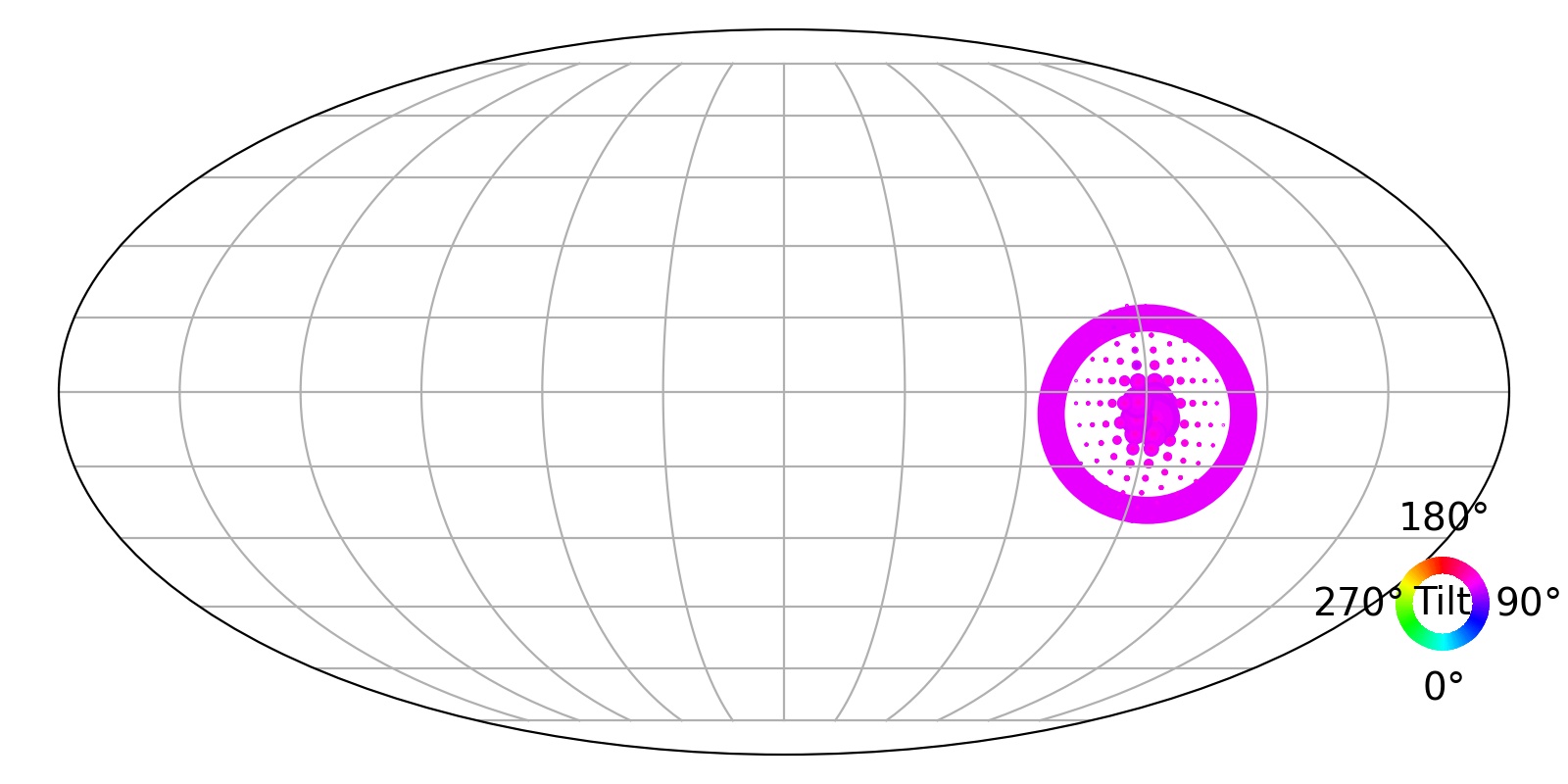}\\
    
    \includegraphics[height=1.6cm]{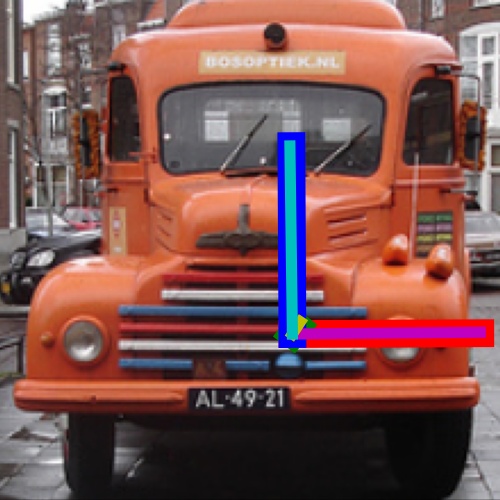}
    &\includegraphics[clip,trim=4.5cm 4cm 4.5cm 1.5cm, height=1.6cm]{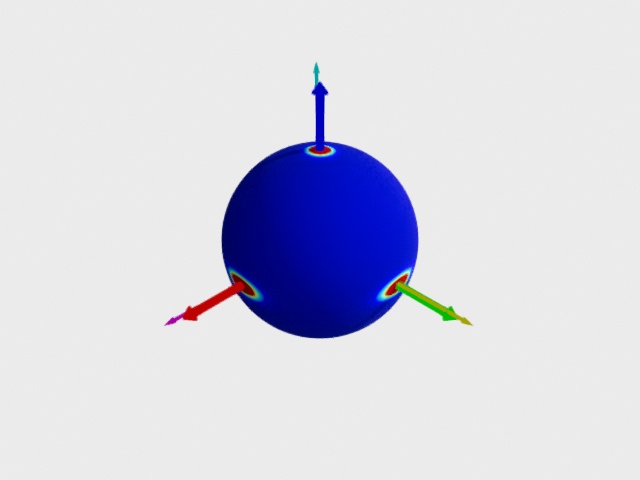}
    &\includegraphics[height=1.2cm]{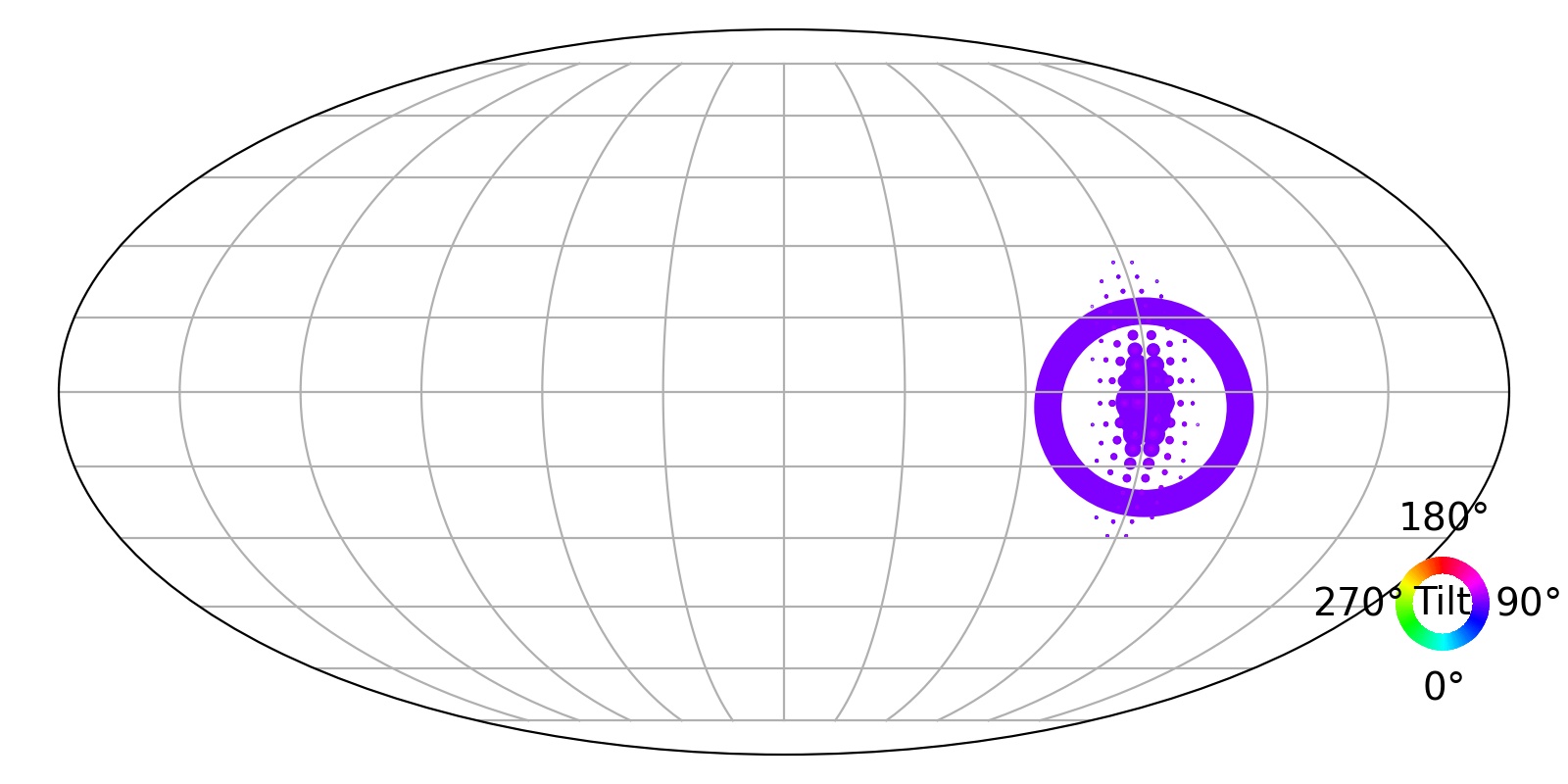}
    \hspace{4mm}
    &\includegraphics[height=1.6cm]{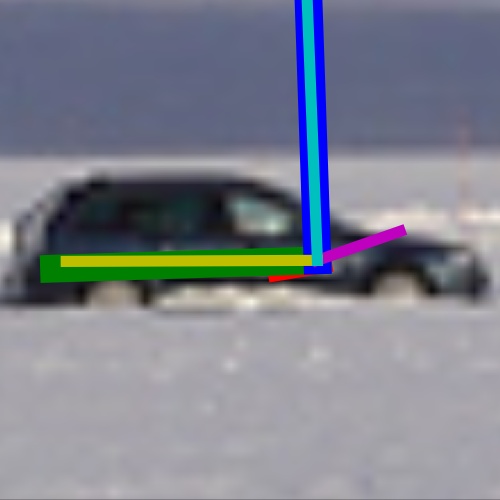}
    &\includegraphics[clip,trim=4.5cm 4cm 4.5cm 1.5cm, height=1.6cm]{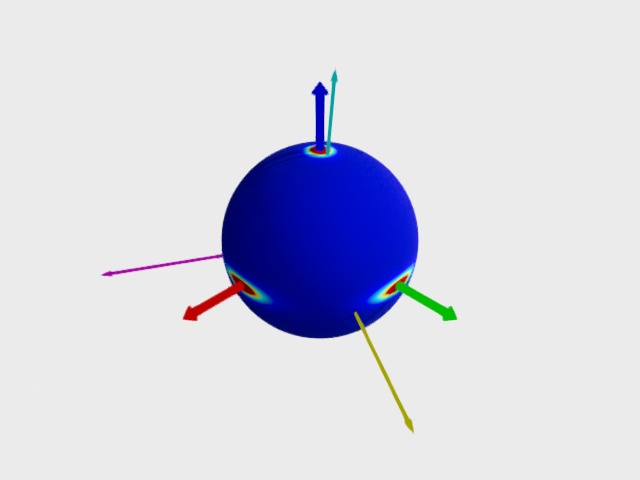}
    &\includegraphics[height=1.2cm]{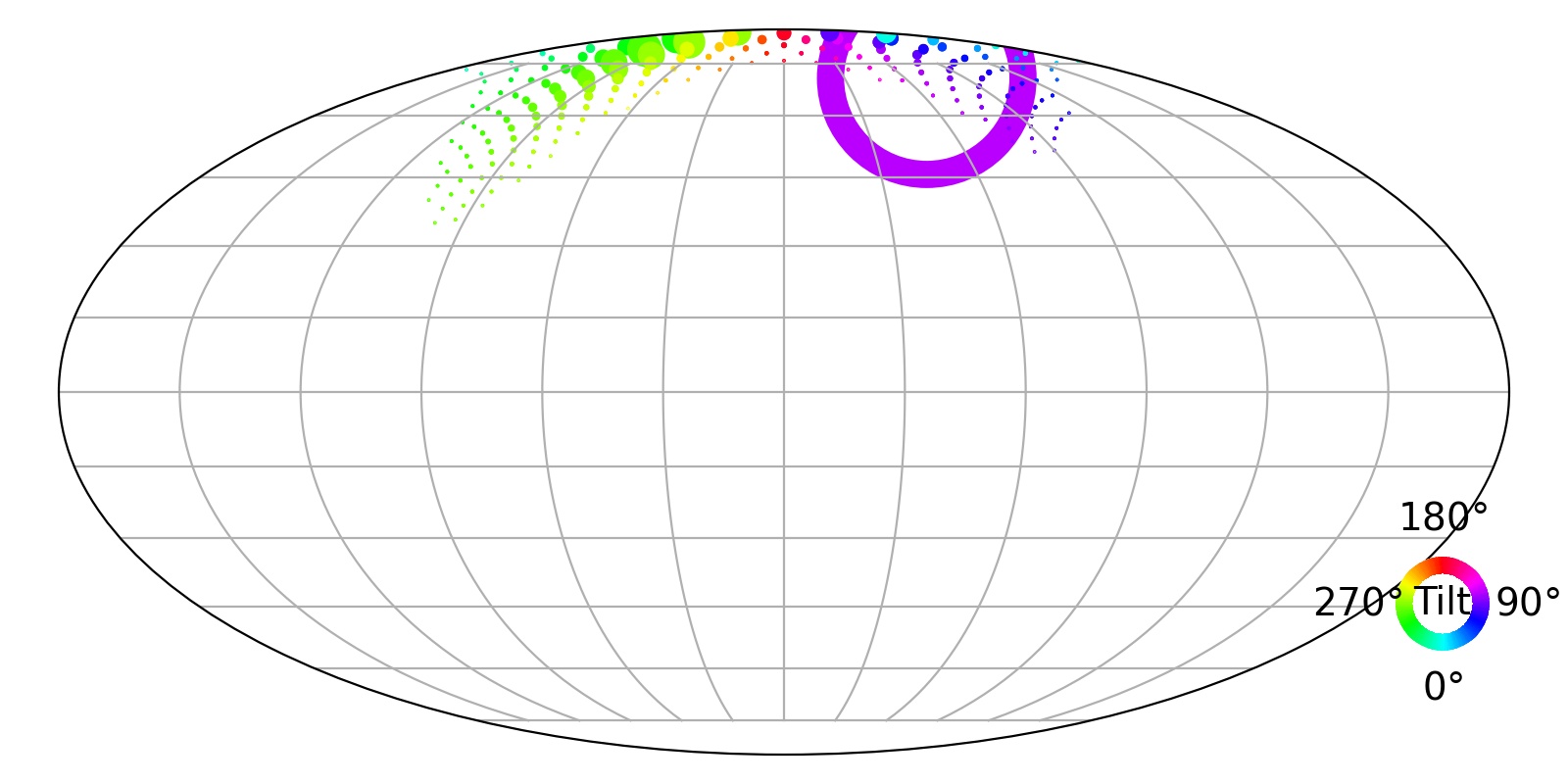}\\
    
    \includegraphics[height=1.6cm]{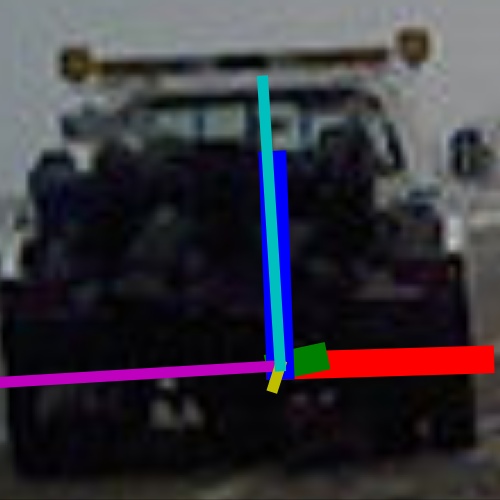}
    &\includegraphics[clip,trim=4.5cm 4cm 4.5cm 1.5cm, height=1.6cm]{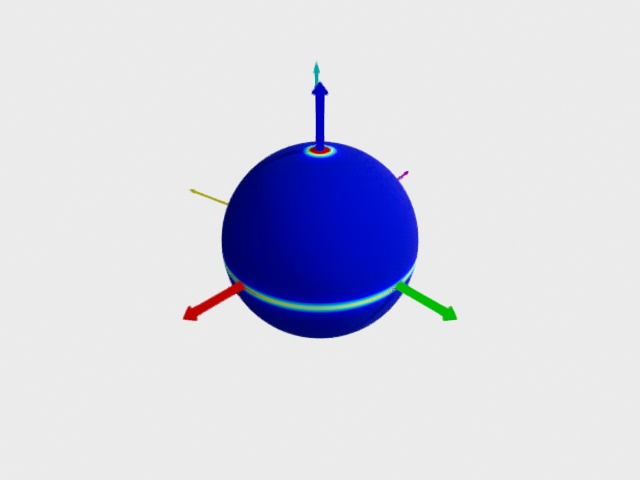}
    &\includegraphics[height=1.2cm]{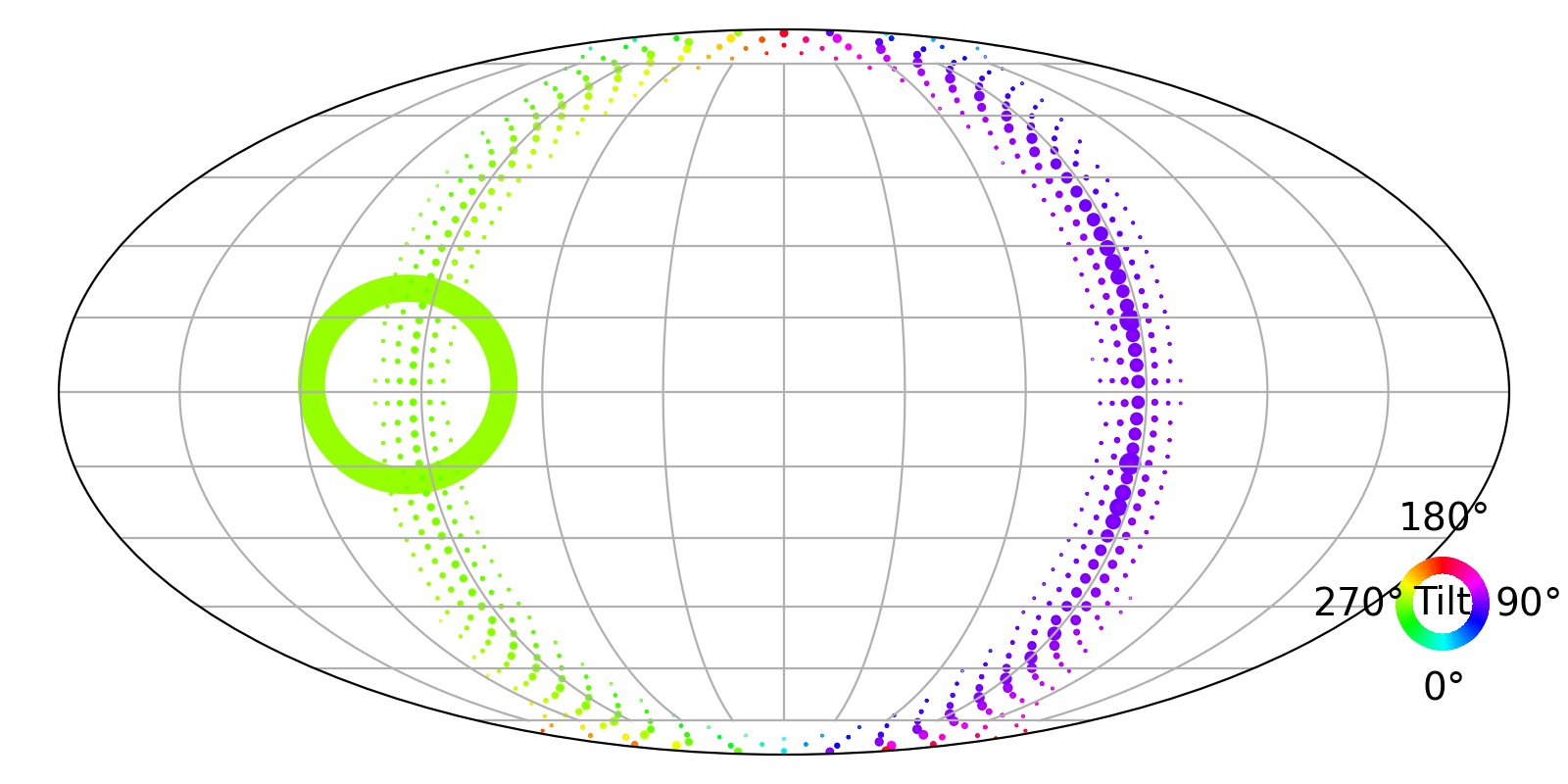}
    \hspace{4mm}
    &\includegraphics[height=1.6cm]{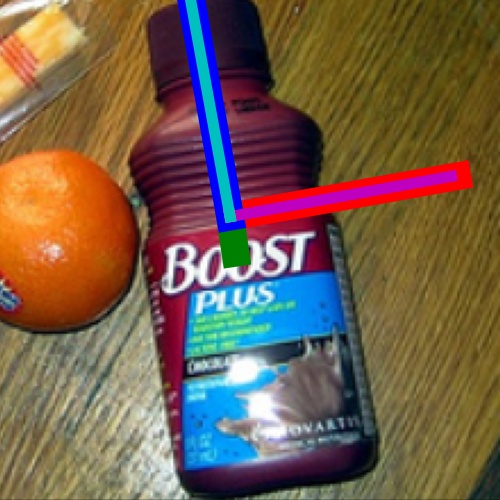}
    &\includegraphics[clip,trim=4.5cm 4cm 4.5cm 1.5cm, height=1.6cm]{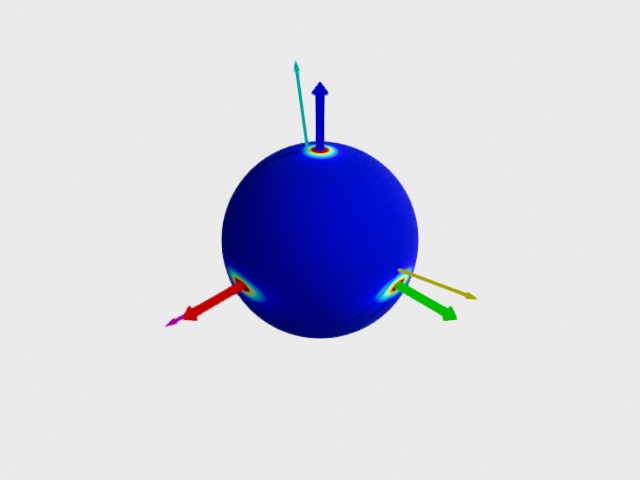}
    &\includegraphics[height=1.2cm]{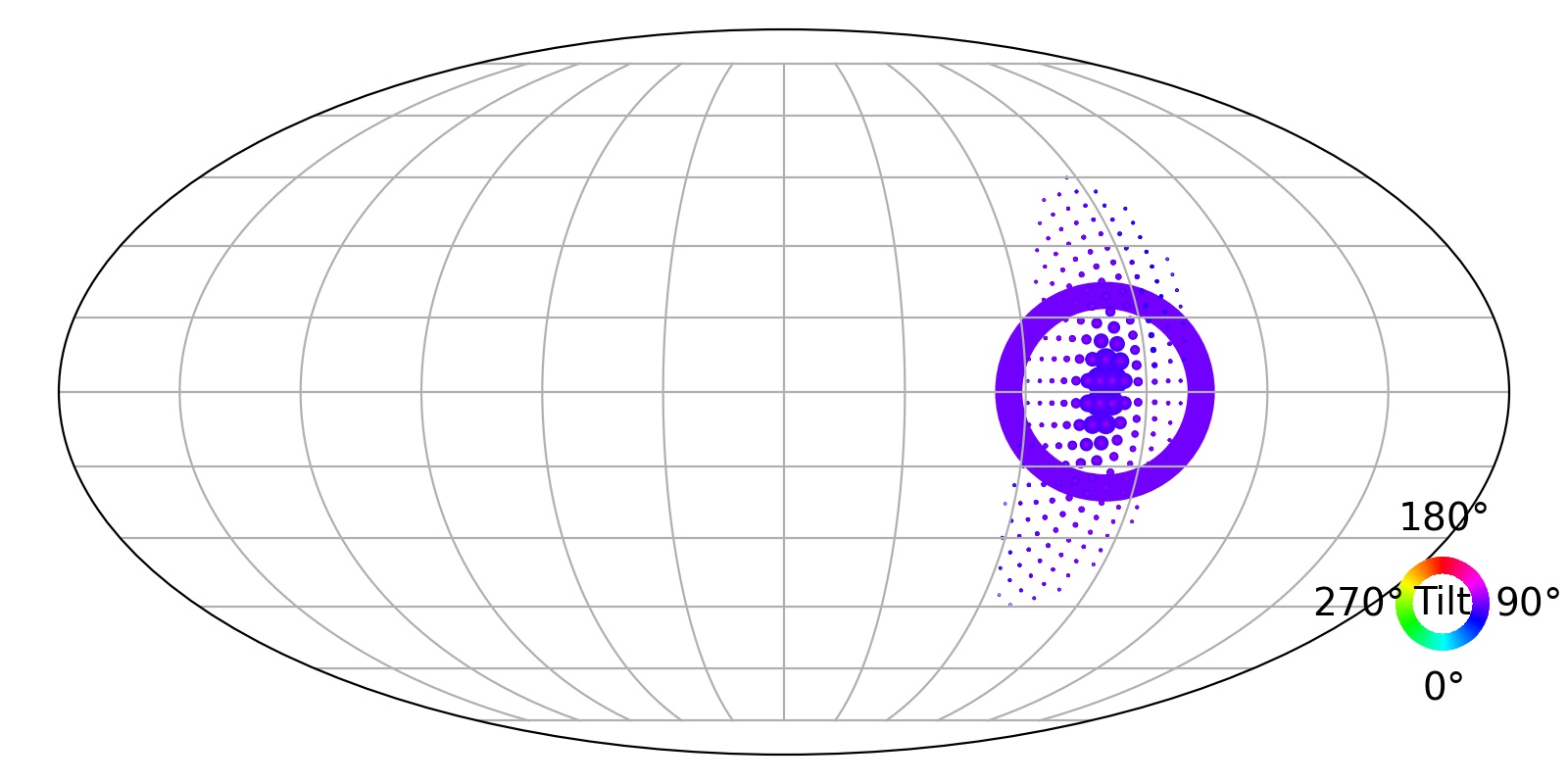}\\
    
    \includegraphics[height=1.6cm]{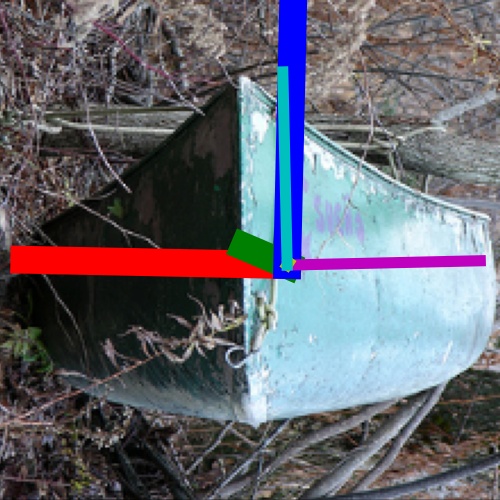}
    &\includegraphics[clip,trim=4.5cm 4cm 4.5cm 1.5cm, height=1.6cm]{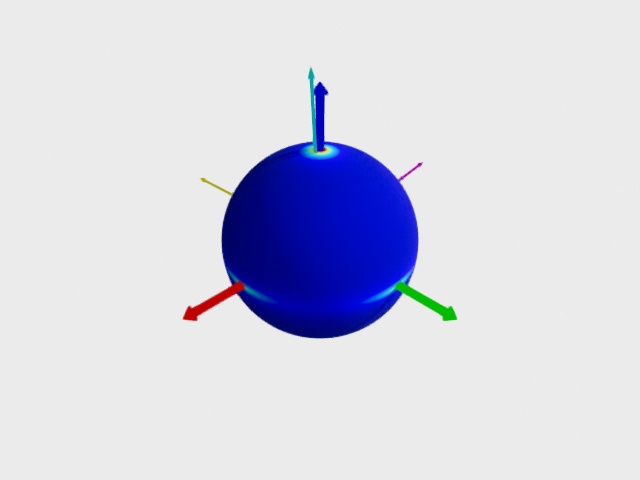}
    &\includegraphics[height=1.2cm]{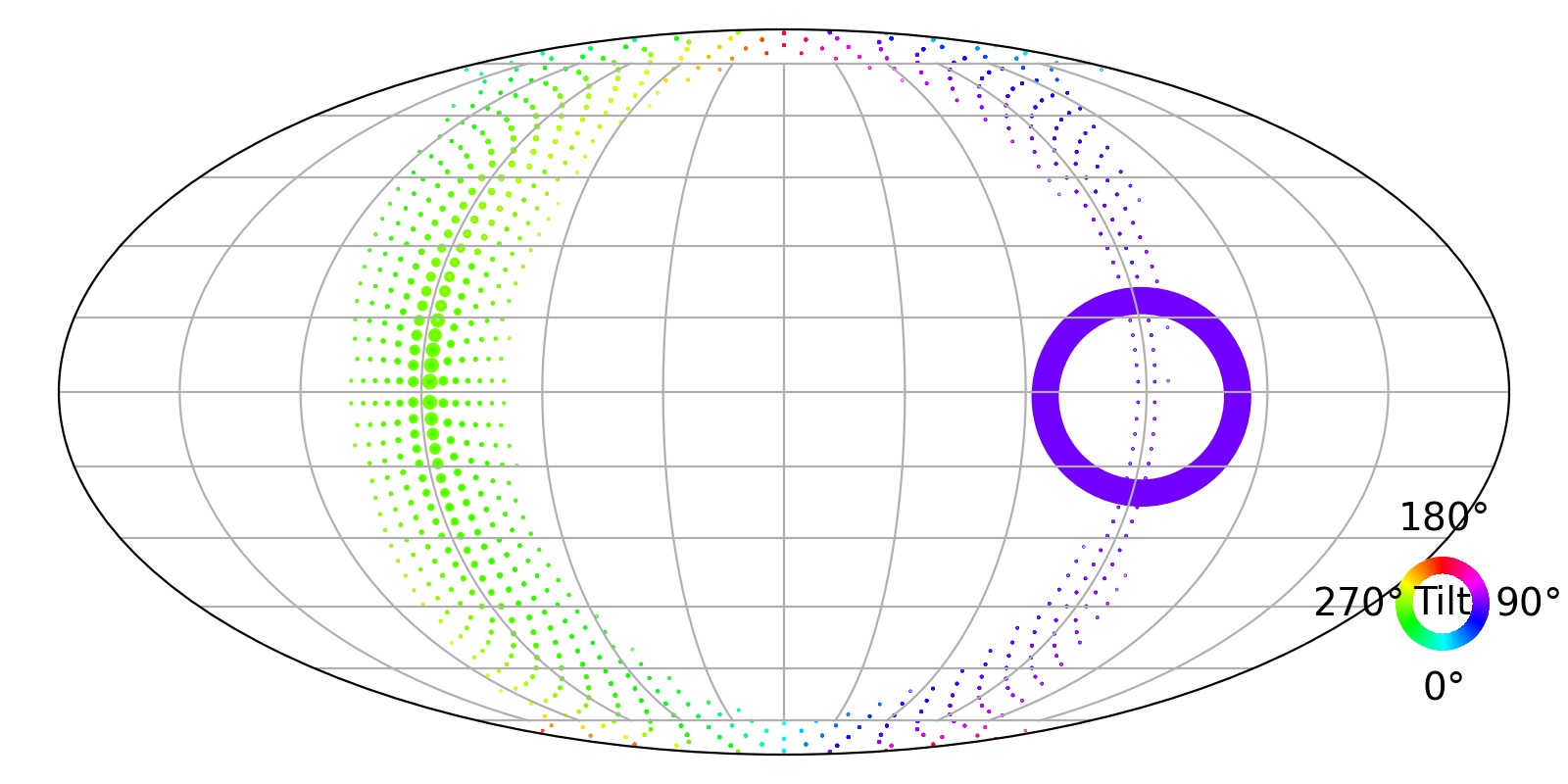}
    \hspace{4mm}
    &\includegraphics[height=1.6cm]{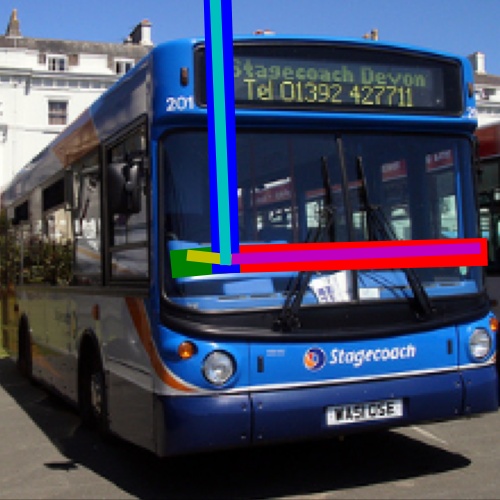}
    &\includegraphics[clip,trim=4.5cm 4cm 4.5cm 1.5cm, height=1.6cm]{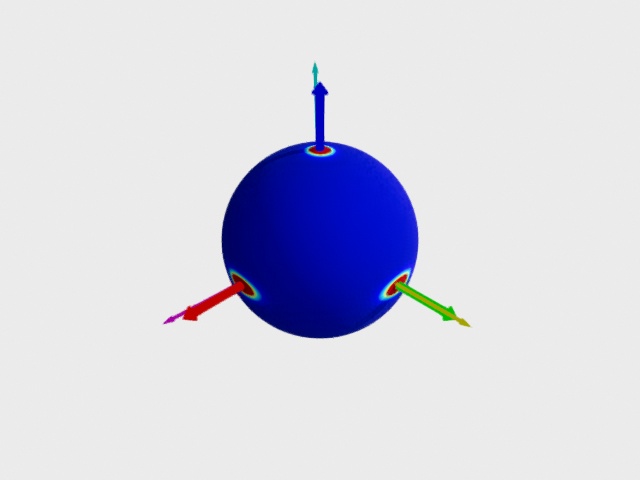}
    &\includegraphics[height=1.2cm]{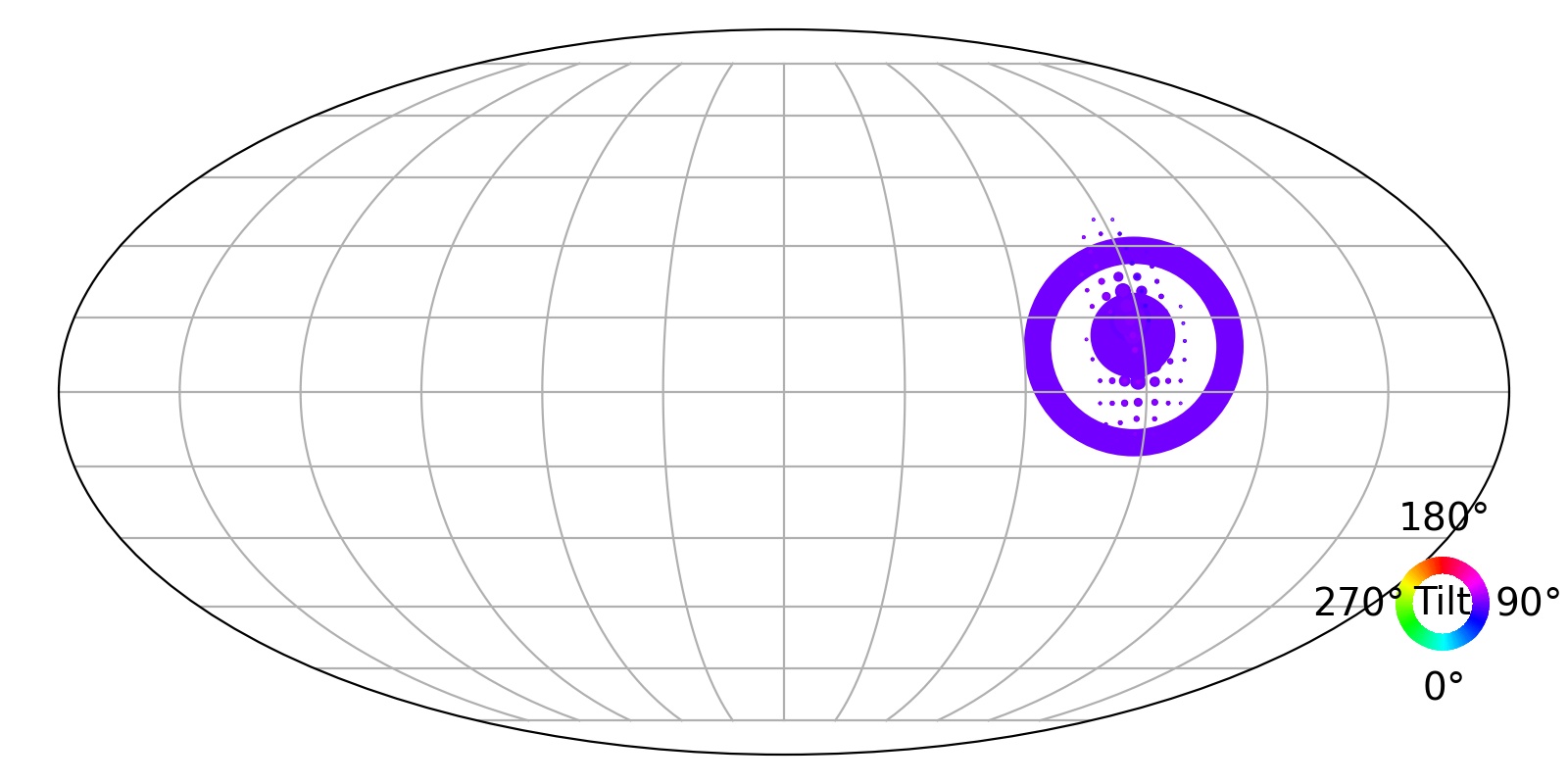}\\
    
    \small{Input image} & {\tiny\makecell{Distribution visual.\\\citep{mohlin2020probabilistic}}} & {\tiny\makecell{Distribution visual.\\\citep{murphy2021implicit}}}  \hspace{4mm} &
    \small{Input image} & {\tiny\makecell{Distribution visual.\\\citep{mohlin2020probabilistic}}} & {\tiny\makecell{Distribution visual.\\\citep{murphy2021implicit}}}
    \end{tabular}
    \vspace{-3mm}
    \caption{\small \ree{\textbf{Visual results on Pascal3D+ dataset.} We adopt the distribution visualization methods in \cite{mohlin2020probabilistic} and \cite{murphy2021implicit}. For input images and visualizations with \cite{mohlin2020probabilistic}, predicted rotations are shown with thick lines and the ground truths are with thin lines. For visualizations with \cite{murphy2021implicit}, ground truths are shown by solid circles.}}
    \vspace{-2mm}
	\label{fig:vis_pascal}
\end{figure}

}

\section{Derivations}
\label{sec:supp_proof}

\begin{prop1}
    Let $\boldsymbol{\Phi} = \log \mathbf{\widetilde{R}} \in \mathfrak{so}(3)$ and $\boldsymbol{\phi} = {\boldsymbol{\Phi}^\vee} \in \mathbb{R}^3$. For rotation matrix $\mathbf{R} \in \SO$ following \emph{matrix Fisher distribution}, when 
    \ree{$\|\mathbf{R} - \mathbf{R}_0 \| \rightarrow 0$}
    , $\boldsymbol{\phi}$ follows zero-mean \emph{multivariate Gaussian distribution}.
\end{prop1}
\begin{proof}
For $\mathbf{R}\sim \mathcal{MF}(\mathbf{A})$, we have
\begin{equation}
\footnotesize
\begin{aligned}
    p(\mathbf{R})\mathrm{d}\mathbf{R} &\propto \exp\left(\tr{\mathbf{A}^T\mathbf{R}}\right)\mathrm{d}\mathbf{R}   
    = \exp\left(\tr{\mathbf{S}\mathbf{V}^T\mathbf{\widetilde{R}}\mathbf{V}}\right)\mathrm{d}\mathbf{\widetilde{R}}
\end{aligned}
\end{equation}
Considering Eq. \ref{eq:vtrv} in the main paper, we have
\begin{equation}
\label{eq:tr}
\footnotesize
\begin{aligned}
    \tr{\mathbf{S}\mathbf{V}^T\mathbf{\widetilde{R}}\mathbf{V}} 
    &= \tr{\mathbf{S}} + \sum_{(i,j,k)\in I}-\frac{1}{2}(s_j+s_k)\mu_i^2+O({\len {\boldsymbol{\phi}}}^3) \\
    &=\tr{\mathbf{S}} -\frac{1}{2}\boldsymbol{\phi}^T\mathbf{V}
    \left[\begin{smallmatrix}
        s_2 + s_3 &  &  \\
        & s_1 + s_3 &  \\
        &  & s_1 + s_2
        \end{smallmatrix}\right]
    \mathbf{V}^T\boldsymbol{\phi}
\end{aligned}
\end{equation}
Thus
\begin{equation}
\footnotesize
\label{eq:gauss}
\begin{aligned}
    p(\mathbf{R})\mathrm{d}\mathbf{R} &\propto \exp\left(\tr{\mathbf{A}^T\mathbf{R}}\right)\mathrm{d}\mathbf{R}\\  
    &= \frac{\exp(\tr{\mathbf{S}})}{8\pi^2}\exp\left(-\frac{1}{2}\boldsymbol{\phi}^T\boldsymbol{\Sigma} ^{-1}\boldsymbol{\phi}\right) \left(1 + O({\len {\boldsymbol{\phi}}}^2)\right)\mathrm{d}\boldsymbol{\phi}
\end{aligned}
\end{equation}
When 
\ree{$\|\mathbf{R} - \mathbf{R}_0\| \rightarrow 0$}
, we have 
\ree{$\|\mathbf{\widetilde{R}} - \mathbf{I}\|  \rightarrow 0$ }
and $\boldsymbol{\phi} \rightarrow \mathbf{0}$, so Eq. \ref{eq:gauss} follows the multivariate Gaussian distribution 
with the covariance matrix as $\boldsymbol{\Sigma}$, where $\boldsymbol{\Sigma} = \mathbf{V}\operatorname{diag}(\frac{1}{s_2+s_3},\frac{1}{s_1+s_3},\frac{1}{s_1+s_2})\mathbf{V}^T$.
\end{proof}
\begin{prop3}
    Denote $\mathbf{q}_0$ as the mode of Quaternion Laplace distribution. Let $\pi$ be the tangent space of $\mathbb{S}^3$  at $\mathbf{q}_0$, and $\pi(\mathbf{x}) \in \mathbb{R}^4$ be the projection of $\mathbf{x} \in \mathbb{R}^4$ on $\pi$.
    For quaternion $\mathbf{q} \in \mathbb{S}^3$ following \emph{Bingham distribution} / \emph{Quaternion Laplace distribution}, when $\mathbf{q}\rightarrow\mathbf{q}_0$, $\pi(\mathbf{q})$ follows zero-mean \emph{multivariate Gaussian distribution} / zero-mean \emph{multivariate Laplace distribution}.
\end{prop3}
\begin{proof}
Denote $\mathbf{q_I}=(1,0,0,0)^T$ as the identity quaternion. 
Define $\mathbf{M}$ as an orthogonal matrix such that $\mathbf{M}^T\mathbf{q}_0=\mathbf{q_I}$.
Given $\pi(\mathbf{q}) = \mathbf{q}-(\mathbf{q}\cdot \mathbf{q}_0)\mathbf{q}_0$,
we have
\begin{equation}
\footnotesize
\begin{aligned}
    \mathbf{M}^T\pi(\mathbf{q}) &= \mathbf{M}^T\mathbf{q} - ((\mathbf{M}^T\mathbf{q})\cdot(\mathbf{M}^T\mathbf{q}_0))\mathbf{q_I} 
    = \mathbf{M}^T\mathbf{q} - w\mathbf{q_I},
\end{aligned}
\end{equation}
where $\mathbf{M}^T\mathbf{q}=(w,x,y,z)^T$.
Let $(\mathbf{e}_0,\mathbf{e}_1,\mathbf{e}_2,\mathbf{e}_3)$ be the column vectors of $\mathbf{I}_{4\times4}$,
we have 
\begin{equation}
\footnotesize
    (\mathbf{M}\mathbf{e}_i)\cdot \mathbf{q}_0 = \mathbf{e}_i\cdot \mathbf{q_I} = 0
\end{equation}
for $i=1,2,3$.
Therefore, $\mathbf{M}\mathbf{e}_i (i=1,2,3)$ form an orthogonal basis of $\pi$.

Given $\mathbf{M}^T\mathbf{q}=w\mathbf{e}_0+x\mathbf{e}_1+y\mathbf{e}_2+z\mathbf{e}_3,$ we have
\begin{equation}
\footnotesize
    \mathbf{q} = w(\mathbf{M}\mathbf{e}_0)+x(\mathbf{M}\mathbf{e}_1)+y(\mathbf{M}\mathbf{e}_2)+z(\mathbf{M}\mathbf{e}_3)
\end{equation}
Therefore, $\boldsymbol{\eta} = (x,y,z)$ is the coordinate of $\pi(\mathbf{q})$ in $\pi$ under the basis of $\mathbf{M}\mathbf{e}_i$.

The Jacobian of the transformation $\mathbf{q}\rightarrow\boldsymbol{\eta}$ is given by
\begin{equation}
\footnotesize
\begin{aligned}
    \mathbf{J} &= \frac{\partial \mathbf{q}}{\partial \boldsymbol{\eta}} 
    = \mathbf{M}\frac{\partial \left(\mathbf{M}^T\mathbf{q}\right)}{\partial \boldsymbol{\eta}} \\
    &= \mathbf{M}\left[\begin{array}{cccc}
        -{x}/{w} & 1 & 0 & 0 \\
        -{y}/{w} & 0 & 1 & 0 \\
        -{z}/{w} & 0 & 0 & 0
        \end{array}\right]
\end{aligned}
\end{equation}
Therefore, the scaling factor from $\boldsymbol{\eta}$ to $\mathbf{q}$ is given by 
\begin{equation}
\footnotesize
    \frac{\mathrm{d}\mathbf{q}}{\mathrm{d}\boldsymbol{\eta}} = \operatorname{det}(\mathbf{J}\mathbf{J}^T)
    = 1+\frac{x^2+y^2+z^2}{w^2}+O(\len \eta^4)
    = 1+O(\len \eta^2).
\end{equation}
Thus
\begin{equation}
\scriptsize
\begin{aligned}
    \mathbf{q}^T\mathbf{MZM}^T \mathbf{q} 
    &= \left[\begin{array}{cccc}
        w & x & y & z
        \end{array}\right]
        \left[\begin{array}{cccc}
         0 &  &  &  \\
         & z_1 &  &  \\
         &  & z_2 &  \\
         &  &  & z_3 \\
        \end{array}\right]
        \left[\begin{array}{c}
        w \\  x \\ y \\ z
        \end{array}\right]\\
    &= \left[\begin{array}{ccc}
         x & y & z
        \end{array}\right]
        \left[\begin{array}{ccc}
          z_1 &  &  \\
           & z_2 &  \\
           &  & z_3 \\
        \end{array}\right]
        \left[\begin{array}{c}
        x \\ y \\ z
        \end{array}\right]\\
        &= \boldsymbol{\eta} \mathbf{\widetilde{Z}}\boldsymbol{\eta}
\end{aligned}
\end{equation}
where we define $\mathbf{\widetilde{Z}} = \operatorname{diag}(z_1, z_2, z_3)$.

For Bingham distribution, we have
\begin{equation}
\footnotesize
\begin{aligned}
    p(\mathbf{q})\mathrm{d}\mathbf{q} &\propto
    \exp\left(\mathbf{q}^T\mathbf{M}\mathbf{Z}\mathbf{M}^T\mathbf{q}\right)\mathrm{d}\mathbf{q} \\
    &= \exp\left(\boldsymbol{\eta}^T\mathbf{\widetilde{Z}}\boldsymbol{\eta}\right)(1+O(\len {\boldsymbol \eta}^2))\mathrm{d}\boldsymbol{\eta}\\
    &= \exp\left(-\boldsymbol{\eta}^T\boldsymbol{\Sigma}^{-1}\boldsymbol{\eta}\right)(1+O(\len {\boldsymbol \eta}^2))\mathrm{d}\boldsymbol{\eta}
\end{aligned}
\end{equation}
which follows the multivariate Gaussian distribution with the covariance matrix as $\boldsymbol{\Sigma}$, where $\boldsymbol{\Sigma}=-\operatorname{diag}(\frac{1}{z_1},\frac{1}{z_2},\frac{1}{z_3})$

For Quaternion Laplace distribution, we have
\begin{equation}
\footnotesize
\begin{aligned}
    p(\mathbf{q})\mathrm{d}\mathbf{q} &\propto \frac{\exp\left(-\sqrt{-{\mathbf{q}^T\mathbf{M}\mathbf{Z}\mathbf{M}^T\mathbf{q}}}\right)}{\sqrt{-{\mathbf{q}^T\mathbf{M}\mathbf{Z}\mathbf{M}^T\mathbf{q}}}}\mathrm{d}\mathbf{q} \\
    &= \frac{1}{\sqrt{2}}\frac{\exp\left(-\sqrt{-\boldsymbol{\eta}^T\mathbf{\widetilde{Z}}\boldsymbol{\eta}}\right)}{\sqrt{-\boldsymbol{\eta}^T\mathbf{\widetilde{Z}}\boldsymbol{\eta}}}(1+O(\len {\boldsymbol{\eta}}^2))\mathrm{d}\boldsymbol{\eta}\\
    &= \frac{1}{\sqrt{2}}\frac{\exp\left(-\sqrt{2\boldsymbol{\eta}^T\boldsymbol{\Sigma}^{-1}\boldsymbol{\eta}}\right)}{\sqrt{2\boldsymbol{\eta}^T\boldsymbol{\Sigma}^{-1}\boldsymbol{\eta}}}(1+O(\len {\boldsymbol{\eta}}^2))\mathrm{d}\boldsymbol{\eta}
\end{aligned}
\end{equation}
which follows the multivariate Laplace distribution 
with the covariance matrix as $\boldsymbol{\Sigma}$, where $\boldsymbol{\Sigma} = -2\operatorname{diag}(\frac{1}{z_1},\frac{1}{z_2},\frac{1}{z_3})$.
\end{proof}

\begin{prop4}
    Denote $\gamma$ as the standard transformation from unit quaternions to corresponding rotation matrices. For rotation matrix $\mathbf{R}\in \SO$ following \emph{Rotation Laplace distribution}, $\mathbf{q}=\gamma^{-1}(\mathbf{R})\in \mathbb{S}^3$ follows \emph{Quaternion Laplace distribution}.
\end{prop4}
\begin{proof}
For a quaternion $\mathbf{{q}}=[{q}_0,{q}_1,{q}_2,{q}_3]$, we use the standard transform function $\gamma$ to compute its corresponding rotation matrix:
\begin{equation}
\footnotesize
    \gamma(\mathbf{{q}}) = 
    \left[\begin{array}{ccc}
        1-2 {q}_2^2-2 {q}_3^2 & 2 {q}_1 {q}_2-2 {q}_0 {q}_3 & 2 {q}_1 {q}_3+2 {q}_0 {q}_2 \\
        2 {q}_1 {q}_2+2 {q}_0 {q}_3 & 1-2 {q}_1^2-2 {q}_3^2 & 2 {q}_2 {q}_3-2 {q}_0 {q}_1 \\
        2 {q}_1 {q}_3-2 {q}_0 {q}_2 & 2 {q}_2 {q}_3+2 {q}_0 {q}_1 & 1-2 {q}_1^2-2 {q}_2^2
        \end{array}\right]
\end{equation}
Let $\mathbf{u}=\gamma^{-1}(\mathbf{U})$,$\mathbf{v}=\gamma^{-1}(\mathbf{V})$ and
\begin{equation}
\footnotesize
    \mathbf{\widetilde{q}} = {[\widetilde{q}_0,\widetilde{q}_1,\widetilde{q}_2,\widetilde{q}_3]}^T=\gamma^{-1}\left(\mathbf{U}^T\mathbf{R}\mathbf{V}\right)
    =\overline{\mathbf{u}} \mathbf{q}\mathbf{v}
\end{equation}
Note that the transformation $\mathbf{q}\rightarrow\overline{\mathbf{u}} \mathbf{q}\mathbf{v}$ is an orthogonal transformation on $\mathbb{S}^3$.
    Therefore, there exists an orthogonal Matrix $\mathbf{M}$, such that
\begin{equation}
\footnotesize
    \mathbf{M}^T\mathbf{q} = \overline{\mathbf{u}} \mathbf{q}\mathbf{v} = \mathbf{\widetilde{q}}
\end{equation}
The scaling factor from quaternions to rotation matrices is given by 
\begin{equation}
\footnotesize
    \mathrm{d}\mathbf{R} = \frac{1}{2\pi^2}\mathrm{d}\mathbf{q}
\end{equation}
Suppose $\mathbf{R}$ follows Quaternion Laplace distribution as
\begin{equation}
\footnotesize
    p(\mathbf{R})\mathrm{d}\mathbf{R} = \frac{1}{F}\frac{\exp\left(-\sqrt{\tr{\mathbf{S}-\mathbf{A}^T\mathbf{R}}}\right)}{\sqrt{\tr{\mathbf{S}-\mathbf{A}^T\mathbf{R}}}}\mathrm{d}\mathbf{R}
\end{equation}
Given
\begin{equation}
\footnotesize
\begin{aligned}
    \tr{\mathbf{S}-\mathbf{A}^T\mathbf{R}} &= \tr{\mathbf{S}-\mathbf{S}\mathbf{U}^T\mathbf{RV}} 
    = \sum_{(i,j,k)\in I}2(s_j+s_k)q_i^2 \\
    &= 2\mathbf{\widetilde{q}}^T
    \left[\begin{smallmatrix}
        0 & & & \\
        & s_2 + s_3 &  &  \\
        & & s_1 + s_3 &  \\
        & &  & s_1 + s_2
        \end{smallmatrix}\right]
    \mathbf{\widetilde{q}}
\end{aligned}
\end{equation}
we have
\begin{equation}
\scriptsize
\begin{aligned}
    p(\mathbf{R})\mathrm{d}\mathbf{R} &= \frac{1}{2\pi^2F}\frac{\exp\left(-\sqrt{2\mathbf{\widetilde{q}}^T
    \left[\begin{smallmatrix}
        0 & & & \\
        & s_2 + s_3 &  &  \\
        & & s_1 + s_3 &  \\
        & &  & s_1 + s_2
        \end{smallmatrix}\right]
    \mathbf{\widetilde{q}}}\right)}{\sqrt{2\mathbf{\widetilde{q}}^T
    \left[\begin{smallmatrix}
        0 & & & \\
        & s_2 + s_3 &  &  \\
        & & s_1 + s_3 &  \\
        & &  & s_1 + s_2
        \end{smallmatrix}\right]
    \mathbf{\widetilde{q}}}}\mathrm{d}\mathbf{q} \\
    &= \frac{1}{2\pi^2F}\frac{\exp\left(-\sqrt{2\mathbf{q}^T\mathbf{M}
    \left[\begin{smallmatrix}
        0 & & & \\
        & s_2 + s_3 &  &  \\
        & & s_1 + s_3 &  \\
        & &  & s_1 + s_2
        \end{smallmatrix}\right]
    \mathbf{M}^T\mathbf{q}}\right)}{\sqrt{2\mathbf{q}^T\mathbf{M}
    \left[\begin{smallmatrix}
        0 & & & \\
        & s_2 + s_3 &  &  \\
        & & s_1 + s_3 &  \\
        & &  & s_1 + s_2
        \end{smallmatrix}\right]
    \mathbf{M}^T\mathbf{q}}}\mathrm{d}\mathbf{q} \\
    &= \frac{1}{2\pi^2F}\frac{\exp\left(-\sqrt{-\mathbf{q}^T\mathbf{M}\mathbf{Z}\mathbf{M}^T\mathbf{q}}\right)}{\sqrt{-\mathbf{q}^T\mathbf{M}\mathbf{Z}\mathbf{M}^T\mathbf{q}}}\mathrm{d}\mathbf{q},
\end{aligned}
\end{equation}
where $\mathbf{M}$ is an orthogonal matrix and $\mathbf{Z}=-2\operatorname{diag}(0,s_2+s_3,s_1+s_3,s_1+s_2)$ is a $4\times4$ diagonal matrix.
\end{proof}

\ree{
\textbf{Elaboration of Eq. \ref{eq:prdr} in the main paper}

Given $\mathbf{R}_0 = \mathbf{UV}^T$ and $\mathbf{\widetilde{R}}=\mathbf{R}_0^T\mathbf{R}$,
\begin{equation}
\scriptsize
\begin{aligned}
 p(\mathbf{R})\mathrm{d}\mathbf{R} &\propto 
 \frac{\exp\left(\sqrt{\operatorname{tr}(\mathbf{S}-{\mathbf{A}^T\mathbf{R}})}\right)}{\sqrt{\operatorname{tr}(\mathbf{S}-{\mathbf{A}^T\mathbf{R}})}} \mathrm{d}\mathbf{R} 
 = \frac{\exp\left(\sqrt{\operatorname{tr}(\mathbf{S}-\mathbf{V}\mathbf{S}\mathbf{U}^T\mathbf{R})}\right)}{\sqrt{\operatorname{tr}(\mathbf{S}-\mathbf{V}\mathbf{S}\mathbf{U}^T\mathbf{R})}} \mathrm{d}\mathbf{R} 
 = \frac{\exp\left(\sqrt{\operatorname{tr}(\mathbf{S}-\mathbf{S}\mathbf{U}^T\mathbf{R}\mathbf{V})}\right)}{\sqrt{\operatorname{tr}(\mathbf{S}-\mathbf{S}\mathbf{U}^T\mathbf{R}\mathbf{V})}} \mathrm{d}\mathbf{R} \\
 &= \frac{\exp\left(\sqrt{\operatorname{tr}(\mathbf{S}-\mathbf{S}\mathbf{U}^T\mathbf{R}_0\mathbf{\widetilde{R}}\mathbf{V})}\right)}{\sqrt{\operatorname{tr}(\mathbf{S}-\mathbf{S}\mathbf{U}^T\mathbf{R}_0\mathbf{\widetilde{R}}\mathbf{V})}} \mathrm{d}\mathbf{R} 
 = \frac{\exp\left(\sqrt{\operatorname{tr}(\mathbf{S}-\mathbf{S}\mathbf{V}^T\mathbf{\widetilde{R}}\mathbf{V})}\right)}{\sqrt{\operatorname{tr}(\mathbf{S}-\mathbf{S}\mathbf{V}^T\mathbf{\widetilde{R}}\mathbf{V})}} \mathrm{d}\mathbf{R}
\end{aligned}
\end{equation}
}

\section{More Implementation Details}
For fair comparisons, we follow the implementation designs of \cite{mohlin2020probabilistic} and merely change the distribution from matrix Fisher distribution to our Rotation Laplace distribution.
We use pretrained ResNet-101 as our backbone, and encode the object class information (for single-model-all-category experiments) by an embedding layer that produces a 32-dim vector. We apply a 512-512-9 MLP as the output layer. 

The batch size is set as 32.
We use the SGD optimizer and start with the learning rate of 0.01.
For ModelNet10-SO3 dataset, we train 50 epochs with learning rate decaying by a factor of 10 at epochs 30, 40, and 45. For Pascal3D+ dataset, we train 120 epochs with the same learning rate decay at epochs 30, 60 and 90.

\end{document}